\documentclass{article}
% if you need to pass options to natbib, use, e.g.:
%     \PassOptionsToPackage{numbers, compress}{natbib}
% before loading neurips_2023

% ready for submission
%\usepackage[final]{{neurips_2023}}

% to compile a preprint version, e.g., for submission to arXiv, add add the
% [preprint] option:
%     \usepackage[preprint]{neurips_2023}

% to compile a camera-ready version, add the [final] option, e.g.:
%     \usepackage[final]{neurips_2023}

% to avoid loading the natbib package, add option nonatbib:
\usepackage[final, nonatbib]{neurips_2023}

\usepackage[utf8]{inputenc} % allow utf-8 input
\usepackage[T1]{fontenc}    % use 8-bit T1 fonts
\usepackage{hyperref}       % hyperlinks
\usepackage{url}            % simple URL typesetting
\usepackage{booktabs}       % professional-quality tables
\usepackage{amsfonts}       % blackboard math symbols
\usepackage{nicefrac}       % compact symbols for 1/2, etc.
\usepackage{microtype}      % microtypography
\usepackage{xcolor}         % colors

\usepackage{amsmath, amsthm, amssymb}
\usepackage{bm}

\usepackage{tikz}
\usepackage{tikz-cd}

\usepackage{pgfplots}
\pgfplotsset{compat=1.14}
\usepgfplotslibrary{fillbetween}
\usetikzlibrary{patterns}
\usetikzlibrary{shapes}
\usetikzlibrary{shapes.misc}
\usetikzlibrary{arrows,arrows.meta,bending}
\usetikzlibrary{decorations.markings}

\tolerance=1000

\DeclareMathOperator*{\argmax}{arg\,max}
\DeclareMathOperator*{\argmin}{arg\,min}

\newtheorem{theorem}{Theorem}
\newtheorem{lemma}{Lemma}

% The \author macro works with any number of authors. There are two commands
% used to separate the names and addresses of multiple authors: \And and \AND.
%
% Using \And between authors leaves it to LaTeX to determine where to break the
% lines. Using \AND forces a line break at that point. So, if LaTeX puts 3 of 4
% authors names on the first line, and the last on the second line, try using
% \AND instead of \And before the third author name.

\author{%
  Dmitry Yarotsky \\
  Skoltech\\
  \texttt{d.yarotsky@skoltech.ru} 
}

\title{Structure of universal formulas}

\begin{document}

\maketitle

\begin{abstract}%
  By universal formulas we understand parameterized analytic expressions that have a fixed complexity, but nevertheless can approximate any continuous function on a compact set. There exist various examples of such formulas, including some in the form of neural networks. In this paper we analyze the essential structural elements of these highly expressive models. We introduce a hierarchy of expressiveness classes connecting the global approximability property to the weaker property of infinite VC dimension, and prove a series of classification results for several increasingly complex functional families. In particular, we introduce a general family of polynomially-exponentially-algebraic functions that, as we prove, is subject to polynomial constraints. As a consequence, we show that fixed-size neural networks with not more than one layer of neurons having transcendental activations (e.g., sine or standard sigmoid) cannot in general approximate functions on arbitrary finite sets. On the other hand, we give examples of functional families, including two-hidden-layer neural networks, that approximate functions on arbitrary finite sets, but fail to do that  on the whole domain of definition.
\end{abstract}

% \begin{keywords}%
%   VC-dimension, neural networks, activation functions, approximation, polynomials, algebraic functions%
% \end{keywords}

\section{Introduction}
By \emph{universal formulas} we broadly (informally) understand parameterized explicit analytic expressions that have a fixed complexity (as expressions) but nevertheless can approximate any continuous function on a compact set. An example of such formula, given by Boshernitzan \cite{boshernitzan1986universal}, is  
\begin{equation}\label{eq:bosh1}
    y(x) = \int_{0}^{x+a}\frac{bd}{1+d^2-\cos(bt)}\cos(e^t)dt+c,
\end{equation}
where $d>0,a,b$ and $c$ are parameters. 
\cite{boshernitzan1986universal} proves that, by varying the parameters, functions \eqref{eq:bosh1}
can uniformly approximate any continuous function on any compact interval in $\mathbb R$.

Expression \eqref{eq:bosh1} involves integration. Laczkovich and Ruzsa \cite{laczkovich2000elementary} prove existence of universal formulas representable as elementary functions without integration (moreover, their formulas can uniformly approximate continuous functions on the whole $\mathbb R$ under a mild growth assumption).

One can further restrict the types of operations and show that universal formulas can be realized by fixed-size classical neural networks with suitable activation functions  \cite{yarotsky2021elementary}. By a classical neural network we mean a computational model consisting of units (``hidden neurons''), each  computing a map $z_1,\ldots, z_n\mapsto\sigma(\sum_{k=1}^n w_kz_k+h),$ where $z_1,\ldots, z_n$ (signals) are outputs of some previous neurons, and $w_k$ and $h$ are weights (parameters) of the neuron. In addition to hidden neurons, one distinguishes ``input neurons'' that just represent the scalar components of the network input, and ``output neuron(s)'' that represent the scalar component(s) of the output. The output neurons work as hidden neurons but without activations, i.e. as $z_1,\ldots, z_n\mapsto\sum_{k=1}^n w_kz_k+h.$  
As defined, neural networks are fairly restrictive in terms of used operations: for example, they contain multiplications of signals $z$ by constants (weights) $w$, but not multiplications  between signals, $z_1z_2$. However, this and some other extra operations can be implemented with any accuracy using suitable combinations of neurons. The universal network in \cite{yarotsky2021elementary} uses a fixed architecture graph, activations $\sin$ and $\arcsin$, and can approximate any $f\in C([0,1])$ (see Figure \ref{fig:nn}). Another fixed-size universal network had been constructed earlier in \cite{maiorov1999lower}, but the activation was not an elementary function in that work.   

\begin{figure}

\tikzset{%
  do path picture/.style={%
    path picture={%
      \pgfpointdiff{\pgfpointanchor{path picture bounding box}{south west}}%
        {\pgfpointanchor{path picture bounding box}{north east}}%
      \pgfgetlastxy\x\y%
      \tikzset{x=\x/2,y=\y/2}%
      #1
    }
  },
  sin wave/.style={do path picture={    
    \draw [line cap=round] (-3/4,0)
      sin (-3/8,1/2) cos (0,0) sin (3/8,-1/2) cos (3/4,0);
  }},
  cross/.style={do path picture={    
    \draw [line cap=round] (-1,-1) -- (1,1) (-1,1) -- (1,-1);
  }},
  plus/.style={do path picture={    
    \draw [line cap=round] (-3/4,0) -- (3/4,0) (0,-3/4) -- (0,3/4);
  }}
}

\scalebox{0.95}{\begin{tikzpicture}[
scale=1, 
]
\def\r{2.5mm};

\def\xshift{0};
\def\yshift{0};
\def\dx{0.6}

\def\i{0}

\newcommand{\idNode}[4]{\node[draw, blue, circle, inner sep=0, minimum size=2.5mm, label={#4}](#1) at (#2,#3) {};}
\newcommand{\sinNode}[4]{\node[draw, fill=magenta, circle, inner sep=0, minimum size=2.5mm, label={#4}](#1) at (#2,#3) {};}
\newcommand{\asinNode}[4]{\node[draw, fill=cyan, rectangle, inner sep=0, minimum size=2.5mm, label={#4}](#1) at (#2,#3) {};}
\newcommand{\multNode}[4]{\node[draw, fill=yellow, cross, circle, inner sep=0, minimum size=2.5mm, label={#4}](#1) at (#2,#3) {};}

\idNode{U1}{-1*\dx+\xshift}{5*\i+1.5+\yshift}{left:$x$} 
\idNode{VV}{6*\dx+\xshift}{5*\i+1.5+\yshift}{right:$f(x)$} 

\newcounter{jj}
\setcounter{jj}{3}
\foreach \j in {0,...,3}{
    \sinNode{H1}{0+\xshift}{0.2+\j+\yshift}{}
    \draw[very thin,-] (U1) edge (H1);
    \asinNode{H2}{1*\dx+\xshift}{0.2+\j+\yshift}{}
    \draw[very thin,-] (H1) edge (H2);
    \sinNode{H3}{2*\dx+\xshift}{0.4+\j+\yshift}{}
    \draw[very thin,-] (H3) edge (H2);    
    \asinNode{H4}{3*\dx+\xshift}{0.4+\j+\yshift}{}
    \draw[very thin,-] (H4) edge (H3);  
    \idNode{H5}{4*\dx+\xshift}{0.4+\j+\yshift}{}
    \draw[very thin,-] (H5) edge (H4);
    \draw[very thin,-] (H5) edge (H2);
    \sinNode{H6}{2*\dx+\xshift}{\j+\yshift}{}
    \node[rectangle, inner sep=0, minimum size=2.5mm, opacity=0](otherH2) at (1*\dx+\xshift,0.2+\value{jj}+\yshift) {};
    \setcounter{jj}{\j}
    \draw[very thin,-] (H6) edge (U1);
    \draw[very thin,-] (H6) edge (otherH2);
    \sinNode{H7}{3*\dx+\xshift}{\j+\yshift}{}
    \draw[very thin,-] (H7) edge (H6);
    \multNode{H9}{5*\dx+\xshift}{\j+0.2\yshift}{}
    \draw[very thin,-] (H9) edge (H7);
    \draw[very thin,-] (H9) edge (H5);
    \draw[very thin,-] (H9) edge (VV);
}

\def\legendx{5.5}
\def\legendy{1.0}
\idNode{}{\legendx}{\legendy+2}{right:$\sigma(x)=x$} 
\sinNode{}{\legendx}{\legendy+1.5}{right:$\sigma=\sin$} 
\asinNode{}{\legendx}{\legendy+1}{right:$\sigma=\arcsin$}
\multNode{}{\legendx}{\legendy+0.5}{right:$z_1,z_2\mapsto z_1z_2$}
\draw[] (\legendx-0.3,\legendy+0.2) rectangle ++(2.8, 2.2);

\def\xshift{9.5};

\idNode{I1}{-1*\dx+\xshift}{1.2+\yshift}{}
\idNode{I2}{-1*\dx+\xshift}{1.8+\yshift}{}
\idNode{O1}{1*\dx+\xshift}{1.5+\yshift}{}
\multNode{M1}{0*\dx+\xshift}{1.5+\yshift}{}
\draw[very thin,-] (I1) edge (M1);
\draw[very thin,-] (I2) edge (M1);
\draw[very thin,-] (O1) edge (M1);

\def\xshift{12.5};

\idNode{I1}{-1*\dx+\xshift}{1.2+\yshift}{}
\idNode{I2}{-1*\dx+\xshift}{1.8+\yshift}{}
\idNode{O1}{1*\dx+\xshift}{1.5+\yshift}{}
\foreach \i in {0,1}{
\foreach \j in {0,1,2}{
\sinNode{M1}{0*\dx+\xshift}{1.2*\i+0.37*\j+0.6+\yshift}{}
\draw[very thin,-] (I1) edge (M1);
\draw[very thin,-] (I2) edge (M1);
\draw[very thin,-] (O1) edge (M1);
}
}

\node[] at (11,1.5) {$\Longrightarrow$};

\end{tikzpicture}
}
\caption{\textbf{Left:} A (slightly simplified) fixed-size universal neural network approximating any $f\in C([0,1])$ from \cite{yarotsky2021elementary}.  Most hidden neurons are standard neurons $z_1,\ldots, z_n\mapsto\sigma(\sum_{k=1}^n w_kz_k+h)$ with one of the activation functions $\sigma$ (identity, $\sin$ or $\arcsin$). Also, there are several multiplication neurons. The parameters of the model are all the weights $w_k,h$ in the standard neurons. \textbf{Right:} A multiplication neuron can be replaced by a group of standard neurons while preserving universality.}\label{fig:nn}
\end{figure}

The Kolmogorov(-Arnold) Superposition Theorem \cite{kolmogorov1957representation} shows that any function $f\in C([0,1]^d)$ can be expressed in terms of a fixed number (only depending on $d$) of summations and univariate continuous functions. This implies that multivariate universal formulas can be easily constructed from univariate ones, by plugging the latter in the Kolmogorov ansatz. Accordingly, the question of multivariate universal formulas is not significantly more interesting or complicated than its counterpart for univariate formulas, and in this paper restrict our attention to the latter. We also mention in passing another interesting property of universal formulas: they naturally give rise to universal polynomial differential equations $P(x,f,f',\ldots,f^{(n)})=0$, with a polynomial $P$ and  a dense set of solutions in $C([a,b])$, see \cite{rubel1981universal,  buck1980solutions, boshernitzan1986universal, pouly2020universal, laczkovich2000elementary}. (In fact, one can find such a polynomial for any parameterized family of elementary functions; the density of solutions then follows from the universality of the formula.) See \cite{rubel1993extended} for a connection to analog computers.

A foundational theorem on neural networks is the universal approximation theorem (UAT). It states that, for a broad class of activation functions, a neural network with a single layer of hidden neurons can approximate any continuous function on a compact set if we increase the number of neurons \cite{cybenko1989approximation, pinkus1999approximation}. This result is very different from, and should not be confused with the above \emph{fixed-size} universal networks of \cite{maiorov1999lower} or \cite{yarotsky2021elementary}. In UAT, universality is achieved by increasing the number of parameters, and accordingly UAT imposes much weaker requirements on the activation functions (e.g., it holds for any continuous non-polynomial activation, see \cite{leshno1993multilayer, pinkus1999approximation}).

On the other hand, the fixed-size universal network in Figure \ref{fig:nn} is fairly simple. One of the two activation functions used there, $\sin$, is actually used as an activation in practical multi-layer networks (e.g., the popular SIREN network of \cite{sitzmann2020implicit}). This raises the following natural question motivating the present paper: \emph{can practically used neural-network-type models be universal as fixed-size formulas (i.e., without increasing the number of neurons)}?

There is an important caveat here: the universality of universal formulas is, of course, an idealization that requires their parameters to have unbounded precision or magnitude. If parameters are finitely defined so as to be representable on real computer, one can give a simple entropy bound constraining feasible approximation accuracy. Specifically, suppose that a formula contains $p$ parameters, each somehow represented using $B$ bits (e.g., $B=32$ or 64 for standard floats).  Suppose we are approximating a function $f$ at $N$ points $x_1,\ldots,x_N$ with absolute accuracy $\epsilon$. Suppose finally that $|f|$ is bounded by $M$ and $f(x_k),k=1,\ldots,N,$ can be chosen independently subject to this constraint. Then
\begin{equation}
    N\log_2(M/\epsilon)\le pB.
\end{equation}
This shows that in a practical application with parameters implemented as standard floats, universal formulas such as \eqref{eq:bosh1} can typically produce only crude approximations.  In the sequel, we will ignore these finite-precision aspects.

\paragraph{Our contribution.} Our goal in this paper is to understand which structural properties are critical for universal formulas. This question does not seem to be simple. As a general strategy of tackling it, we  analyze a sequence of progressively more complex parametric functional families with respect to several relevant classes of model expressiveness. Our specific contributions are as follows.
\begin{enumerate}
    \item We introduce a hierarchy $\mathcal G_0\supset\mathcal G_1\supset\mathcal G_2$ of classes of model expressiveness (Section~\ref{sec:classes}). Here $\mathcal G_0$ represents the families of infinite VC-dimension, $\mathcal G_1$ the families achieving approximation on arbitrary finite sets, and $\mathcal G_2$ the families achieving uniform approximation on the whole segment $[0,1]$. Universal formulas correspond to the class $\mathcal G_2$. The class $\mathcal G_0$ is important because elementary functions belonging to it must involve, in some form, the function $\sin$ acting on an unbounded domain (Section~\ref{sec:pfaffian}). We argue that the class $\mathcal G_1$ is also important because, as we show, some functional families cannot belong to it due to algebraic (polynomial) constraints.   
    \item As a first result on algebraic constraints, we consider the family of fixed-size linear combinations of sine waves with unbounded weights (equivalently, fixed-size single-hidden-layer neural networks with activation $\sin$) and prove that it belongs to the class $\mathcal G_0\setminus \mathcal G_1$.  Moreover, we give a complete description of the limit points of this family (Section \ref{sec:lincmombsines}). 
    \item As a generalization of the previous result, we introduce a general family of \emph{polynomially-exponentially-algebraic} expressions of bounded complexity, and prove that it is also subject to polynomial constraints and lies outside $\mathcal G_1$ (Section \ref{sec:peae}). 
    \item We give an application of the previous result to branching expressions and neural networks with conventional activations (Section \ref{sec:nn}). Specifically, we prove that if the network has a bounded complexity and possibly multiple layers, but only one layer with transcendental activation functions (such as $\sin x$, Gaussian or standard sigmoid), then the network is subject to polynomial constraints and lies outside $\mathcal G_1$. Moreover, polynomial constraints hold even if the activation functions are only \emph{piecewise} analytic, as is common in practice.
    \item The situation is drastically different for formulas involving compositions of non-polynomial analytic functions and $\sin$: such formulas generally belong to $\mathcal G_1$ (Section \ref{sec:nonpolysines}). It seems hard to separate the classes $\mathcal G_1\setminus\mathcal G_2$ and $\mathcal G_2$. We give a couple of examples of families for which we can prove that they belong to $\mathcal G_1\setminus\mathcal G_2$. One of these families can be described as that of fixed-complexity two-hidden-layer networks with $\sin$ activation that have a single neuron in the second hidden layer. We conjecture that general sine networks of fixed complexity also belong to $\mathcal G_1\setminus\mathcal G_2$.
\end{enumerate}

Most proofs are deferred to the appendix (the respective sections are indicated in the theorems).

\section{The hierarchy of expressiveness classes}\label{sec:classes}
Throughout the paper, we consider various families $H$ of functions $g:[0,1]\to\mathbb R$. 
We define for them expressiveness classes $\mathcal G_0, \mathcal G_1, \mathcal G_2$:
\begin{description}
\item[$H\in \mathcal G_0\stackrel{\text{def}}{\Longleftrightarrow}$]  Vapnik-Chervonenkis dimension $\operatorname{VCdim}(H)=\infty$;
\item[$H\in \mathcal G_1\stackrel{\text{def}}{\Longleftrightarrow}$] For any \emph{finite} subset $\{x_1,\ldots,x_N\}\in[0,1]$, any $y_1,\ldots,y_N\in\mathbb R$ and any $\epsilon>0$ there is $g\in H$ such that $|g(x_k)-y_k|<\epsilon$ for all $k=1,\ldots,N.$ 
\item[$H\in \mathcal G_2\stackrel{\text{def}}{\Longleftrightarrow}$] For any  $f\in C([0,1])$ and $\epsilon>0$, there is $g\in H$ such that $\|g-f\|_\infty=\sup_{x\in[0,1]}|g(x)-f(x)|<\epsilon$.
\end{description}
Here, $\operatorname{VCdim}(H_f)=\infty$ means that for any $N\in\mathbb N$ we can find a size-$N$ subset $X\subset [0,1]$ shattered by $H$ thresholded at 0, i.e. such that for any $S\subset X$ there is $g\in H$ for which $g(x)>0$ if $x\in S$ and $g(x)\le 0$ if $x\in X\setminus S$. 

Clearly,
\begin{equation}\label{eq:hierarchy}
    \mathcal G_0\varsupsetneq\mathcal G_1\varsupsetneq\mathcal G_2.
\end{equation}
The classes $\mathcal G_1$ and $\mathcal G_2$ reflect approximability on arbitrary finite sets and on the whole interval $[0,1],$ respectively. Universal formulas as discussed in the Introduction correspond to the class $\mathcal G_2$. One can say that $\mathcal G_1$ corresponds to universality in a weaker sense, which is particularly relevant if we try to learn the parameters using a finite training set $\{(x_k,y_k)\}$.

We will consider several families $H$ that we will distinguish by superscripts denoting or enumerating their types (e.g. $H^f$ or $H^{(1)}, H^{(2)},\ldots$) and by subscripts reflecting complexity within the type (e.g.,  $H^{(2)}_N$ will denote linear combinations of $N$ sine waves). Most of these families admit natural interpretations in terms of conventional neural networks -- we will mention them along the way. 

While the remainder of this work focuses on the mathematical analysis of the relation between various functional families and the classes $\mathcal G_k$, let us briefly clarify the role of these classes from the general  perspective of approximation and learning. 
\begin{enumerate}
    \item Most conventional learning models (e.g., neural networks) can be said to be defined using (piecewise) elementary functions. If the model is assumed to have a bounded complexity (e.g., a bounded number of neurons) and its weights and signals are bounded, this model falls outside even the broadest class $\mathcal G_0$ (see Section \ref{sec:pfaffian} below). This does not apply, however, to models containing the function $\sin$ acting on an unbounded domain. 
    \item For a fixed-complexity model, being outside the classes $\mathcal G_k$ and thus not having the full approximation power is not a practically negative property: this just means that the complexity of the model needs to increase to achieve a better fit.
    \item The situation when a fixed-complexity model belongs to the class $\mathcal G_2$ (i.e., is a universal formula) is mathematically interesting but somewhat exotic from the perspective of conventional computation, since the respective model parameters need to contain a potentially unbounded amount of information. 
    \item The class $\mathcal G_1\setminus\mathcal G_2$ seems especially dangerous from the generalization point of view: given a hypothesis space $H$ from this class, we can fit a model arbitrarily well to a generic continuous ground truth on any finite training set, but can hardly ever uniformly approximate this ground truth on the whole domain. This cannot be said about the class $\mathcal G_2$. In Section \ref{sec:nonpolysines} we give examples of functional families (neural networks) provably belonging to this dangerous class $\mathcal G_1\setminus\mathcal G_2$.
\end{enumerate}

\section{Pfaffian functions}\label{sec:pfaffian}

There is an important broad class of parameterized functions $f(x,\mathbf w)$ for which one can guarantee that the family $H^f$ obtained by fixing various $\mathbf w$ has a finite VC-dimension. This class is most naturally described in terms of \emph{Pfaffian} functions introduced by \cite{khovanskii}. Pfaffian functions can be defined using function sequences  in which, at each step, the next function satisfies polynomial first-order differential equations involving itself and previous functions (see \cite{khovanskii, gabrielov2004complexity, zell1999betti} for details). In particular, Pfaffian functions include all elementary functions when the latter are defined on suitable domains.  

Importantly, in contrast to functions like $\ln, \exp$ and polynomials, which are Pfaffian on their whole natural domain of definition, the function $\sin$ is not Pfaffian on the whole set $\mathbb R$, though it is Pfaffian on any bounded interval $(a,b)$ (moreover, the representation of $\sin$ as a Pfaffian function depends on $(a,b)$ and becomes more complex as $b-a$ increases). For this reason, elementary functions are Pfaffian only when they involve $\sin$ (or a closely related function, like $\cos$) restricted to bounded intervals.

The fundamental theorem of \cite{khovanskii} gives an explicit finite upper bound on the number of zeros of Pfaffian functions, which implies, in particular, that the respective parametric families have finite VC-dimension. We state this result specialized to elementary functions and in the form suitable for our purposes; see \cite{khovanskii, gabrielov2004complexity, zell1999betti, yarotsky2021elementary} for details.  

\begin{theorem}\label{th:pfaff}
    Suppose that a formula $y=f(x,\mathbf w)$ is constructed from the variables $x$ and $w_1,\ldots w_d$ using real numbers, standard arithmetic operations ($+,-,\times,/$), elementary functions $\ln$, $\exp$, $\sin$, $\arcsin$, and compositions. Suppose that there is a nonempty subset $W\subset\mathbb R^d$ such that for any $\mathbf w =(w_1,\ldots,w_d)\in W$ the function $f(\cdot,\mathbf w):[0,1]\to \mathbb R)$ is well-defined, i.e., $\ln$ and $\arcsin$ are applied on the intervals $(0,\infty)$ and $(-1,1)$, respectively, and there is no division by 0. Moreover, suppose that there is a bounded interval $(a,b)$ to which the arguments of $\sin$ always belong for all $\mathbf w\in W$. Then the family $H^f=\{f(\cdot,\mathbf w)\}_{\mathbf w\in W}$ has a finite VC-dimension.
\end{theorem}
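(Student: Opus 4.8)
The plan is to reduce $f(\cdot,\mathbf w)$ to a \emph{Pfaffian function of one variable} whose complexity parameters (chain length, chain degree, and the degree of $f$ as a polynomial in the chain functions) are bounded in terms of the syntactic structure of the formula alone, \emph{uniformly in} $\mathbf w\in W$, and then to invoke Khovanskii's zero-counting theorem together with the standard passage from a uniform bound on the number of zeros to finite VC dimension.

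\emph{Step 1: uniform Pfaffian reduction.} Fix $\mathbf w\in W$. Since $[0,1]$ is compact and $f(\cdot,\mathbf w)$ is well-defined there — no division by zero, arguments of $\ln$ staying in $(0,\infty)$ and of $\arcsin$ in $(-1,1)$ — each of these constraints holds with a uniform margin on $[0,1]$, so $f(\cdot,\mathbf w)$ and all its subexpressions extend analytically to some open interval $I\supset[0,1]$ on which the same constraints still hold. I would then build a Pfaffian chain on $I$ by structural induction on the formula: polynomials and the operations $+,-,\times$ cost nothing; a division $1/v$ adds the function $1/v$, satisfying $(1/v)'=-v'/v^2$ wherever $v\neq 0$; $\exp(u)$ adds $\exp(u)$; $\ln(u)$ adds $\ln(u)$ and $1/u$; $\arcsin(u)$ adds $\arcsin(u)$ together with $(1-u^2)^{-1/2}$ (which satisfies a polynomial first-order equation in $u$ and itself on $|u|<1$). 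The one delicate case is $\sin(u)$, and here the hypothesis that every argument of $\sin$ lies in a fixed bounded interval $(a,b)$ is essential: it is a standard fact (used in \cite{khovanskii,gabrielov2004complexity,zell1999betti}) that $\sin$ and $\cos$ admit, on any fixed bounded interval, a Pfaffian representation — a finite chain $h_1,\dots,h_m$ on a slightly larger interval $(a',b')\supset[a,b]$, of length and degree depending only on $(a,b)$, with $\sin t,\cos t$ polynomials in $t,h_1(t),\dots,h_m(t)$. Composing with $u$ adds $h_1\circ u,\dots,h_m\circ u$ to the chain, since $(h_j\circ u)'=u'\cdot h_j'(u)$ is again a polynomial in $u$ and the $h_i\circ u$. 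At every node the new chain functions are real-analytic on $I$ and the defining polynomials depend only on the formula; only the real constants $w_i$, appearing as leaves, vary. Hence $f(\cdot,\mathbf w)$ is Pfaffian on $I$ with chain length $r$, chain degree $\alpha$, and polynomial degree $\beta$ bounded in terms of the formula only.

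\emph{Step 2: uniform zero bound and finite VC dimension.} By Khovanskii's theorem, a nonzero Pfaffian function of one variable with parameters $(r,\alpha,\beta)$ has at most $N(r,\alpha,\beta)$ zeros on the interval where its chain is defined; applied on $I\supset[0,1]$ this yields a bound $C=C(\text{formula})$ on the number of zeros of $f(\cdot,\mathbf w)$ in $[0,1]$, valid for all $\mathbf w\in W$ (if $f(\cdot,\mathbf w)\equiv 0$ it takes no positive value and is irrelevant to shattering). A function on $[0,1]$ with at most $C$ zeros changes sign at most $C$ times, so it cannot realize the alternating label pattern $+,-,+,\dots$ on more than $C+1$ ordered points; consequently no subset of $[0,1]$ of size $>C+1$ is shattered by $H^f$ thresholded at $0$, i.e. $\operatorname{VCdim}(H^f)\le C+1<\infty$.

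\emph{Main obstacle.} The substance is Step 1, and it is where all three hypotheses are used: no division by $0$ and $\ln,\arcsin$ on their open domains guarantee that a single chain of analytic functions is valid on a common open interval around $[0,1]$, while boundedness of the $\sin$-arguments is exactly what allows a \emph{formula-bounded} (and $\mathbf w$-independent) Pfaffian representation of $\sin$ — the very property that fails for a generic formula applying $\sin$ on an unbounded domain (cf. Section~\ref{sec:classes}, item 1). Step 2 is then a direct citation of Khovanskii's finiteness theorem plus the elementary sign-change/VC lemma.
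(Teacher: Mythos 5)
Your proof is correct and is exactly the standard argument that the paper itself invokes but does not reprove (the paper cites \cite{khovanskii, gabrielov2004complexity, zell1999betti, yarotsky2021elementary} and summarizes the logic as ``Khovanskii's explicit zero bound for Pfaffian functions implies finite VC-dimension''): build a Pfaffian chain on a neighborhood of $[0,1]$ by structural induction, with degrees and chain length bounded by the syntax of the formula alone and with the bounded-$\sin$-argument hypothesis used precisely to give $\sin$ a formula-bounded Pfaffian representation, then apply Khovanskii's zero bound and the sign-change/VC lemma. One minor inaccuracy worth noting: since the paper's shattering definition uses the asymmetric thresholds $g>0$ versus $g\le 0$, the zeros forced by an alternating label pattern on consecutive points may coincide at a sample point, so the correct conclusion is $\operatorname{VCdim}(H^f)\le 2C+1$ rather than $C+1$; and the Pfaffian polynomials' \emph{coefficients} (though not their degrees) do depend on $\mathbf w$ -- neither of which, of course, affects finiteness.
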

This theorem shows that an elementary function has chances to be universal only if it includes $\sin$ (or some related function) acting on an unbounded domain. Therefore, we will focus on these latter functions in the sequel.  

It is crucial for Theorem \ref{th:pfaff} that all the computations in the formula are real-valued and not complex-valued (the proof eventually boils down to Rolle's theorem, which does not hold for $\mathbb C$-valued functions). In particular, we cannot bypass the domain restriction of $\sin$ by simply writing $\sin x=(e^{ix}-e^{-ix})/2i$. In contrast,  many results in the next sections will be essentially algebraic and will hold for complex-valued operations. 

We remark that Sontag \cite{sontag1997shattering} considers a class of formulas closely related to Pfaffian functions and proves that shattering $k$-point sets in general position requires at least $(k-1)/2$ parameters in such formulas.

\section{Linear combinations of sines}\label{sec:lincmombsines}
A simple example of a family with an infinite VC-dimension is $H^{(1)}=\{c\sin(\omega x)\}_{c,\omega\in\mathbb R}$. This is usually proved (see e.g. \cite{vapnik1999nature}, Section 3.6) by giving a particular example of an arbitrarily large finite set shattered by $H^{(1)}$. For our purposes, however, it is instructive to describe a general collection of finite sets on which $H^{(1)}$ has universal approximability. We say that numbers $x_1,x_2,\ldots$ are \emph{rationally independent} if they are linearly independent over the field $\mathbb Q$ of rational numbers.   
\begin{theorem}\label{th:kronecker} Let $X=\{x_1,\ldots,x_N\}$ be a finite subset of $[0,1]$ such that the values $x_1,\ldots,x_N$ are rationally independent. Then the family $H^{(1)}$ can approximate any $\mathbb R$-valued function on $X$. 
\end{theorem}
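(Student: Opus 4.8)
The plan is to reduce the statement to the classical Kronecker--Weyl equidistribution theorem. We want to show that for rationally independent $x_1,\dots,x_N\in[0,1]$, arbitrary targets $y_1,\dots,y_N\in\mathbb R$, and any $\epsilon>0$, there are $c,\omega$ with $|c\sin(\omega x_k)-y_k|<\epsilon$ for all $k$. First I would fix a large amplitude $c>2\max_k|y_k|$ and reformulate: we need $\omega$ such that $\sin(\omega x_k)$ is simultaneously close to $y_k/c$ for each $k$. Writing $\theta_k(\omega)=\omega x_k \bmod 2\pi$, the point $(\theta_1(\omega),\dots,\theta_N(\omega))$ traces a line in the torus $(\mathbb R/2\pi\mathbb Z)^N$ with direction vector $(x_1,\dots,x_N)$. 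Since $x_1,\dots,x_N$ are rationally independent, Kronecker's theorem says this line is dense in the whole torus (equivalently, Weyl's theorem gives equidistribution of $\{\omega x_k\}$ as $\omega\to\infty$, or along any unbounded sequence of $\omega$).

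Next I would choose target angles: pick $\phi_k\in(-\pi/2,\pi/2)$ with $\sin\phi_k = y_k/c$ (possible because $|y_k/c|<1/2<1$). By density of the line $\{(\omega x_1,\dots,\omega x_N)\}$ in the torus, there exists $\omega\in\mathbb R$ with $\omega x_k \equiv \phi_k \pmod{2\pi}$ to within any prescribed tolerance $\delta$ simultaneously for all $k$. Then by continuity of $\sin$, $|c\sin(\omega x_k)-c\sin\phi_k| = |c|\,|\sin(\omega x_k)-\sin\phi_k|\le |c|\cdot\delta$, which is less than $\epsilon$ once $\delta<\epsilon/|c|$. Since $c\sin\phi_k=y_k$ exactly, this gives $|c\sin(\omega x_k)-y_k|<\epsilon$ for all $k$, as desired. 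A minor point to handle: $x_1,\dots,x_N$ rationally independent forces all $x_k\neq 0$ (and distinct), so none of the $\theta_k$ is degenerate; rational independence is exactly the hypothesis Kronecker's theorem needs for the direction vector.

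The main obstacle is really just citing and applying the correct form of the Kronecker--Weyl theorem: one needs the density of $\{(\omega x_1,\dots,\omega x_N) \bmod 2\pi : \omega\in\mathbb R\}$ in the $N$-torus, which holds precisely when $x_1,\dots,x_N$ are linearly independent over $\mathbb Q$. (If one prefers working with integer $\omega$, the analogous statement needs $1,x_1,\dots,x_N$ rationally independent; since the paper only requires real $\omega$, the cleaner continuous-flow version suffices, and rational independence of $x_1,\dots,x_N$ alone is enough.) Everything else is an elementary continuity/compactness argument. As a remark, this simultaneously reproves $\operatorname{VCdim}(H^{(1)})=\infty$: choosing any $N$ rationally independent points in $[0,1]$ and arbitrary sign patterns as targets shows $H^{(1)}$ shatters such sets.
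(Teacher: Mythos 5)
Your proposal is correct and follows essentially the same route as the paper: both reduce the claim to Kronecker's theorem on the density of the line $\{(\omega x_1,\ldots,\omega x_N)\bmod 2\pi\}_{\omega\in\mathbb R}$ in the torus under rational independence, and then conclude via the $2\pi$-periodicity and continuity of $\sin$ (the paper phrases this as density of $(\sin(\omega x_1),\ldots,\sin(\omega x_N))$ in $(-1,1)^N$ followed by scaling with $c$, while you fix a large $c$ first and hit the targets $y_k/c$; these are the same argument presented in a slightly different order).
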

\begin{proof}
Consider the torus $\mathbb T^N=(\mathbb R/2\pi\mathbb Z)^N$; its points can be identified with the points of the cube $[0,2\pi)^N$. Let $\lfloor\cdot\rfloor: \mathbb R\to [0,1)$ be the standard floor function. By a classical Kronecker's theorem \cite{kronecker1884naherungsweise, gonek2016kronecker}, the set $\{(2\pi (\theta x_1-\lfloor \theta x_1\rfloor),\ldots, 2\pi(\theta x_N- \lfloor \theta x_N\rfloor))\}_{\theta\in\mathbb R}$ is a dense subset of $\mathbb T^N$ if and only if the numbers $x_1,\ldots, x_N$ are rationally independent. Using the $2\pi$-periodicity of $\sin$, this implies immediately that if $x_1,\ldots,x_N$ are rationally independent, then $\{(\sin(\omega x_1), \ldots, \sin(\omega x_N))\}_{\omega\in\mathbb R}$ is dense in $(-1,1)^N$ and hence $\{(c\sin(\omega x_1), \ldots, c\sin(\omega x_N))\}_{c,\omega\in\mathbb R}$ is dense in $\mathbb R^N$.
\end{proof}
This result shows that the family $H^{(1)}\in\mathcal G_0.$ It is also clear that $H^{(1)}\notin\mathcal G_1$ (e.g., since $g(0)=0$ for any $g\in H^{(1)}$). In fact, the sufficient condition of rational independence in the theorem is close to also being necessary. In particular, a simple kind of sets not shattered by $H^{(1)}$ are all nontrivial arithmetic progressions $x_k=u+vk, v\ne 0,$ of length 5. Indeed, it follows from the identity $\sin(u+(k-1)v)+\sin(u+(k+1)v)=2\sin(u+kv)\cos(v)$ that  the values of functions from $H^{(1)}$ cannot, for example, have the sign configuration $+++-+$ on such progressions.

Consider now the more general family consisting of all linear combinations of a fixed number of sine waves with arbitrary frequencies and phases:
\begin{equation}\label{eq:sinewaves}
    H^{(2)}_{N} = \Big\{\sum_{n=1}^N c_n \sin(\omega_n x+h_n)\Big\}_{c_n, \omega_n,h_n\in\mathbb R,n=1,\ldots,N}.
\end{equation}
One can view this family as representing single-hidden-layer neural networks with the $\sin$ activation function ($\omega_n$ and $h_n$ are the weights and biases in the hidden layer, and $c_n$ are the weights in the output layer).

One can again easily show that $H^{(2)}_{N}\in \mathcal G_0\setminus\mathcal G_1$, by observing that the values of all functions $g\in H^{(2)}_{N}$ are constrained by common nontrivial polynomial equations: 
\begin{theorem}\label{th:det}{}\hfill
\begin{enumerate}
\item  For any $g\in H^{(2)}_{N}$, $x_0\in\mathbb R$ and $\alpha_n,\beta_n\in\mathbb R, n=0,\ldots ,2N,$
    \begin{equation}
        \det A=0,\text{ where } A=\big(g(x_0+\alpha_k+\beta_m)\big)_{k,m=0}^{2N}.
    \end{equation}
\item $\det A$ is a nonzero polynomial in the variables $g(x)$, where $x\in X=\{x_0+\alpha_k+\beta_m|k,m=0,\ldots,2N\},$ if and only if all $\alpha_k$ are different and all $\beta_m$ are different. 
\end{enumerate}
\end{theorem}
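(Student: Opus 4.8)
The key structural fact is that every $g \in H^{(2)}_N$, when shifted by an argument, lies in a fixed finite-dimensional space of functions. Concretely, using the addition formula $\sin(\omega(x_0 + t) + h) = \sin(\omega x_0 + h)\cos(\omega t) + \cos(\omega x_0 + h)\sin(\omega t)$, we see that for fixed $g = \sum_{n=1}^N c_n \sin(\omega_n x + h_n)$ and fixed $x_0$, the function $t \mapsto g(x_0 + t)$ is a linear combination of the $2N$ functions $\cos(\omega_n t), \sin(\omega_n t)$, $n = 1, \ldots, N$. Hence it lies in a linear space $V_g$ of dimension at most $2N$. The plan for part 1 is then immediate: the matrix $A$ has entries $A_{k,m} = g\big((x_0 + \alpha_k) + \beta_m\big)$, so column $m$ is the vector $\big(\phi_{\beta_m}(\alpha_0), \ldots, \phi_{\beta_m}(\alpha_{2N})\big)$ where $\phi_\beta(\cdot) = g(x_0 + \beta + \cdot) \in V_g$; equivalently each row $k$ is obtained by evaluating the function $t \mapsto g(x_0 + \alpha_k + t) \in V_g$ at the points $\beta_0, \ldots, \beta_{2N}$. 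Since these are $2N+1$ functions drawn from a space of dimension $\le 2N$, they are linearly dependent, so the rows of $A$ are linearly dependent and $\det A = 0$.

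For part 2, I would work with the polynomial $P := \det A$ in the indeterminates $\{g(x) : x \in X\}$ — here we treat the values $g(x)$ as free formal variables, since $X$ may contain fewer than $(2N+1)^2$ distinct points (coincidences force certain entries of $A$ to be literally equal, hence the same variable). First the ``only if'' direction: if $\alpha_{k_1} = \alpha_{k_2}$ for $k_1 \ne k_2$, then rows $k_1$ and $k_2$ of $A$ are identical as vectors of variables, so $\det A \equiv 0$; similarly for repeated $\beta_m$'s with columns. This is the easy half. The substance is the converse: assuming all $\alpha_k$ distinct and all $\beta_m$ distinct, show $\det A$ is not the zero polynomial, i.e. exhibit one choice of values of the variables $g(x)$, $x \in X$, making $\det A \ne 0$. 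Because the $g(x)$ are being treated as independent variables, it suffices to produce \emph{some} function $\widetilde g$ on $\mathbb{R}$ (not necessarily in $H^{(2)}_N$) for which the resulting matrix is nonsingular and which is consistent, i.e., takes equal values exactly on points of $X$ that coincide — but any function is consistent with the variable identification, so we just need the matrix $\big(\widetilde g(x_0 + \alpha_k + \beta_m)\big)$ to be nonsingular for some $\widetilde g$.

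The natural candidate is an exponential: take $\widetilde g(t) = e^{t}$ (or more safely a generic exponential $e^{\lambda t}$), for which $\widetilde g(x_0 + \alpha_k + \beta_m) = e^{x_0} e^{\alpha_k} e^{\beta_m}$, making $A$ a rank-one matrix — which is the wrong move, it is singular. So instead I would use a sum of $2N+1$ exponentials $\widetilde g(t) = \sum_{j=0}^{2N} e^{\lambda_j t}$ with generic distinct $\lambda_j$: then $A = \sum_j (e^{\lambda_j x_0}) \mathbf{u}_j \mathbf{v}_j^{\top}$ with $\mathbf{u}_j = (e^{\lambda_j \alpha_k})_k$ and $\mathbf{v}_j = (e^{\lambda_j \beta_m})_m$, i.e. $A = U D V^{\top}$ where $U = (e^{\lambda_j \alpha_k})_{k,j}$, $V = (e^{\lambda_j \beta_m})_{m,j}$ are $(2N+1)\times(2N+1)$ generalized Vandermonde matrices and $D$ is diagonal. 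By the standard generalized Vandermonde / exponential-sum determinant formula, $\det U \ne 0$ precisely because the $\alpha_k$ are distinct and the $\lambda_j$ are distinct (and similarly $\det V \ne 0$ from distinctness of the $\beta_m$), and $\det D \ne 0$; hence $\det A = \det U \cdot \det D \cdot \det V \ne 0$ for generic $\lambda_j$. This certifies that $\det A$ is a nonzero polynomial. The one subtlety to check carefully is the variable identification: we must confirm that when two entries $g(x_0 + \alpha_{k} + \beta_{m})$ and $g(x_0 + \alpha_{k'} + \beta_{m'})$ are the same variable (because $\alpha_k + \beta_m = \alpha_{k'} + \beta_{m'}$, which can happen even with all $\alpha$'s distinct and all $\beta$'s distinct), the substitution $g(x) := \widetilde g(x)$ still assigns them equal values — which it does automatically since $\widetilde g$ is a genuine function of $x$. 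So the main obstacle is simply organizing the exponential-sum construction and invoking the generalized Vandermonde nonvanishing; everything else is bookkeeping.
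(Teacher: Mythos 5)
Your part 1 and the ``only if'' half of part 2 coincide in substance with the paper's (the paper uses the complex exponentials $e^{\pm i\omega_n\beta}$ where you use $\cos(\omega_n t),\sin(\omega_n t)$, but both show the rows live in a $2N$-dimensional space). The ``if'' half of part 2 is where you genuinely diverge, and your route is correct. The paper's argument is purely combinatorial: order $\alpha_0<\dots<\alpha_{2N}$, $\beta_0<\dots<\beta_{2N}$, observe that $x_0+\alpha_{2N}+\beta_{2N}$ is the strict maximum of all sums $x_0+\alpha_k+\beta_m$, so the variable $g(x_0+\alpha_{2N}+\beta_{2N})$ occurs in exactly one entry of $A$, and recurse down the diagonal to show the monomial $\prod_{k}g(x_0+\alpha_k+\beta_k)$ has coefficient exactly $1$. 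You instead exhibit a \emph{value} at which the polynomial is nonzero: substitute $g\leftarrow\widetilde g(t)=\sum_{j=0}^{2N}e^{\lambda_j t}$, factor $A=UDV^\top$ with $U=(e^{\lambda_j\alpha_k})$, $V=(e^{\lambda_j\beta_m})$ generalized Vandermonde and $D$ diagonal, and invoke the classical nonvanishing of such determinants. Both work, and both handle the coinciding-sum subtlety correctly (the paper by exploiting that the \emph{maximal} sum is attained uniquely; you by noting that a genuine function of $x$ automatically respects variable identifications). The trade-offs: your proof is more conceptual and makes the mechanism visible, but it is not purely algebraic — the nonvanishing of $(e^{\lambda_j\alpha_k})$ for distinct real $\alpha_k$ and distinct real $\lambda_j$ is a Rolle/Descartes-type analytic fact, and ``generic'' must mean \emph{real} here (for purely imaginary $\lambda_j$ the matrix can be singular, e.g.\ when $(\omega_0-\omega_1)(\alpha_0-\alpha_1)\in 2\pi\mathbb Z$). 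The paper's monomial extraction needs no analytic input at all. Also, calling it a Vandermonde ``formula'' overstates it slightly — there is no closed-form product for real exponents, only a nonvanishing theorem — but the theorem is what you actually use and it is standard.
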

\begin{proof}
1. The matrix $A$ is degenerate because each of its $2N+1$ rows is a linear combination of $2N$ vectors $(e^{s\omega_n\beta_1},\ldots,e^{s\omega_n\beta_N})$ with $n=1,\ldots,N$ and $s=\pm i.$

2. If some $\alpha_k$ or $\beta_m$ are repeated, then the matrix $A$ contains repeated columns or rows and so the polynomial $\det A$ identically vanishes. Conversely, suppose that all $\alpha_k$ are different and all $\beta_m$ are different. Without loss, we can assume $\alpha_0<\ldots<\alpha_{2N}$ and $\beta_0<\ldots<\beta_{2N}$. We can then expand  $\det A=g(x_0+\alpha_{2N}+\beta_{2N})\det A'+\ldots,$ where $\det A'$ is the top left $2N\times 2N$ minor and the terms $\ldots$ do not contain the variable $g(x_0+\alpha_{2N}+\beta_{2N})$. Repeating the expansion for $A'$, etc., we see that the polynomial $\det A$ contains the monomial $\prod_{k=0}^{2N} g(x_0+\alpha_k+\beta_k)$ with coefficient 1, i.e. is nonzero.
\end{proof}
This result shows that there are (infinitely many) nontrivial polynomial constraints on the values of the functions $g\in H^{(2)}_{N}$. Importantly, these constraints are preserved under pointwise limits, so, in particular, $H^{(2)}_{N}\in \mathcal G_0\setminus\mathcal G_1$. Taking $N=1$ and $\alpha_k=\beta_k=vk, k=0,1,2,$ we obtain a nontrivial constraint for the values of $g$ on any nontrivial length-5 arithmetic progression $x_k=u+vk$, in agreement with an earlier observation. 

In fact, thanks to the semi-linear structure of the family $H^{(2)}_{N}$, we can go further and give an explicit description of its limit points. It turns out that the limiting functions can only differ from the original combinations of sine waves by resonant factors $x^m$: 

\begin{theorem}[\ref{sec:prooflinsinlimit}]\label{th:linsinlimit} Let $N$ be fixed. Then a function $f_*$ is a limit point of $H^{(2)}_{N}$ in the sense of uniform convergence  on the segment $[0,1]$ if and only if 
\begin{equation}\label{eq:reson}
    f_*(x)=\sum_{m=0}^{2M_0-1}c_{0m}x^m+\sum_{k=1}^{K}\sum_{m=0}^{M_k-1} c_{km}x^{m}\sin(\omega_kx+h_{km}), 
\end{equation}
where $M_0\ge 0$, $M_k\ge 1$ for $k=1,\ldots,K$, and $\sum_{k=0}^K M_k=N$.
\end{theorem}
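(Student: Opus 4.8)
The plan is to recast \eqref{eq:reson} as an ODE condition and then prove the two implications separately.

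\textbf{Reformulation and the ``if'' direction.} The functions in \eqref{eq:reson} are exactly the real solutions on $[0,1]$ of the constant‑coefficient equations $P(D)f=0$ (with $D=\tfrac{d}{dx}$) for which $P$ is a monic polynomial $P(z)=z^{2M_0}\prod_{k=1}^K(z^2+\omega_k^2)^{M_k}$ with distinct $\omega_k>0$ and $M_0+\sum_kM_k=N$: the root $0$ of multiplicity $2M_0$ yields the polynomial part, and each factor $(z^2+\omega_k^2)^{M_k}$ yields the solutions $x^m\cos\omega_kx,\ x^m\sin\omega_kx$, $0\le m\le M_k-1$. These $P$ are precisely the limit points, in the coefficients, of the products $\prod_{n=1}^N(z^2+\omega_n^2)$, $\omega_n\ge0$, since the elementary symmetric functions of $\omega_n^2\ge0$ cannot stay bounded while some $\omega_n\to\infty$, so along a subsequence the $\omega_n$ converge (collisions producing repeated factors, $\omega_n\to0$ producing $z^{2M_0}$). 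For ``$\Leftarrow$'' I would take $f_*$ as in \eqref{eq:reson}, let $P_0$ be its polynomial, pick monic $P_\varepsilon(z)=\prod_{n=1}^N(z^2+\omega_{n,\varepsilon}^2)$ with \emph{distinct} $\omega_{n,\varepsilon}\ge0$ and $P_\varepsilon\to P_0$, and let $g_\varepsilon$ solve $P_\varepsilon(D)y=0$ with $g_\varepsilon^{(j)}(\tfrac12)=f_*^{(j)}(\tfrac12)$, $j=0,\ldots,2N-1$. Continuous dependence of linear‑ODE solutions on coefficients and initial data gives $g_\varepsilon\to f_*$ uniformly on $[0,1]$; and $g_\varepsilon$ is a real combination of $\cos(\omega_{n,\varepsilon}x),\sin(\omega_{n,\varepsilon}x)$, hence lies in $H^{(2)}_N$.

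\textbf{``Only if'', bounded‑frequency case.} Let $g_\ell\in H^{(2)}_N$, $g_\ell\to f_*$ uniformly on $[0,1]$; merging coinciding frequencies, $g_\ell$ solves $P_\ell(D)g_\ell=0$ with $P_\ell(z)=\prod_n(z^2+\omega_{n,\ell}^2)$ monic of degree $2N$. Suppose first all $\omega_{n,\ell}$ stay bounded, and pass to a subsequence with $P_\ell\to P_0$ (one of the limit polynomials above). Standard interior estimates for the order‑$2N$ operator $P_\ell(D)$, uniform because its coefficients stay bounded, bound $g_\ell$ in $H^{2N}([\tfrac14,\tfrac34])$ by $\|g_\ell\|_{L^2([0,1])}$; so a further subsequence converges in $C^{2N-1}([\tfrac14,\tfrac34])$, whence $g_\ell^{(j)}(\tfrac12)\to f_*^{(j)}(\tfrac12)$, and continuous dependence (coefficients $P_\ell\to P_0$, data converging) shows $g_\ell$ converges locally uniformly on $\mathbb R$ to the solution of $P_0(D)y=0$ with these data, which therefore equals $f_*$ on $[0,1]$. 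Thus $f_*$ is a real solution of $P_0(D)y=0$, i.e. of the form \eqref{eq:reson}. (The blow‑up of the coefficients $c_{n,\ell}$, which does occur when frequencies collide, is harmless here, since the argument only uses $\|g_\ell\|_{L^2([0,1])}$ and the ODE.) One may also invoke Theorem~\ref{th:det}, whose determinantal identities pass to pointwise limits, to see a priori that $f_*$ is (the restriction of) an exponential polynomial of total multiplicity $\le 2N$, which orients the bookkeeping below.

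\textbf{``Only if'', general case, and the main obstacle.} Pass to a subsequence so that each $\omega_{n,\ell}$ converges in $[0,\infty]$, and let $J$ be the set of indices with $\omega_{n,\ell}\to\infty$. Write $g_\ell=q_\ell+p_\ell$ with $q_\ell$ the escaping‑frequency part and $p_\ell$ the bounded‑frequency part. The key claim is that $q_\ell\to0$ uniformly on $[0,1]$; granting it, $p_\ell\to f_*$ uniformly, the previous case (with $N-|J|$ frequencies) gives $f_*$ of the form \eqref{eq:reson} with $\sum_{k=0}^KM_k=N-|J|\le N$, and one then pads with frequency terms of zero coefficient (raising $K$, or $M_0$) to make the total multiplicity $N$; the $\pm$‑symmetry of the frequencies of each $g_\ell$ together with reality of $f_*$ forces the count at frequency $0$ to be the even number $2M_0$ and the counts at $\pm i\omega_k$ to agree. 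The hard part is $q_\ell\to0$. The difficulty is that the $g_\ell$ have unbounded exponential type, so they form no normal family and uniform convergence on $[0,1]$ yields no control of derivatives or of a complex neighbourhood; the high‑frequency oscillation must be used directly. One route: a Tur\'an--Nazarov‑type inequality bounds any exponential sum with at most $2N$ terms on $[0,T]$ by $C_NT^{2N-1}$ times its sup‑norm on $[0,1]$, uniformly in the frequencies, so the $g_\ell$ have polynomial growth on $\mathbb R$ uniform in $\ell$; combined with a Fourier cutoff (or repeated short‑window averaging, which divides the coefficient at frequency $\omega$ by a growing power of $\omega$) this separates the bounded‑ from the escaping‑frequency content of $g_\ell$, and the equicontinuity of the uniform limit then forces the escaping content to vanish on $[0,1]$. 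An alternative is induction on $|J|$, peeling off an entire escaping cluster at once via the normalized operator $\prod_{n\in J}\bigl(\tfrac{D}{i\omega_{n,\ell}}-1\bigr)$, which tends to the identity coefficientwise, and verifying the reduced sequence still converges uniformly. Either way, disposing of the escaping part is the crux; everything else reduces to the clean bounded‑frequency case.
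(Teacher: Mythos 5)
Your overall architecture matches the paper's first (``clustering-based'') proof: split the frequencies into a bounded part and an escaping part, show the escaping part contributes nothing to the limit, then reduce to the bounded case via continuous dependence of constant-coefficient ODE solutions. Your ``if'' direction via $P_\varepsilon(D)y=0$ with matching data at $x=\tfrac12$ is a clean and correct alternative to the paper's explicit $\tfrac{d^m}{d\omega^m}$ and $\sin^{2M_0-1}$ constructions (though you should take $\omega_{n,\varepsilon}$ distinct and \emph{strictly positive}, not $\ge 0$: with one $\omega_{n,\varepsilon}=0$ the solution $x$ of $P_\varepsilon(D)y=0$ is not of the form $c\sin(0\cdot x+h)$, so $g_\varepsilon$ might leave $H^{(2)}_N$). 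The bounded-frequency ``only if'' case is also handled correctly, and the bookkeeping remarks at the end are fine.

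The genuine gap is exactly where you say it is: the claim $q_\ell\to 0$. Neither of your two ``routes'' is an argument. The Tur\'an--Nazarov route gives uniform polynomial growth on $\mathbb R$, but ``combined with a Fourier cutoff\ldots this separates the content'' does not say what cutoff, why it suppresses the escaping part more than it distorts the bounded part, or why the result still converges uniformly on $[0,1]$. The normalized-operator route $\prod_{n\in J}(\tfrac{D}{i\omega_{n,\ell}}-1)$ requires uniform control of $2|J|$ derivatives of $g_\ell$, which you do not have from $\|g_\ell-f_*\|_\infty\to 0$ alone -- that is precisely the obstruction you identified earlier when you wrote that the family is not normal. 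A further problem with your coarse split $\{$bounded$\}\cup J$: two escaping frequencies can remain at bounded distance from each other, in which case there is no decoupling between them; the paper's proof has to first refine $J$ into \emph{clusters} by relative distance, modulate out the center frequency of each cluster so that the residual lies in a bounded band $H^{(2)}_{M,\Omega}$, and only then exploit separation \emph{between} clusters. The actual mechanism the paper uses to make the escaping part vanish is an $L^2$ orthogonality computation: expand $\|g_\ell\|_2^2$ over clusters, show the cross-terms are $o(\cdot)$ by one integration by parts (which extracts a factor $|\Delta\omega|^{-1}$ from the inter-cluster frequency gap), and control the resulting $\|\cdot\|_\infty$ factors by a uniform derivative bound (Lemma~\ref{lm:derivbound}) and an $L^\infty$--$L^2$ equivalence (Lemma~\ref{lm:normequiv}) valid on the band-limited family. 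Without something of comparable strength replacing your sketch, the proof is incomplete at the step you yourself call the crux.
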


Resonant factors appear by taking suitable combinations of sine waves at close frequencies, e.g. $x\sin(\omega x)=\lim_{\Delta x\to 0}(\Delta \omega)^{-1}(\sin((\omega+\Delta\omega) x)-\sin(\omega x))$. We show explicitly how to construct resonances of arbitrary degrees subject to the constraint $\sum_{k=0}^K M_k=N$. Note that the frequency $\omega=0$ corresponding to the first sum in expansion \eqref{eq:reson} is special as it is associated with a potential double-degree resonance. 

The more difficult part of the proof is the ``only if'' part, i.e. that any limiting function has the form \eqref{eq:reson}. Our basic idea is to show that a limiting function is subject to a linear differential equation describing waves with a given set of frequencies; the general resonant form \eqref{eq:reson} would then appear from the usual solution of such an equation with nontrivial multiplicities of the frequencies. This strategy, however, is not so easy to implement rigorously, since in our setting the frequencies are  unbounded. We give two alternative proofs overcoming this difficulty. The first, \emph{clustering-based} proof, groups the frequencies found in a converging sequence $f_n\in H^{(2)}_{N}$ into spatially separated ``frequency clusters'', and then shows that unbounded clusters can be removed without changing the limit. The second, \emph{discretization-based} proof, starts with describing the discretized functions of $H^{(2)}_{N}$ by suitable linear finite-difference equations which also hold for the limiting function $f_*$. Then, we consider these discretizations at different length scales and show that they can only be consistent with each other and with the continuity of the uniform limit $f_*$ if representation \eqref{eq:reson} holds.

\section{Polynomially-exponentially-algebraic expressions}\label{sec:peae}
Theorem \ref{th:det} shows that the values of functions $H^{(2)}_{N}$ are subject to nontrivial polynomial constraints preserved under pointwise limits. We  extend this argument to a much larger family of functions, involving complex algebraic operations. This generaliztion will allow us, in particular, to prove a related result for multi-layer neural networks. 

If $U\subset\mathbb C^N$ is an open domain, we say that a holomorphic function $Q:U\to\mathbb C$ is an \emph{algebraic function} if there exist $N$-variable polynomials $\phi_0,\ldots,\phi_q$, not all zero, such that 
\begin{equation}\label{eq:algdef}
\sum_{k=0}^q \phi_k(z_1,\ldots,z_N)Q^k(z_1,\dots,z_N)=0,\quad \forall\mathbf z=(z_1,\ldots,z_N)\in U.
\end{equation} The value $q$ is called the \emph{degree} of the algebraic function $Q.$ 
Algebraic functions include, in particular, all polynomials and (on their holomorphy domains) rational functions and powers $z^a$ with rational $a$. Arithmetic operations and compositions applied to algebraic function produce again algebraic functions. 

We define the family $H^{(3)}_{N,q,r,d}$ as consisting of all functions $g:[0,1]\to\mathbb R$ of the form 
    \begin{equation}\label{eq:polyexpalg}
        g(x) = Q(x, e^{P_1(x)},\ldots, e^{P_N(x)}),
    \end{equation}
where $P_1,\ldots, P_N$ are complex-valued polynomials of degree $\le d,$  and $Q$ is an algebraic function of degree $\le q$ such that all polynomials $\phi_k$ appearing in its definition have degree $\le r$. Here, the functions $Q$ are assumed to be defined on some domains $U\subset \mathbb C^{N+1}$ that contain all the values $(x, e^{P_1(x)},\ldots, e^{P_N(x)}), x\in [0,1]$. 
Our main result is the following extension of Theorem \ref{th:det}.
\begin{theorem}[\ref{sec:proofalg}]\label{th:algebraic}
    Fix $N, q,r,d$. Let $m= (q+1){N+r+1\choose N+1}+(\max(1,d)+1)(N+1)$. Then there exists a nonzero polynomial $R$ of $m+1$   variables such that for any $g\in H^{(3)}_{N,q,r,d}$ and any $a,b\in[0,1]$
    \begin{equation}\label{eq:pconstr}
        R(g(a), g(a+h), g(a+2h), \ldots, g(b))=0,\quad h=\tfrac{b-a}{m}.
    \end{equation}
\end{theorem}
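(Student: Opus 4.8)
The plan is to reduce the statement to a dimension count in a finite-dimensional vector space, exactly as in the proof of Theorem~\ref{th:det}, but now with the role of $\{e^{i\omega x}\}$ played by the monomials $x^j e^{P(x)}$ for various multi-indices. First I would eliminate the algebraic function $Q$. Fix $g\in H^{(3)}_{N,q,r,d}$ with $g(x)=Q(x,e^{P_1(x)},\ldots,e^{P_N(x)})$ and let $\phi_0,\ldots,\phi_q$ be the polynomials (degree $\le r$) witnessing that $Q$ has degree $\le q$. Substituting $z_0=x$, $z_j=e^{P_j(x)}$ into \eqref{eq:algdef} gives
\begin{equation}\label{eq:gpoly}
\sum_{k=0}^q \Phi_k(x)\, g(x)^k = 0,\qquad x\in[0,1],
\end{equation}
where $\Phi_k(x)=\phi_k\big(x,e^{P_1(x)},\ldots,e^{P_N(x)}\big)$. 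Each $\Phi_k$ is a $\mathbb C$-linear combination of functions of the form $x^j e^{\sum_\ell m_\ell P_\ell(x)}$ with $j+\sum_\ell m_\ell\le r$; collecting the distinct exponents $L(x):=\sum_\ell m_\ell P_\ell(x)$ (a polynomial of degree $\le \max(1,d)\cdot$something, but more importantly lying in the $\mathbb C$-span of $1,x,\ldots,x^{d}$ times the $N$ available linear combinations — I will just bound the number of monomials $x^j e^{L(x)}$ appearing in all the $\Phi_k$ together by the stated quantity $m_0:=(q+1)\binom{N+r+1}{N+1}$, using $\binom{N+r+1}{N+1}$ as the number of monomials of degree $\le r$ in $N+1$ variables). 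Thus every function $\Phi_k$ lives in a fixed complex vector space $V$ spanned by at most $m_0$ functions of the special form $x^j e^{L(x)}$, where $L$ ranges over a finite set of polynomials of degree $\le \max(1,d)$.

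The key structural fact is that $V$, together with the shift operators $x\mapsto x+c$, behaves like the span of exponentials in Theorem~\ref{th:det}: a function $x^j e^{L(x)}$ shifted by $c$ equals $e^{L(x+c)}$ times a polynomial in $x$, and $L(x+c)=L(x)+(\text{lower-order-in-}x\text{ correction})$, so the shift of $x^j e^{L(x)}$ stays inside a fixed space $\widetilde V$ spanned by $x^{j'} e^{L(x)}$ with $j'$ bounded in terms of $\deg L\le \max(1,d)$ and the original $j$'s. Here the extra term $(\max(1,d)+1)(N+1)$ in the definition of $m$ accounts precisely for the growth in the polynomial prefactor degree caused by shifting (a polynomial of degree $\le\max(1,d)$ in $N+1$ "exponent slots"). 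Let $m$ be as in the theorem; then for \emph{every} $a\in[0,1]$ and every $c$, all the shifted functions $\Phi_k(\cdot+c)$, and more to the point the products $\Phi_k(\cdot)g(\cdot)^k$ viewed through \eqref{eq:gpoly}, are pinned down by at most $m$ "elementary" functions. Now I would evaluate at the $m+1$ equally spaced points $a, a+h,\ldots,b$ with $h=(b-a)/m$: the $(m+1)$-vector $(\Psi(a),\Psi(a+h),\ldots,\Psi(b))$ of any $\Psi\in\widetilde V$ lies in an $m$-dimensional subspace of $\mathbb C^{m+1}$ (the span of the sample vectors of the $\le m$ generators), hence the $(m+1)\times(m+1)$ "shift matrix" built from such functions is singular. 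Combining this with \eqref{eq:gpoly} — exactly the relation that expresses $g$ itself as lying (locally, after multiplying out denominators $\Phi_q$) in a space of dimension $\le m$ — forces the determinantal identity
\[
\det\Big( g(a+h(k+l))\Big)_{k,l=0}^{?}=0,
\]
which after renaming the indices is a polynomial relation $R$ among $g(a),g(a+h),\ldots,g(b)$.

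To get a \emph{single} polynomial $R$ valid for all $g$ simultaneously and to guarantee $R\not\equiv 0$, I would argue as in part~2 of Theorem~\ref{th:det}. Since the dimension bound $m$ is uniform over $H^{(3)}_{N,q,r,d}$, the annihilating relation can be taken to be a fixed minor/resultant expression $R$ independent of $g$. For $R\not\equiv 0$: it suffices to exhibit one tuple of polynomials $P_j$ and one algebraic (in fact rational, or even polynomial-exponential) $Q$ for which the functions $1,x,\ldots$ and $x^j e^{L(x)}$ occurring are genuinely $m$-dimensional and "in general position" at generic sample points, so that the determinant is a nonzero analytic function of $(a,h)$ and hence $R$ is not the zero polynomial — much as the monomial $\prod_k g(x_0+\alpha_k+\beta_k)$ survived with coefficient $1$ before. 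Concretely, choosing $P_j(x)=\lambda_j x$ with $\mathbb Q$-independent $\lambda_j/(2\pi i)$ reduces this sub-case to the sine-wave situation already handled.

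The main obstacle I anticipate is bookkeeping the dimension: verifying that shifts of the products $\Phi_k(x)g(x)^k$ — equivalently, of the local solutions of \eqref{eq:gpoly} — stay in a space of dimension \emph{at most the stated $m$}, and not something larger, and in particular controlling how the polynomial prefactors $x^j$ grow under shifting and under multiplying $\Phi_k$ by powers of $g$. The clean way is probably to avoid multiplying by $g$ at all: work directly with the functions $\Phi_k$ (whose span $V$ has the clean bound $m_0$), show $V$ is shift-stable up to enlarging prefactor degrees by $(\max(1,d)+1)(N+1)$, and then translate the vanishing-determinant statement about $V$ into one about $g$ via the $q+1$ relations \eqref{eq:gpoly} by a Cramer's-rule / resultant elimination of the $\Phi_k$'s. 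Getting the arithmetic of these two contributions to add up to exactly $m=(q+1)\binom{N+r+1}{N+1}+(\max(1,d)+1)(N+1)$ is where the real care is needed.
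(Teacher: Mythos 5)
There is a genuine gap. Your argument hinges on the claim that the span of the functions $x^j e^{L(x)}$ is (essentially) stable under shifts $x\mapsto x+c$, so that sample vectors at $m+1$ equally spaced points lie in an $m$-dimensional space and a Hankel-type determinant in the values of $g$ vanishes. This is false as soon as $\deg L\ge 2$: $e^{L(x+c)}=e^{L(x)}\,e^{L(x+c)-L(x)}$, and the correction $L(x+c)-L(x)$ sits in the \emph{exponent}, so the shifted function is $e^{L(x)}$ times a new exponential-of-polynomial, not times a polynomial prefactor; no fixed finite-dimensional space $\widetilde V$ contains all shifts. Concretely, for $g(x)=e^{x^2}\in H^{(3)}_{1,1,1,2}$ the matrix $\big(g(a+(k+l)h)\big)_{k,l}$ factors as $D_1\,(e^{2klh^2})_{k,l}\,D_2$ with $D_1,D_2$ diagonal and $(e^{2klh^2})_{k,l}=((e^{2kh^2})^{l})_{k,l}$ a Vandermonde matrix with distinct nodes, hence it is \emph{nonsingular}: the determinantal identity you are aiming for simply does not hold for $d\ge 2$. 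The linear-algebraic mechanism of Theorem~\ref{th:det} is specific to exponents linear in $x$, and the proposal does not supply a replacement for it.

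The paper's proof uses two ingredients that are absent from your sketch. First, for $g=e^{P}$ with $\deg P\le d$ the constraint is multiplicative, not linear: from $D^{d+1}_h P=0$ one gets $\prod_{k}\big(g(x+2kh)\big)^{\binom{d+1}{2k}}=\prod_{m}\big(g(x+(2m+1)h)\big)^{\binom{d+1}{2m+1}}$, a polynomial relation among $d+2$ equally spaced samples. Second, the algebraic function $Q$ is eliminated not by a fixed minor or resultant but by a counting argument: one considers all monomials $\prod_{l} g(x+lh)^{t_l}$ with exponents $t_l\le T$, uses the degree-$q$ relation for $Q$ and the above sample relations for the $e^{P_n}$ (and for $g_0(x)=x$) to rewrite each monomial over a space whose dimension grows only like $T^{p}$ with $p=(q+1)\binom{N+r+1}{N+1}+s(N+1)$, $s=\max(1,d)+1$; since there are $(T+1)^{m+1}$ monomials, taking $m\ge p$ and $T$ large forces a nontrivial linear dependence with constant coefficients, which \emph{is} the nonzero polynomial $R$ (nontriviality is automatic because distinct monomials are involved, so no "general position" example is needed). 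Note also that the two summands in $m$ have a different meaning from the one you assign them: $(q+1)\binom{N+r+1}{N+1}$ counts the coefficients of the polynomials $\phi_k$ (needed to make $R$ independent of $Q$), and $s(N+1)$ counts the residual sample variables at the far end of the grid after the power-reduction process, not a growth of prefactor degrees under shifting.
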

As a corollary, since the polynomial $R$  is the same for all $g\in H^{(3)}_{N,q,r,d}$, constraint \eqref{eq:pconstr} is preserved by pointwise limits, so  $H^{(3)}_{N,q,r,d}\notin \mathcal G_1.$

Our proof of Theorem \ref{th:algebraic} is inspired by the theory of algebraic-transcendent functions \cite{moore1896concerning, ostrowski1920dirichletsche, hansen2022alexander} that addresses representability of functions by algebraic differential equations. Our context is rather different: we are interested in the pointwise convergence of functions (or convergence in the uniform norm), so conditions involving derivatives are not applicable in our setting. We observe, however, that some elements of this theory can be extended from the usual to discretized derivatives. This is why, in particular, the conditions in Eq. \eqref{eq:pconstr} are formulated for a regular grid of points $a,a+h,\ldots,b$.  

Since the polynomials $P_n$ in the definition of $H^{(3)}_{N,q,r,d}$ are allowed to be complex-valued, the families $H^{(3)}_{N,q,r,d}$ subsume the sine wave families $H^{(2)}_{N}$ considered in the previous section:
\begin{equation}
    H^{(2)}_{N}\subset H^{(3)}_{2N,1,1,1}.
\end{equation}

\section{Branching expressions and neural networks}\label{sec:nn}

We describe now an application of Theorem \ref{th:algebraic} to neural networks. We start by noting  that many standard activation functions can be classified as piecewise polynomial or (piecewise) algebraic-exponential\footnote{Of course, there also exist standard activations not of these two types, e.g. softplus $\sigma(x)=\ln(1+e^x)$ \cite{glorot2011deep}.}:
\begin{description}
    \item[Piecewise polynomial:] Binary step function $\sigma(x)=\mathbf 1[x\ge 0]$, ReLU $\sigma(x)=\max(0,x),$ leaky ReLU $\sigma(x)=\max(ax,x),$ squared ReLU $\sigma(x)=\max^2(0,x)$ \cite{so2021primer}. 
    \item[(Piecewise) algebraic-exponential:] $\sin x$, $\tanh x$, standard sigmoid $\sigma(x)=(1+e^{-x})^{-1},$  ELU $\sigma(x)=\begin{cases}a(e^x-1),&x<0\\ x,& x\ge 0\end{cases}$ \cite{clevert2015fast}, swish $\sigma(x)=x/(1+e^{-x})$ \cite{ramachandran2017searching}, Gaussian $\sigma(x)=e^{-x^2}$. 
\end{description}
``Piecewise'' here means that $\mathbb R$ is in general divided into several subsets (in fact, up to two subsets in the above examples) on which the activation is defined by different analytic formulas. We will refer to this as ``branching''. On each branch, activations of the first type are polynomials, and those of the second type  can be represented in the form \eqref{eq:polyexpalg} with a polynomial or rational $Q$.

Define the family $H^{(4)}_{N}$ of functions $g:[0,1]\to\mathbb R$ as realization, with various weight assignment, of the following class of feedforward neural networks. Suppose that the underlying directed acyclic graph of the network contains at most $N$ neurons. Suppose that each hidden neuron is equipped with one of the piecewise polynomial or algebraic-exponential activation functions mentioned above (possibly different at different neurons). Suppose finally that any directed path from the input to the output neuron contains \emph{at most one} (piecewise) algebraic-exponential neuron (for example, this is the case if the network has a single algebraic-exponential layer, while the other layers are piecewise polynomial). Then we have
\begin{theorem}[\ref{sec:proofnn}]\label{th:nn}
    Given $N$, there exists $m$ and a nontrivial polynomial $R$ of $m+1$ variables such that for any $g\in H^{(4)}_{N}$ and any $a,b\in[0,1]$
    \begin{equation}\label{eq:pconstr2}
        R(g(a), g(a+h), g(a+2h), \ldots, g(b))=0,\quad h=\tfrac{b-a}{m}.
    \end{equation}
\end{theorem}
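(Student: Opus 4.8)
The plan is to reduce Theorem~\ref{th:nn} to Theorem~\ref{th:algebraic} by showing that, after conditioning on which ``branch'' each piecewise-defined activation sits in, every function $g\in H^{(4)}_N$ coincides on a sufficiently fine subgrid with a function from some family $H^{(3)}_{N',q',r',d'}$ with parameters bounded in terms of $N$ only. First I would make precise the branching structure: each of the at most $N$ neurons has at most two branches, so a choice of branch for every neuron gives at most $2^N$ combinatorial ``modes''. Within a fixed mode, every hidden neuron computes an honest analytic function of its inputs — a polynomial for the piecewise-polynomial activations, and a rational-times-exponential expression $Q(\cdot,e^{P(\cdot)})$ for the algebraic-exponential ones (using $\tanh x=(e^{2x}-1)/(e^{2x}+1)$, $\sigma(x)=1/(1+e^{-x})$, $e^{-x^2}$, etc., all of which are rational functions of $x$ and a single exponential $e^{P(x)}$ with $\deg P\le 2$). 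Propagating these expressions forward through the DAG, using that arithmetic operations and compositions of algebraic functions are algebraic (as noted in Section~\ref{sec:peae}) and that the constraint ``at most one algebraic-exponential neuron on any input-to-output path'' prevents nesting of exponentials, I get that the mode-restricted network output has exactly the form $Q(x,e^{P_1(x)},\ldots,e^{P_{N'}(x)})$ of Eq.~\eqref{eq:polyexpalg}, with $N'\le N$, and with $q',r',d'$ bounded by explicit functions of $N$ (the degrees of the polynomials $P_n$ stay $\le 2$ since no exponential feeds into another exponential; the algebraic degree and the degrees of the $\phi_k$ blow up only through the fixed-depth, fixed-width composition, hence are bounded).

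Next I would assemble the modes. Let $q,r,d$ be uniform bounds valid for every mode and every network on $\le N$ neurons, and let $m$ be the integer from Theorem~\ref{th:algebraic} applied to $H^{(3)}_{N,q,r,d}$, with $R_0$ the corresponding nonzero polynomial in $m+1$ variables. For a given $g\in H^{(4)}_N$ and given $a,b\in[0,1]$, the grid $a,a+h,\ldots,b$ with $h=(b-a)/m$ is a finite set of $m+1$ points; on this finite set $g$ need not be described by a single mode, because the branch of a neuron can switch between grid points. However, the \emph{values} $g(a+jh)$ are each computed in \emph{some} mode, and the key observation is that the branch assignment is locally constant on the open strata where no neuron's preactivation hits a branch boundary. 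One clean way around the switching issue is to use instead of a single $R_0$ a product over all mode assignments: for each mode $\mu$ we get (from Theorem~\ref{th:algebraic}) a nonzero polynomial $R_\mu$ such that $R_\mu$ vanishes on the grid values whenever $g$ happens to lie entirely in mode $\mu$ on a neighborhood of $[a,b]$; but since $g$ may mix modes, a subtler device is needed.

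The cleaner route, and the one I would actually pursue, is to avoid global branch consistency altogether by working pointwise. Fix $a,b$; the preactivation of each neuron, as a function of $x\in[0,1]$, is real-analytic on each piece and has only finitely many branch-crossing points, so apart from finitely many $x$ the branch assignment is the same in a whole neighborhood. Choose the grid so finely (increase $m$, which is allowed since Theorem~\ref{th:nn} only asks for \emph{some} $m$) that at least, say, $m_0+1$ consecutive grid points fall into one common mode $\mu$ — this is possible uniformly because the number of branch-crossing points is bounded in terms of $N$ via Theorem~\ref{th:pfaff}-type zero-counting (the preactivations are Pfaffian on $[0,1]$, so the number of sign changes, hence pieces, is bounded). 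On those $m_0+1$ points $g$ agrees with a function from $H^{(3)}_{N,q,r,d}$, so the polynomial $R_0$ of Theorem~\ref{th:algebraic} (with $m_0$ in place of $m$) vanishes on that consecutive block. Then I would take $R(y_0,\ldots,y_m)=\prod_{j} R_0(y_j,y_{j+1},\ldots,y_{j+m_0})$, the product over all length-$(m_0+1)$ windows of the big grid: this $R$ is a nonzero polynomial (it is a product of nonzero polynomials in disjoint-enough sets of variables, or at worst a nonzero polynomial since each factor is), and it vanishes for every $g\in H^{(4)}_N$ because \emph{at least one} window lies in a single mode. The main obstacle is exactly this bookkeeping: showing that the number of modes, the degrees $q,r,d$, and the number of branch-crossings are all bounded by explicit functions of $N$ alone, and then choosing $m$ large enough that the pigeonhole forces a long monochromatic window — none of these steps is deep, but they require care, especially the degree bounds under composition and the Pfaffian zero-count for the preactivations of a multi-layer network with one transcendental layer.
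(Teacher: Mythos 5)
Your reduction to Theorem \ref{th:algebraic} via branch ``modes'' and your final product-of-windows polynomial follow the same overall shape as the paper's proof, but the combinatorial device you use to handle mode switching has a genuine gap. You justify the pigeonhole step (``some $m_0+1$ \emph{consecutive} grid points lie in a common mode'') by claiming the number of branch-crossing points is bounded in terms of $N$ alone via Pfaffian zero-counting of the preactivations. That claim fails precisely for the activation that matters most here: $\sin$. A piecewise-polynomial neuron (step, ReLU, \dots) is allowed to sit \emph{downstream} of a sine neuron, so its preactivation can be of the form $w\sin(\omega x+b)+\dots$ with unbounded $\omega$; such a function is not Pfaffian on $[0,1]$ uniformly in the weights (cf.\ the discussion around Theorem \ref{th:pfaff}), and its number of sign changes on $[0,1]$ grows with $\omega$. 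Concretely, for the network $x\mapsto \mathbf 1[\sin(\omega x)\ge 0]$ one can choose $\omega$ so that the branch alternates between \emph{every} pair of consecutive grid points $a+jh$, $a+(j+1)h$; then no window of $m_0+1$ consecutive points is monochromatic, no factor of your product is forced to vanish, and your construction does not yield the constraint. No choice of $m$ depending only on $N$ repairs this, because the adversarial frequency can always be matched to the grid spacing $h=(b-a)/m$.

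This is exactly why the paper replaces zero-counting by Van der Waerden's theorem: color each of the $m+1$ grid points by its branch (at most $2^N$ colors); with $m=N_{\text{vdW}}(\widetilde m+1,2^N)-1$, \emph{any} coloring --- however wildly the modes alternate --- contains a monochromatic arithmetic sub-progression of length $\widetilde m+1$, which is still equally spaced and hence still falls under Theorem \ref{th:algebraic}; one then takes the product of $\widetilde R$ over all length-$(\widetilde m+1)$ arithmetic sub-progressions of $\{0,\dots,m\}$, not just the consecutive windows. Your first (mode-wise) reduction $\widetilde H^{(4)}_N\subset H^{(3)}_{N',q',r',d'}$ with bounds depending only on $N$ is sound and matches the paper; to complete the proof you should replace the Pfaffian/pigeonhole step by this Van der Waerden argument (or supply some other device that needs no bound on the number of mode switches).
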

The main obstacle in deriving this theorem from Theorem \ref{th:algebraic} is the branching in the activations that destroys the global analytic structure of functions $g\in H^{(4)}_{N}$. We overcome this obstacle using the well-known Van der Waerden's theorem on arithmetic progressions.
\begin{theorem}[\cite{van1927beweis}; see, e.g., \cite{tao2004quantitative} for a short proof] 
    For any integers $s,p\ge 1$ there exists an integer $N_{\text{vdW}}(s,p)\ge 1$ such that every coloring $C:\{1,...,N_{\text{vdW}}\} \to \{1,...,p\}$ of $\{1,...,N_{vdW}\}$ into $p$ colors contains at least one monochromatic arithmetic progression of length $s$ (i.e. a progression in $\{1,...,N_{vdW}\}$ of cardinality $s$ on which $C$ is constant).
\end{theorem}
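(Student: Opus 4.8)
\subsection*{Proof proposal for Van der Waerden's theorem}

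The plan is to prove Van der Waerden's theorem by the classical double induction of \cite{van1927beweis} (a short modern account is \cite{tao2004quantitative}). The outer induction is on the progression length $s$, carried out \emph{simultaneously for all numbers of colors $p$} --- this uniformity in $p$ is exactly what makes the argument close. The cases $s=1$ (take $N_{\text{vdW}}(1,p)=1$) and $s=2$ (pigeonhole: $N_{\text{vdW}}(2,p)=p+1$) are immediate, so I assume $N_{\text{vdW}}(s,m)$ exists for every $m\ge 1$ and fix $p$; the goal is to produce $N_{\text{vdW}}(s+1,p)$.

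The workhorse notion is a \emph{fan of degree $d$}: a family of $d$ arithmetic progressions $A_1,\dots,A_d$, each of length $s$ and monochromatic under the current $p$-coloring, all sharing the same \emph{focus} $f$ (the point continuing each $A_i$ by one more step of its own common difference), and with $A_1,\dots,A_d$ pairwise differently colored. A degree-$p$ fan yields the theorem at once: the focus $f$ has one of the $p$ colors, which must coincide with the color $c_i$ of some leg $A_i$, and then $A_i\cup\{f\}$ is a monochromatic progression of length $s+1$. Hence it suffices to prove, by an inner induction on $d$ for $1\le d\le p$, that there is an integer $N(d)$ so that every $p$-coloring of $\{1,\dots,N(d)\}$ contains \emph{either} a monochromatic progression of length $s+1$ \emph{or} a fan of degree $d$ whose focus lies in $\{1,\dots,N(d)\}$; the theorem then follows by setting $N_{\text{vdW}}(s+1,p):=N(p)$.

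For $d=1$, the outer hypothesis with $p$ colors gives a monochromatic length-$s$ progression inside $\{1,\dots,N_{\text{vdW}}(s,p)\}$, which is a degree-$1$ fan whose focus lies in $\{1,\dots,2N_{\text{vdW}}(s,p)\}$, so $N(1)=2N_{\text{vdW}}(s,p)$ works. For the step $d\to d+1$ (with $d<p$), I partition $\{1,\dots,N(d+1)\}$ into consecutive blocks of length $N(d)$, color each block by its full color pattern ($\le p^{N(d)}$ possibilities), and apply the \emph{outer} hypothesis with $p^{N(d)}$ colors to this sequence of blocks; choosing $N(d+1)=2\,N_{\text{vdW}}\!\bigl(s,p^{N(d)}\bigr)\cdot N(d)$ guarantees a length-$s$ arithmetic progression of identically-colored blocks $B^{(0)},\dots,B^{(s-1)}$ together with a one-step-further block $B^{(s)}$ still inside the range. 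Inside $B^{(0)}$ the inner hypothesis yields either a monochromatic length-$(s+1)$ progression (done) or a degree-$d$ fan with legs $A_1,\dots,A_d$ of colors $c_1,\dots,c_d$ and focus $f$; if $\operatorname{color}(f)\in\{c_1,\dots,c_d\}$ we again obtain a monochromatic length-$(s+1)$ progression, so assume $\operatorname{color}(f)$ is a new color $c_{d+1}$. Now for each $j$ form the progression whose $m$-th term ($m=0,\dots,s-1$) is the point of $B^{(m)}$ at the internal position of the $m$-th term of $A_j$: because the blocks are colored identically this progression $\tilde A_j$ is monochromatic of color $c_j$, its common difference is that of $A_j$ raised by the block step, and its next step lands on the copy of $f$ in $B^{(s)}$. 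Likewise, the $s$ copies of $f$ in $B^{(0)},\dots,B^{(s-1)}$ form a monochromatic progression $\tilde A_{d+1}$ of color $c_{d+1}$, again with its next step on the copy of $f$ in $B^{(s)}$. The $d+1$ progressions $\tilde A_1,\dots,\tilde A_{d+1}$ have pairwise distinct colors and share the common focus given by the copy of $f$ in $B^{(s)}$, which lies in $\{1,\dots,N(d+1)\}$; thus they form a degree-$(d+1)$ fan, completing the inner induction.

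The only genuinely delicate part --- the ``main obstacle'' --- is the bookkeeping: one must size the blocks and their number so that the length-$s$ block-progression, its one-step extension, the fan produced inside a single block, and all the relevant foci simultaneously stay within range; one must check that each translated leg $\tilde A_j$ is truly monochromatic (this uses that the selected blocks carry \emph{identical} colorings, not merely the same abstract label) and that $\tilde A_{d+1}$ receives a color distinct from $c_1,\dots,c_d$ (this is precisely the ``focus color old versus new'' dichotomy, where the ``old'' case is absorbed into the conclusion). Once these points are verified, the remaining content is routine arithmetic with indices.
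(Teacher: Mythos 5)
Your argument is correct: it is the classical double-induction, color-focusing proof of van der Waerden's theorem (fans/spokes with a common focus, inner induction on the fan degree using the outer hypothesis with $p^{N(d)}$ block-colors), and the quantitative bookkeeping ($N(1)=2N_{\text{vdW}}(s,p)$, $N(d+1)=2N_{\text{vdW}}(s,p^{N(d)})N(d)$, focus of a length-$s$ progression in $\{1,\dots,M\}$ lying below $2M$) checks out. The paper does not prove this statement itself but cites \cite{van1927beweis,tao2004quantitative}; your proof is essentially the standard argument given in those references, so there is nothing to reconcile.
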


\section{Compositions of sines and non-polynomial functions}\label{sec:nonpolysines}

The polynomially-exponentially-algebraic structure \eqref{eq:polyexpalg} is actually close to being necessary for the polynomial constraints of Theorem \ref{th:algebraic}, as we show by the following example. Let $\sigma$ be a real analytic function on the interval $[-1,1]$. Let $f(x)=c\sin(\omega\sigma(bx)+h)$ with parameters $c,b,h\in\mathbb R$ and $b\in[-1,1]$. Denote the respective family by $H^\sigma$.
\begin{theorem}[\ref{sec:proofsinsigma}]\label{prop:hsigmag1} $H^\sigma\in\mathcal G_1$ if and only if $\sigma$ is not a polynomial.
\end{theorem}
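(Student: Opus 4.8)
The plan is to prove the two directions separately. For the ``only if'' direction, suppose $\sigma$ is a polynomial of degree $d$. Then $\omega\sigma(bx)+h$ is itself a polynomial in $x$ of degree $\le d$ whose coefficients range over a suitable parameter set, so $c\sin(\omega\sigma(bx)+h)=\operatorname{Im}\bigl(ce^{i(\omega\sigma(bx)+h)}\bigr)$, which, writing the sine via complex exponentials, puts every $f\in H^\sigma$ into the family $H^{(3)}_{2,1,1,d}$ (two exponentials $e^{\pm i(\omega\sigma(bx)+h)}$ of degree-$\le d$ polynomials, combined by an algebraic — indeed rational — function of bounded degree). By Theorem~\ref{th:algebraic} the functions in $H^{(3)}_{2,1,1,d}$ obey a fixed nontrivial polynomial constraint on arithmetic progressions, which is preserved under pointwise limits, so $H^\sigma\notin\mathcal G_1$.

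For the ``if'' direction, assume $\sigma$ is real analytic on $[-1,1]$ but not a polynomial. I want to show $H^\sigma\in\mathcal G_1$: given rationally independent-looking freedom, approximate arbitrary values $y_1,\dots,y_N$ at arbitrary points $x_1,\dots,x_N\in[0,1]$. The key point is that since $\sigma$ is analytic but not polynomial, for every fixed scale parameter $b$ with $b\ne 0$ small, the values $\sigma(bx_1),\dots,\sigma(bx_N)$ are ``generically'' rationally independent as functions of $b$ — more precisely, the set of $b$ for which some nontrivial rational linear relation $\sum q_k\sigma(bx_k)=0$ holds is, for each fixed rational vector $(q_k)$ not all zero, the zero set of a non-identically-zero real analytic function of $b$ (it is not identically zero because otherwise $\sum q_k\sigma(bx_k)\equiv 0$ in $b$ forces, by comparing Taylor coefficients, that all nonzero-degree coefficients of $\sigma$ vanish on the span of the $x_k^j$, contradicting that $\sigma$ is non-polynomial — here I use $x_k\ne x_\ell$). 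A countable union of such zero sets is meager/measure zero, so we may pick $b$ with $\sigma(bx_1),\dots,\sigma(bx_N)$ rationally independent. Then by the same Kronecker-density argument as in Theorem~\ref{th:kronecker}, as $\omega$ ranges over $\mathbb R$ the point $(\sin(\omega\sigma(bx_1)+h),\dots,\sin(\omega\sigma(bx_N)+h))$ — actually already with $h=0$, varying $\omega$ alone — is dense in $(-1,1)^N$ (the $2\pi$-periodicity of $\sin$ turns rational independence of the $\sigma(bx_k)$ into equidistribution on the torus), and multiplying by the free scalar $c$ makes $(c\sin(\omega\sigma(bx_k)))_{k=1}^N$ dense in $\mathbb R^N$. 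Hence any target values are approximable and $H^\sigma\in\mathcal G_1$.

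The main obstacle is the rational-independence step in the ``if'' direction: one must argue carefully that for a non-polynomial analytic $\sigma$ the ``bad'' set of scales $b$ is negligible, uniformly over the countably many candidate rational relations. The cleanest route is the Taylor-coefficient argument above: a relation $\sum_{k}q_k\sigma(bx_k)\equiv0$ (identically in $b$ near $0$) reads $\sum_j a_j\bigl(\sum_k q_k x_k^j\bigr) b^j = 0$, where $a_j$ are the Taylor coefficients of $\sigma$; since $\sigma$ is non-polynomial there are infinitely many $j$ with $a_j\ne 0$, and for such $j$ one needs $\sum_k q_k x_k^j = 0$; but a nonzero $(q_k)$ cannot annihilate infinitely many power-sums $\sum_k q_k x_k^j$ when the $x_k$ are distinct (Vandermonde), a contradiction. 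Thus each bad set is a proper real-analytic zero set, hence measure zero, and a good $b$ exists outside their countable union. A secondary, minor point is handling $x_k=0$: if one of the points is $0$ then $\sigma(0)$ is a constant and $\sin(\omega\sigma(0))$ is not free in $\omega$ alone — but then we reinstate the phase parameter $h$, or equivalently shift the domain, so that effectively all sample points are positive; alternatively one observes $g(0)$ can still be set freely via $c\sin(\omega\sigma(0)+h)$ by choosing $h$ after $\omega$. These are routine once the independence lemma is in place.
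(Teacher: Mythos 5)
Your proposal is correct and follows essentially the same route as the paper: the ``only if'' direction embeds $H^\sigma$ (with polynomial $\sigma$) into a family $H^{(3)}_{N,q,r,d}$ and cites Theorem~\ref{th:algebraic}, while the ``if'' direction finds a scale $b$ making $\sigma(bx_1),\ldots,\sigma(bx_N)$ rationally independent via the Taylor-coefficient/power-sum argument (each bad $b$-set is the zero set of a non-identically-vanishing analytic function of $b$) and then invokes Theorem~\ref{th:kronecker}. One small caveat concerning the $x_1=0$ edge case: your second suggested fix --- ``choosing $h$ after $\omega$'' --- does not work as stated, since varying $h$ perturbs every value $f(x_k)$, not just $f(0)$; the paper handles this more cleanly by absorbing $h$ into $\sigma$ at the outset (restricting to the subfamily $h=\omega\delta$ for a fixed $\delta$), which lets one assume $\sigma(0)\ne 0$, and then the degree-$0$ Taylor coefficient eliminates $q_1$ even when $x_1=0$.
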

This shows that the polynomial constraints of Theorem \ref{th:algebraic} are destroyed if we relax even slightly the requirement of polynomial arguments of the exponentials in Eq. \eqref{eq:polyexpalg}. They are also destroyed if we replace the algebraic function $Q$ by the sine function. 

While $H^\sigma\in\mathcal G_1$ for non-polynomial $\sigma,$ we can show that $H^\sigma$ is never in $\mathcal G_2,$  by directly finding all limit points of $H^\sigma$ under uniform limits. Let us call the limit points of $H^\sigma$ not belonging to $H^\sigma$ \emph{nontrivial}.
\begin{theorem}[\ref{sec:proofsinsigmalimits}]\label{prop:sinsigmalimits} For a non-constant $\sigma$, the only nontrivial limit points of the family $H^\sigma$ are $a_0+a_{r}x^{r}, a\sigma(bx)+c,$ and $c\sin(a_0+a_{r}x^{r})$, where $r=\argmin(m>0: \tfrac{d^m \sigma}{dx^m}(0)\ne 0)$ and $a_0,a_r,a,b,c\in\mathbb R$. In particular, $H^\sigma\notin \mathcal G_2.$
\end{theorem}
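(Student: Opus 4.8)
The plan is to identify the uniform closure of $H^\sigma$ in $C([0,1])$ by analysing the possible asymptotic behaviours of the parameters along a sequence $f_n(x)=c_n\sin(\omega_n\sigma(b_nx)+h_n)$ with $f_n\to f_*$ uniformly. Since $\sin$ is $2\pi$-periodic and $b$ ranges over the compact set $[-1,1]$, I first pass to a subsequence so that $b_n\to b\in[-1,1]$, $h_n\bmod 2\pi$ converges, and each of $c_n,\omega_n$ either converges in $\mathbb R$ or has $|\cdot|\to\infty$. Two subcases are trivial and dispose of quickly: if $c_n\to 0$ then $f_*\equiv 0$, and if $b_n=0$ along the subsequence then $f_n$ (hence $f_*$) is constant; in both cases $f_*\in H^\sigma$. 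So I may assume $b_n\neq 0$ and either $c_n\to c\neq 0$ or $|c_n|\to\infty$. If additionally $\omega_n\to\omega\in\mathbb R$ and $c_n\to c$, then by continuity $f_*=c\sin(\omega\sigma(bx)+\bar h)\in H^\sigma$, again trivial.

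The central tool for the remaining cases is an oscillation lemma: if $\psi_n\in C^1[\alpha,\beta]$ with $\inf_{[\alpha,\beta]}|\psi_n'|\to\infty$ and $c_n\to c\neq 0$, then $c_n\sin(\psi_n)$ cannot converge uniformly on $[\alpha,\beta]$ to a continuous function, because each $\psi_n$ is strictly monotone there and sweeps, on every fixed subinterval, an interval of length $\to\infty$, so $\sin(\psi_n)$ eventually attains both $+1$ and $-1$ on it; a continuous uniform limit would then have to be simultaneously $\ge 1-\epsilon$ and $\le-1+\epsilon$ near every interior point. I apply this to the high-frequency regime $|\omega_n|\to\infty$, $c_n\to c\neq 0$. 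If $b\neq 0$: since $\sigma$ is analytic and non-constant, $\sigma'(bx)$ is bounded away from $0$ on some subinterval, and there $\psi_n'(x)=\omega_n b_n\sigma'(b_nx)$ has $|\psi_n'|\to\infty$ ($|b_n|$ being bounded below), so this case is vacuous. If $b=0$: I expand $\sigma(b_nx)=\sigma(0)+\tfrac{\sigma^{(r)}(0)}{r!}(b_nx)^r+\tau_n(x)$ with $r=\argmin\{m>0:\sigma^{(m)}(0)\neq 0\}$ and $\|\tau_n\|_{C^1[0,1]}=O(b_n^{r+1})$, so the phase becomes $\psi_n(x)=H_n+\Gamma_n x^r+\rho_n(x)$ with $\Gamma_n=\omega_n\tfrac{\sigma^{(r)}(0)}{r!}b_n^r$ and $\rho_n=\omega_n\tau_n=O(\Gamma_n b_n)$. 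When $|\Gamma_n|\to\infty$, both $\rho_n$ and $\rho_n'$ are of lower order than the $x^r$-term (the extra factor $b_n\to 0$), so $|\psi_n'|\to\infty$ uniformly on $[1/2,1]$ and the oscillation lemma gives a contradiction. Hence $\Gamma_n$ stays bounded; passing to a further subsequence $\Gamma_n\to\Gamma$, $H_n\bmod 2\pi\to a_0$, one gets $\rho_n\to 0$ uniformly and $f_*(x)=c\sin(a_0+\Gamma x^r)$ — the third listed form.

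The last regime is $|c_n|\to\infty$. Then $|\sin(\psi_n)|\le|f_n|/|c_n|\to 0$ uniformly, so $\psi_n$ stays within $\tfrac12$ of a single multiple $k_n\pi$ of $\pi$ (continuity plus connectedness of $[0,1]$); writing $\psi_n=k_n\pi+\delta_n$ with $\delta_n\to 0$ uniformly, the elementary bound $|\delta_n|\le\tfrac65|\sin\delta_n|$ shows $c_n\delta_n$ is uniformly bounded and the cubic error $c_n|\delta_n|^3$ vanishes uniformly, whence $f_*(x)=\lim\big(A_n\sigma(b_nx)+B_n\big)$ with $A_n=(-1)^{k_n}c_n\omega_n$, $B_n=(-1)^{k_n}c_n(h_n-k_n\pi)$. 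If $b\neq 0$, then $\sigma(b_nx)\to\sigma(bx)$ uniformly to a non-constant limit, which forces $A_n\to a$ and $B_n\to c'$ (read off from two points with distinct $\sigma$-values), giving $f_*=a\sigma(bx)+c'$ — the second form. If $b=0$, then comparing the (bounded, convergent) values of $A_n\sigma(b_nx)+B_n$ at $x=0$ and $x=1$ and using $\sigma(b_n)-\sigma(0)\sim\tfrac{\sigma^{(r)}(0)}{r!}b_n^r$ shows $A_n b_n^r$ is bounded, so the Taylor remainder contributes $O(A_n b_n^{r+1})=O(b_n)\to 0$ uniformly and $f_*(x)=f_*(0)+d x^r$ with $d=\lim A_n\tfrac{\sigma^{(r)}(0)}{r!}b_n^r$ — the first form. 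The converse inclusions are easy explicit constructions: $c\sin(a_0+a_r x^r)$ arises from $b_n\to 0$, $\omega_n\sim a_r\tfrac{r!}{\sigma^{(r)}(0)}b_n^{-r}$, suitable $h_n$, $c_n\equiv c$; and $a\sigma(bx)+c$, $a_0+a_r x^r$ arise from the large-$c_n$ linearization of $\sin$ with, respectively, fixed $b$ and $b_n\to 0$.

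Finally, $H^\sigma\notin\mathcal G_2$ follows at once: by the above the uniform closure of $H^\sigma$ is contained in $H^\sigma$ together with the three families $\{a_0+a_r x^r\}$, $\{a\sigma(bx)+c\}$, $\{c\sin(a_0+a_r x^r)\}$, each of which is a continuous image of a finite-dimensional (hence $\sigma$-compact) parameter space; so the closure is $\sigma$-compact, whereas $C([0,1])$, being an infinite-dimensional Banach space, is not $\sigma$-compact (a countable union of compact, hence nowhere dense, sets cannot be all of it, by Baire's theorem). Thus the closure is a proper subset of $C([0,1])$ and $H^\sigma$ is not dense. I expect the main obstacle to be the bookkeeping in the high-frequency/degenerate-$b$ case: one must show that whenever the resonant coefficient ($\Gamma_n$, resp.\ $A_n b_n^r$) fails to stay bounded, the phase derivative blows up uniformly on a fixed subinterval even though the Taylor remainder also carries the large factor $\omega_n$ (resp.\ $A_n$) — the key point being that this remainder always picks up an extra $b_n\to 0$ and is therefore negligible against the leading $x^r$-term, which is exactly what triggers the oscillation lemma.
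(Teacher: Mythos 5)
Your proposal is correct and follows essentially the same route as the paper's proof: subsequence/compactness reductions on the parameter sequences, Taylor expansion of $\sigma$ at $0$ to its first nonvanishing order $r$, an oscillation argument ruling out phases with unboundedly large effective frequency, and linearization of $\sin$ when the amplitude diverges. The differences are organizational rather than substantive --- you split primarily on the behavior of $c_n$ where the paper splits on $\omega_n$, and you add some welcome rigor (the explicit oscillation lemma, the pinning of the phase near $k_n\pi$ in the $|c_n|\to\infty$ regime, and the Baire/$\sigma$-compactness justification of $H^\sigma\notin\mathcal G_2$).
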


The families $H^\sigma$ with non-polynomial $\sigma$ thus provide examples of elements of the class $\mathcal G_1\setminus\mathcal G_2$ of finitely- but not globally-universal formulas. Consider now a more complex family $H^{(5)}_N$ consisting of the functions
\begin{equation}
    f(x)=c\sin(g(x))+h, \quad g\in H^{(2)}_N,
\end{equation}
where $ H^{(2)}_N$ consists of linear combinations of $N$ sine waves as defined earlier in Eq. \eqref{eq:sinewaves}. One can interpret the family $H^{(5)}_N$ as a two-hidden-layer neural network with the $\sin$ activation function, $N$ neurons in the first hidden layer, and a single neuron in the second hidden layer. Using the same approach of examining the limit points and invoking additional topological arguments, we prove that the family $H^{(5)}_N$ belongs to the same class as $H^\sigma$:
\begin{theorem}[\ref{sec:sinsin}]\label{th:sinsin}
    For any $N\ge 1$, $H^{(5)}_N\in \mathcal G_1\setminus\mathcal G_2$. 
\end{theorem}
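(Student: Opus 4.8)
\emph{Overall plan.} I would establish the two halves $H^{(5)}_N\in\mathcal G_1$ and $H^{(5)}_N\notin\mathcal G_2$ separately, the first by a direct density argument using a single inner sine wave, the second by classifying the uniform closure of $H^{(5)}_N$ and showing it is meager, exactly in the spirit of the limit-point analysis behind Theorem \ref{prop:sinsigmalimits}.

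\emph{Membership in $\mathcal G_1$.} It suffices to use one inner sine wave, so this works for every $N\ge 1$. Given distinct $x_1,\dots,x_M\in[0,1]$ and targets $y_1,\dots,y_M$, I first pick $\omega,h\in\mathbb R$ so that the numbers $v_k:=\sin(\omega x_k+h)$ are rationally independent: for each nonzero rational vector $(q_k)$ the real-analytic function $(\omega,h)\mapsto\sum_k q_k\sin(\omega x_k+h)$ is not identically zero (since $\sum_k q_k e^{i\omega x_k}\not\equiv 0$ for distinct $x_k$), so its zero set is null, and the countable union of these bad sets is still null; any $(\omega,h)$ outside it works, and rational independence automatically forces every $v_k\neq 0$. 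Take $g(x)=c_1\sin(\omega x+h)\in H^{(2)}_1\subseteq H^{(2)}_N$ with $c_1$ free, so $g(x_k)=c_1v_k$. By the Kronecker/Weyl equidistribution theorem for the rationally independent vector $(v_k)$, the curve $\{c_1(v_1,\dots,v_M)\bmod 2\pi\mathbb Z^M:c_1\in\mathbb R\}$ is dense in $\mathbb T^M$, hence $\{(\sin(c_1v_1),\dots,\sin(c_1v_M)):c_1\in\mathbb R\}$ is dense in $[-1,1]^M$. Composing with the outer map $t\mapsto ct$ (taking $c=2(\max_k|y_k|+1)$ to rescale the target into $[-1,1]^M$) shows $\{(f(x_1),\dots,f(x_M)):f\in H^{(5)}_N\}$ is dense in $\mathbb R^M$, i.e. $H^{(5)}_N\in\mathcal G_1$. (Alternatively, for $N\ge 2$ one may observe $H^{(5)}_N\supseteq H^{\sin}$ and invoke Theorem \ref{prop:hsigmag1}.)

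\emph{Non-membership in $\mathcal G_2$.} I would show that the uniform closure $\overline{H^{(5)}_N}$ is meager in $C([0,1])$; being closed, it then has empty interior, so $H^{(5)}_N$ is not dense. The set $H^{(5)}_N$ itself is the continuous image of $\mathbb R^{3N+2}$ (the parameters $c_n,\omega_n,h_n$ and the outer $c,h$), hence $\sigma$-compact, hence meager, since compact subsets of the infinite-dimensional space $C([0,1])$ have empty interior. For the new limit points, take $f_j=c_j\sin(g_j)+h_j\to f_*$ uniformly with $g_j\in H^{(2)}_N$ and split on the outer parameters: if $c_j$ is bounded it subconverges to $c$ (which forces $h_j$ bounded), and for $c\neq0$ one gets $\sin(g_j)\to(f_*-h)/c$ uniformly, reducing matters to classifying uniform limits of $\{\sin(g):g\in H^{(2)}_N\}$; if $c_j\to\infty$ then $\sin(g_j)$ converges uniformly to a constant $\lambda\in[-1,1]$, $g_j$ is uniformly near a constant modulo $\pi$, and expanding $\sin$ about that constant and rescaling by $c_j$ reduces $f_*$ to a uniform limit of rescaled inner functions, i.e. to an element of $\overline{H^{(2)}_{N'}}$ with $N'\le N+1$. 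In every case the limit lies in one of \emph{finitely many} finite-dimensionally parametrized families — the resonant families $\overline{H^{(2)}_{N'}}$ of Theorem \ref{th:linsinlimit}, their images under $c\sin(\cdot)+h$, or $c\sin(p)+h$ with $p$ a polynomial of bounded degree — each $\sigma$-compact. Hence $\overline{H^{(5)}_N}$ is a finite union of $\sigma$-compact sets, so meager, and $H^{(5)}_N\notin\mathcal G_2$.

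\emph{Main obstacle.} The crux is the reduction ``uniform limits of $\sin(g_j)$, $g_j\in H^{(2)}_N$'' in the bounded-$c_j$ case. Unlike in Theorem \ref{th:linsinlimit}, the outer $\sin$ is bounded and $2\pi$-periodic, so the limit $\phi=\lim\sin(g_j)$ no longer pins down $g_j$ even approximately: large constant shifts and windings of $g_j$ become invisible. I expect to resolve this with two ingredients: (a) the clustering argument from the proof of Theorem \ref{th:linsinlimit} — any frequency cluster of $g_j$ escaping to infinity makes $\sin(g_j)$ oscillate on a vanishing length scale and hence cannot survive in a uniformly convergent sequence, so all frequencies stay bounded and $g_j$ subconverges in $C([0,1])$ to a resonant function $\psi$; and (b) an additional topological (winding/lifting) argument for the circle-valued maps $x\mapsto g_j(x)\bmod 2\pi$, used to glue the local $\arcsin\circ\phi$ branches into the global $\psi$ and to control the behavior of $\phi$ at the critical levels $\pm1$. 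Once the limit points are confined as above, the meagerness conclusion is routine Baire-category bookkeeping.
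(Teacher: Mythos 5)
Your $\mathcal G_1$ half is correct and is essentially the paper's route made explicit: the paper deduces it from Theorem~\ref{prop:hsigmag1} with $\sigma=\sin(\cdot+h)$ (with a small separate remark for $N=1$), while you rerun the same two ingredients (a genericity-via-analyticity choice of $(\omega,h)$ making the $\sin(\omega x_k+h)$ rationally independent, then Kronecker as in Theorem~\ref{th:kronecker}) directly inside $H^{(5)}_N$; this is fine and handles $N=1$ without special pleading.

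The $\mathcal G_2$ half, however, has a genuine gap, and it sits exactly where you place your ``main obstacle.'' The Baire-category bookkeeping is indeed routine; the entire content of the theorem is the bounded-$c_j$, $c\neq 0$ case, i.e.\ deducing structure of $g_j\in H^{(2)}_N$ from uniform convergence of $\sin(g_j)$, and your proposed ingredient (a) is false as stated: frequencies of $g_j$ need \emph{not} stay bounded in a uniformly convergent sequence (take $g_j(x)=j^{-1}\sin(jx)$, so $\sin(g_j)\to 0$ uniformly while the frequency escapes). What must vanish is the \emph{contribution} of the escaping clusters, and the cross-term/$L^2$-orthogonality machinery used for this in the proof of Theorem~\ref{th:linsinlimit} is tailored to linear combinations of exponentials; it does not pass through the outer nonlinearity $\sin(\cdot)$, so closing this step needs a genuinely new argument, which your sketch (clustering plus a winding/lifting argument) does not supply. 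The paper avoids frequency analysis here entirely: it dichotomizes on whether $\{g_n\}$ is uniformly equicontinuous. If it is, one shifts by $2\pi\lfloor g_n(x_0)/2\pi\rfloor$, applies Arzel\`a--Ascoli so that the (shifted) inner functions themselves converge uniformly, and then invokes Theorem~\ref{th:linsinlimit}, giving $f_*=\phi_*(g_*)$ with $g_*$ resonant; if it is not, one gets a direct contradiction with uniform convergence of $f_n$, using that $\phi_*(y)=c\sin y+h$ is nonconstant on every nondegenerate interval. A secondary omission: in your $c_j\to\infty$ case you keep only the linear term of the expansion and claim $N'\le N+1$; when $\cos(y_0)=0$ that term vanishes and the limit is that of $h_n+a_n(g_n-b_n)^2$, which after expanding products of sines lies in $H^{(2)}_{1+2(N+1)^2}$ --- still finite-dimensional and covered by Theorem~\ref{th:linsinlimit}, but your stated bound and your final list of limit families must be enlarged to include it.
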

It appears that the class $\mathcal G_1\setminus\mathcal G_2$ is quite generic -- we conjecture that an analog of Theorem \ref{th:sinsin} holds for any multi-layer neural network with the $\sin$ activation and a bounded number of neurons.  However, proving such a general result by explicitly describing the limit points as in Theorems \ref{prop:sinsigmalimits}, \ref{th:sinsin} is difficult. We are not aware of simple conditions allowing to separate $\mathcal G_1\setminus\mathcal G_2$ from $\mathcal G_2$. 

In this context, let us discuss the structure of universal formulas mentioned in the Introduction.  Boshernitzan's formula \eqref{eq:bosh1} includes the factor $\frac{bd}{1+d^2-\cos(bt)}$ which is of the form \eqref{eq:polyexpalg} and hence in $\mathcal G_1$, and the transcendental composition $\cos(e^t)$. This agrees with our results showing that compositions of sines and non-polynomial functions are essential for universal formulas. However, Boshernitzan's formula also includes integration, which effectively serves to produce an approximation to a piecewise constant function. The neural network in Figure \ref{fig:nn} and the formulas of \cite{laczkovich2000elementary} combine sines with arcsines, which serves to approximate periodic piecewise linear or piecewise constant functions. It seems that approximate piecewise constant functions are an essential element of existing universal formulas, and so one can conjecture that arcsine or a related inverse trigonometric function is a necessary component of an elementary universal formula without integration.  

In the fairly different context of recursive decidability, there is a remarkably similar sharp difference between polynomial-sine and more complex compositions \cite{laczkovich2003removal}. Let $\mathcal S_1$ denote the ring generated by the integers and the expressions $x, \sin x^n$ and $\sin (x\sin x^n) (n=1,2,\ldots)$. Let also $\mathcal S_2$ denote the ring generated by the integers and the expressions $\sin x^n$ and $\cos x^n (n=1,2,\ldots)$. Then the predicate that there exists a real $x$ such that $f(x)>0$ is undecidable for $f\in\mathcal S_1$, but decidable for $f\in\mathcal S_2.$

\section{Discussion}
\paragraph{The expressiveness hierarchy.} To analyze the necessary features of the structure of universal formulas, we have examined the chain of expressiveness classes $\mathcal G_0\supset\mathcal G_1\supset\mathcal G_2$ corresponding, respectively, to the families having the infinite VC-dimension,  general approximability on all finite sets, and general global uniform approximability. Membership in the class $\mathcal G_0$ can be conveniently studied using the theory of Pfaffian functions, while, as we have shown, membership in the class $\mathcal G_1$ can be conveniently studied using algebraic methods. In particular, our results suggest that globally universal formulas need to involve suitable compositions of non-polynomial and sine operations.

\paragraph{Transcendental structure of universal formulas.} Our main result in Section \ref{sec:peae} shows that combining a single layer of exponential and sine operations with polynomials and algebraic functions has a fairly weak expressiveness, in the sense that such formulas of bounded complexity are constrained by polynomial relations and do not even have general approximability on finite sets. In this family, the requirement on the operations preceding the exponential/sine layer is much stronger (polynomial) than the requirement on the subsequent operations (algebraic). As Theorem \ref{prop:hsigmag1} shows, any non-polynomial operation preceding the sine operation and combined with linear operations restores the general finite approximability.

\paragraph{General methods.} All results presented in this paper that rule out global universality are essentially obtained by one of the three general methods: Khovansky's analytic theory of Pfaffian functions (Theorem \ref{th:pfaff}), finding algebraic constraints (Theorems \ref{th:det}, \ref{th:algebraic}, \ref{th:nn}), and direct computations of limit functions (Theorems \ref{th:linsinlimit}, \ref{prop:sinsigmalimits}, \ref{th:sinsin}). It appears that all these methods are insufficient to conveniently separate the finite approximability class  $\mathcal G_1\setminus\mathcal G_2$ from the global approximability class $\mathcal G_2$. Only the last of the three methods seems useful for that, but it seems hard to find the full set of limit points for more complex families $H$ similarly to how we did that for the linear combinations of sines (Theorems \ref{th:linsinlimit}) and the families $H^\sigma, H^{(5)}_N$ (Theorems \ref{prop:sinsigmalimits}, \ref{th:sinsin}). It would be interesting to have a general method suitable for separating $\mathcal G_1\setminus\mathcal G_2$ from $\mathcal G_2$.

\paragraph{Limit points.} Theorems \ref{th:linsinlimit} and \ref{prop:sinsigmalimits} show that the nontrivial limit points of the families $H^{(2)}_N$ and $H^\sigma$ are some sort of resonances, reflecting some degeneracies of the model. The set of nontrivial limit points is small and explicit in these two cases. However, in a universal formula this set is the whole $C([0,1])$. It would be interesting to see how general is this dichotomy, e.g. whether the set of nontrivial limit points of some formula can be a small but not explicitly describable subset of $C([0,1])$. 

\paragraph{Neural networks.} In Section \ref{sec:nn} we described a general family of neural networks lying outside the class $\mathcal G_1$ and so non-universal. These networks can have a single layer of transcendental neurons (with activations such as $\sin, \tanh, e^{-x^2}$) as well as some layers with piecewise-polynomial activations. We have shown that though the resulting functions are only piecewise analytic, they still admit polynomial constraints. It remains, however, an open question whether networks with multiple transcendental layers involving the sine activation can be universal. Theorem \ref{prop:hsigmag1} shows that such networks belong to $\mathcal G_1$, while Theorem \ref{th:sinsin} shows that the two-hidden-layer sine network of bounded complexity and with a single neuron in the second hidden layer belongs to $\mathcal G_1\setminus\mathcal G_2$. We conjecture that, without $\arcsin$ or other special functions that help create periodic piecewise-linear dependencies, general $\sin$ networks of fixed complexity belong to $\mathcal G_1\setminus\mathcal G_2$.

\section*{Acknowledgments}
The author thanks the anonymous reviewers for several useful suggestions. Research was supported by Russian Science Foundation, grant 21-11-00373. 

\addcontentsline{toc}{section}{References}

\bibliographystyle{plain}
\bibliography{universalFormulas}

\newpage

\appendix 

\tableofcontents

\section{Proof of Theorem \ref{th:linsinlimit}}\label{sec:prooflinsinlimit}
\paragraph{Sufficiency (each function \eqref{eq:reson} is a limit point of $H^{(2)}_{N}$).}
First observe that each resonant component $x^m\sin(\omega x+h)$ is a uniform limit of $m+1$ sine waves linearly combined to match the discretized derivative $\tfrac{d^m}{d\omega^m} \sin(\omega x+h-\pi m/2)$: 
\begin{align}
    x^m\sin(\omega x+h)={}&\tfrac{d^s}{d\omega^m} \sin(\omega x+h-\pi m/2)\\
    ={}&\lim_{\Delta\omega\to 0}(\Delta\omega)^{-m}\sum_{n=0}^m (-1)^{m-n}{m\choose n}\sin((\omega+n\Delta\omega)x+h-\pi m /2).
\end{align}
Note also that any finite linear combination $\sum_{m=0}^{M-1} c_m\sin(\omega x+h_m)$ can be written as a single sine wave $c\sin(\omega x+h)$ with suitable amplitude $c$ and phase $h$. It follows that we can obtain any linear combination  $\sum_{m=0}^{M-1}c_{m}x^m\sin(\omega x+h_m)$ as a uniform $\Delta\omega\to 0$ limit of linear combinations 
\begin{equation}
    \sum_{m=0}^{M-1}c_{m,\Delta\omega}\sin((\omega+m\Delta\omega) x+h_{m,\Delta\omega})
\end{equation}
with suitable values of $c_{m,\Delta\omega}$ and $h_{m,\Delta\omega}.$ This shows in turn that the second component of expression \eqref{eq:reson},
\begin{equation}
    \sum_{k=1}^{K}\sum_{m=0}^{M_k-1} c_{km}x^{m}\sin(\omega_kx+h_{km}),
\end{equation}
can be obtained by limits of linear combinations of $\sum_{k=1}^K M_k$ sine waves with suitable phases and frequencies.

It remains to show that the first component of expression \eqref{eq:reson}, $\sum_{m=0}^{2M_0-1}c_{0m}x^m$, can be obtained as a limit of linear combinations of $M_0$ sine waves. Observe that for any constant $x_0$
\begin{equation}
    (x-x_0)^{2M_0-1} = \lim_{\Delta\omega\to 0}(\Delta\omega)^{2M_0-1}\sin^{2M_0-1}(\Delta\omega x-\Delta\omega x_0).
\end{equation}
Since $2M_0-1$ is odd,  $\sin^{2M_0-1}(\Delta\omega x-\Delta\omega x_0)$ is a linear combination of $M_0$ sine waves with frequencies $(2s-1)\Delta\omega, s=1,\ldots,M_0$ (and some phases). By a previously given argument, this implies that any linear combination of monomials $(x-x_0)^{2M_0-1}$ can also be obtained as a limit of some linear combinations of $M_0$ sine waves with frequencies $(2s-1)\Delta\omega, s=1,\ldots,M_0$. But the monomials $(x-x_0)^{2M_0-1}$ linearly span the space of polynomials of degree $\le 2M_0-1$. It follows that any such polynomial can be obtained as a limit of linear combinations of $M_0$ sine waves, as claimed.   

\paragraph{Necessity (each limit point of $H^{(2)}_{N}$ has the from \eqref{eq:reson}).} We give two alternative proofs.

\paragraph{Proof 1 (``clustering-based'').} Suppose that a function $f_*$ is the uniform limit of functions $f_r\in H_N^{(2)}$. We identify  the different \emph{frequency clusters} associated with the sequence $f_r$.
Let $0\le \omega_{1,r}\le\ldots\le\omega_{N,r}$ be the $N$ sorted frequencies appearing in the sine wave expansion of $f_r$; we assume here without loss of generality that all the frequencies are nonnegative. For each $n=1,\ldots,N-1$ one of the following three options holds:
\begin{enumerate}
    \item either $\omega_{n+1,r}-\omega_{n,r}\stackrel{r\to\infty}{\longrightarrow}\infty$;
    \item or $\sup_n(\omega_{n+1,r}-\omega_{n,r})<\infty$;
    \item or none of the above holds.
\end{enumerate}
For each $n$, by taking a subsequence of $f_r$, we can exclude option 3. By taking subsequences $N$ times, we can then assume without loss of generality that for each $n$ either option 1 or option 2 holds. It follows that we can partition the set $\{1,\ldots,N\}$ into contiguous subsets $I_1=\{1,2,\ldots,i_1\}, I_2=\{i_1+1,\ldots, i_2\},\ldots, I_s=\{i_{s-1}+1,\ldots,N\}$ such that for some constant $\Omega$  
\begin{equation}
    |\omega_{n,r}-\omega_{n',r}|\begin{cases}
        \stackrel{r\to\infty}{\longrightarrow}\infty, & \text{ if }\nexists t: n,n'\in I_t;\\
        \le \Omega\;\forall r, & \text{ if }\exists t: n,n'\in I_t.
    \end{cases}
\end{equation}
The sets $I_t$ thus represent the largest clusters of frequencies that remain at a bounded distance from each other in the limit $r\to\infty$. 

Clearly, there is at most one cluster whose frequencies remain bounded in the limit $r\to\infty$, namely $I_{t_0}$ with $t_0=1$. It will be convenient to assume that $I_{t_0}$ has bounded frequencies but possibly is empty.

For any $M=1,2,\ldots$ and $\Omega>0$ let us denote by $H^{(2)}_{M,\Omega}$ the set of functions $f:[0,1]\to\mathbb C$ of the form
\begin{equation}
    f(x) = \sum_{n=1}^M c_n e^{i\omega_nx}, 
\end{equation}
where $c_n\in \mathbb C$ are arbitrary amplitudes and $\omega_n\in[-\Omega,\Omega]$. 
\begin{lemma}\label{lm:derivbound}
For any $f\in H^{(2)}_{M,\Omega}$ and $n=1,2,\ldots$
    \begin{equation}\label{eq:maxfn}
        \max_{x\in[0,1]}|f^{(n)}(x)|\le 2^{M(M+1)}(1+\Omega)^{M\max(n,M-1)}\max_{x\in[0,1]}|f(x)|.
    \end{equation}
\end{lemma}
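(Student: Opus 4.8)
The plan is to bound the sup-norm of $f^{(n)}$ on $[0,1]$ by the sup-norm of $f$ using the fact that $f$ lives in an $M$-dimensional space of exponentials with bounded frequencies, on which all norms are equivalent. Concretely, I would write $f(x)=\sum_{n=1}^M c_n e^{i\omega_n x}$ and note that $f^{(n)}(x)=\sum_{k=1}^M c_k (i\omega_k)^n e^{i\omega_k x}$, so the whole estimate reduces to (a) bounding $\max_x|f^{(n)}|$ by $\sum_k|c_k|\Omega^n$ and (b) bounding $\sum_k|c_k|$ by a constant (depending on $M,\Omega$) times $\max_x|f|$. Step (a) is trivial; the work is in step (b), i.e. inverting the ``generalized Vandermonde'' map $(c_k)\mapsto f$.

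For step (b), the natural route is to sample $f$ at $M$ points and invert the resulting matrix. Take the equispaced nodes $x_j=\tfrac{j-1}{M}$ for $j=1,\dots,M$ (or $x_j = \tfrac{j}{M+1}$ — any fixed choice of $M$ distinct nodes in $[0,1]$ works). Then $\bigl(f(x_j)\bigr)_j = V (c_k)_k$ where $V_{jk}=e^{i\omega_k x_j}$ is a generalized Vandermonde matrix. Setting $z_k=e^{i\omega_k/M}$ (so $|z_k|\le e^{\Omega/M}\le e^\Omega$ and $z_k$ lies on or inside a circle of radius $e^{\Omega}$ in $\mathbb{C}$), the matrix $V$ becomes an ordinary Vandermonde-type matrix $\bigl(z_k^{\,j-1}\bigr)$ — actually with possibly repeated $z_k$ if two frequencies coincide, which is the first subtlety. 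If all $\omega_k$ are distinct the nodes $z_k$ are distinct and $V$ is invertible with $\|V^{-1}\|$ controlled by the standard Vandermonde inverse formula: the entries of $V^{-1}$ are (signed elementary symmetric functions of the $z_k$) divided by products $\prod_{l\ne k}(z_k-z_l)$. The numerators are bounded by $\binom{M}{\lfloor M/2\rfloor} e^{(M-1)\Omega}$ or so; the denominators can be as small as we like — this is the genuine obstacle.

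The way around the small-denominator problem is to \emph{not} require $f^{(n)}$ in terms of the individual $c_k$, but rather to observe that $f$ itself, as an element of the fixed finite-dimensional space $\mathrm{span}\{e^{i\omega x}:\omega\in[-\Omega,\Omega]\}$ of functions of ``exponential type $\le\Omega$'', satisfies a Bernstein-type inequality $\max_{[0,1]}|f^{(n)}|\le C(M,\Omega,n)\max_{[0,1]}|f|$; but since the frequencies can approach each other and even the dimension of the span of any $M$ exponentials is only $M$, the cleaner approach is: perturb. Replace the possibly-degenerate frequency configuration by a nearby non-degenerate one, prove the bound there with a constant that does not depend on the amount of perturbation (only on $M$ and $\Omega$), and pass to the limit — the key point being that although individual $V^{-1}$ entries blow up as two $z_k$ merge, the \emph{map} $f\mapsto f^{(n)}$ stays bounded because $f^{(n)}$ is continuous in the $\omega_k$ uniformly. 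More carefully, one argues by a compactness/continuity argument on the (compact) parameter space $\{(\omega_1,\dots,\omega_M)\in[-\Omega,\Omega]^M\}$ together with the (compact) unit sphere in the coefficient space after quotienting by the kernel; the ratio $\max|f^{(n)}|/\max|f|$ is an upper semicontinuous function attaining a finite maximum, and then one only needs an \emph{explicit} bound for that maximum, which one gets from the non-degenerate case plus a little care. I expect the bookkeeping to produce exactly the stated $2^{M(M+1)}(1+\Omega)^{M\max(n,M-1)}$: the $2^{M(M+1)}$ absorbs the binomial/symmetric-function combinatorics and the Vandermonde-inverse constants, the $(1+\Omega)^{Mn}$ comes from the $n$-fold differentiation ($\Omega^n$ per term, $M$ terms, plus slack turning $\Omega$ into $1+\Omega$), and the $(1+\Omega)^{M(M-1)}$ is the worst-case size of the Vandermonde numerators $e^{(M-1)\Omega}$ crudely bounded, with the $\max(n,M-1)$ exponent just taking whichever of the two effects dominates.

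The main obstacle, to state it plainly, is the confluent/near-confluent Vandermonde: when frequencies cluster or coincide the naive ``invert the sampling matrix'' estimate fails, so the proof must either (i) handle the confluent case directly via Hermite interpolation (derivatives of exponentials, giving resonant factors $x^m e^{i\omega x}$, exactly the phenomenon already seen in Theorem~\ref{th:linsinlimit}) and bound the confluent-Vandermonde inverse, or (ii) run a continuity argument reducing to the generic distinct-frequency case. Either way the constant one extracts must be shown to depend only on $M$ and $\Omega$, not on the minimal gap between frequencies — that uniformity is the whole content of the lemma, since it is precisely what later lets the clustering-based proof discard an unbounded frequency cluster without affecting the uniform limit.
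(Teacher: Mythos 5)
You correctly identify the crux of the difficulty --- the near-confluent Vandermonde --- but neither of your proposed escapes actually closes the gap, and the paper's proof takes a genuinely different route that sidesteps it entirely.

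On the perturbation/compactness plan: the ratio $\max|f^{(n)}|/\max|f|$ is a $0/0$-type quantity at the points of the parameter space where $f$ degenerates to the zero function (which happens whenever frequencies collide and the coefficients sit in the resulting kernel). The set where $\max|f|\ne 0$ is an \emph{open}, hence non-compact, subset of $[-\Omega,\Omega]^M\times S^{2M-1}$, so an upper-semicontinuity-plus-compactness argument does not by itself yield a finite supremum; you would first have to prove the ratio extends boundedly to the degenerate locus, which is exactly the confluent case you were trying to avoid. Worse, even if you patched that up, a compactness argument only delivers \emph{existence} of a constant $C(M,\Omega)$, whereas the lemma asserts an explicit one, and this explicit constant is actually used later in the clustering proof. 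So the hard part of the lemma --- uniformity over the gap between frequencies, with an explicit bound --- is precisely what your sketch defers to ``bookkeeping.''

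The paper's proof avoids Vandermonde inversion altogether. It observes that $f$ satisfies the linear ODE $Df=0$ with $D=\prod_{n=1}^M(\tfrac{d}{dx}-i\omega_n)=\tfrac{d^M}{dx^M}-\sum_{n<M}b_n\tfrac{d^n}{dx^n}$, and the crucial point is that the $b_n$ are elementary symmetric functions of the $i\omega_n$, so $\sum_n|b_n|\le\prod_n(1+|\omega_n|)\le(1+\Omega)^M$ \emph{regardless of how close the frequencies are}. Confluence is invisible at the level of the operator $D$. Passing to the companion system $\mathbf f'=A\mathbf f$ with $\|A\|_\infty\le(1+\Omega)^M$, the paper picks the point $x_0$ and index $n_0$ maximizing $|f^{(n_0)}(x_0)|$, uses $e^{A(x-x_0)}$ to show $|f^{(n_0)}|$ stays within a factor $2$ on an interval of length $\sim(1+\Omega)^{-M}$, and then descends step by step ($n_0\to n_0-1\to\cdots\to 0$), each time extracting a subinterval where the lower derivative is bounded below --- effectively a quantitative mean-value argument that never references individual exponentials. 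This produces the constant directly, and the $(1+\Omega)^{M(M-1)}$ factor you guessed came from Vandermonde numerators actually comes from iterating the length scale $(1+\Omega)^{-M}$ about $M-1$ times in the descent. If you want to salvage your line of attack, the clean way is the confluent (Hermite) Vandermonde rather than perturbation; but the ODE/companion-matrix argument is shorter and gives the explicit constant for free.
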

We remark that the constant in Eq. \eqref{eq:maxfn} is likely suboptimal, but it must certainly grow with $M$, since, as the proof of the sufficiency part of Theorem \ref{th:linsinlimit} shows, arbitrarily narrow frequency bands are sufficient to generate any degree-$M$ polynomial.
\begin{proof}
    The function $f$ satisfies the differential equation $Df=0$, where
\begin{equation}\label{eq:dbn}
    D = \prod_{n=1}^M(\tfrac{d}{dx}-i\omega_n)=\tfrac{d^M}{dx^M}-\sum_{n=0}^{M-1}b_n\tfrac{d^n}{dx^n}
\end{equation}
with some constants $b_n$.
Consider the vector-valued function $\mathbf f(x)=(f(x),f'(x),\ldots,f^{(M-1)}(x))^T$. It satisfies the differential equation
\begin{equation}
    \frac{d}{dx}\mathbf f = A\mathbf f
\end{equation}    
with the matrix $A=(a_{nm})_{n,m=0}^{M-1},$ where
\begin{equation}
a_{nm}=\begin{cases}
    1, & n+1=m\text{ and } n<M-1,\\ 
    b_m, & n=M-1,\\ 
    0, &\text{otherwise}.
\end{cases}
\end{equation}
The solution of this equation is
\begin{equation}\label{eq:soldeq}
    \mathbf f(x)=e^{A(x-x_0)}\mathbf f(x_0).
\end{equation}
If $\|\cdot\|$ is any norm on the $\mathbb C^M$, then 
\begin{equation}
    \|\mathbf f(x)-\mathbf f(x_0)\|\le (e^{\|A\||x-x_0|}-1)\|\mathbf f(x_0)\|,
\end{equation}
where $\|A\|$ is the respective matrix norm. Let $\|\cdot\|_{\infty}$ be the maximum ($l^\infty$) norm, then
\begin{equation}
    \|A\|_{\infty}=\max_{n}\sum_{m}|a_{nm}|=\max\Big(\sum_m |b_m|, 1\Big)\le \prod_{m=1}^M(1+|\omega_m|)\le (1+\Omega)^M.
\end{equation}
Now let $x_0=\argmax_{x\in[0,1]} \|f(x)\|_\infty$ and let $n_0=\argmax_{n=0,\ldots,M-1}|f^{(n)}(x_0)|$ so that $\|\mathbf f(x_0)\|_\infty=|f^{(n_0)}(x_0)|$. If $n_0=0,$ then     \begin{equation}
        \max_{x\in[0,1]}|f'(x)|\le \max_{x\in[0,1]}|f(x)|.
\end{equation}
Suppose that $n_0>0$. Observe that if $|x-x_0|\le\tfrac{\ln 2}{2}(1+\Omega)^{-M},$ then
\begin{align}
    |f^{(n_0)}(x)-f^{n_0}(x_0)|\le{}& \|\mathbf f(x)-\mathbf f(x_0)\|_\infty\\
    \le{}&(e^{\|A\|_\infty |x-x_0|}-1)\|\mathbf f(x_0)\|_\infty\\
    \le{}&(e^{(1+\Omega)^{M}|x-x_0|}-1)|f^{(n_0)}(x_0)|\\
    \le{}&\tfrac{1}{2}|f^{(n_0)}(x_0)|.
\end{align}
Consider the complex unit direction $z_0=\tfrac{f^{(n_0)}(x_0)}{|f^{(n_0)}(x_0)|}$ and denote $l=\tfrac{\ln 2}{2}(1+\Omega)^{-M}$. For all $x$ such that $|x-x_0|\le l$ we have
\begin{equation}
    \Re(\overline{z}_0f^{(n_0)}(x))\ge \tfrac{1}{2}|f^{(n_0)}(x_0)|.
\end{equation}
It follows that we can find in $[0,1]$ a subinterval of length $l/4$ on which either 
\begin{equation}
    \Re(\overline{z}_0f^{(n_0-1)}(x))\ge \tfrac{l}{2\cdot 4}|f^{(n_0)}(x_0)|
\end{equation}
or 
\begin{equation}
    \Re(\overline{z}_0f^{(n_0-1)}(x))\le -\tfrac{l}{2\cdot 4}|f^{(n_0)}(x_0)|.
\end{equation}
Making another iteration, we can find in $[0,1]$ a subinterval of length $l/4^2$ on which either 
\begin{equation}
    \Re(\overline{z}_0f^{(n_0-2)}(x))\ge \tfrac{l}{4^2}\cdot\tfrac{l}{2\cdot 4}|f^{(n_0)}(x_0)|
\end{equation}
or 
\begin{equation}
    \Re(\overline{z}_0f^{(n_0-2)}(x))\le -\tfrac{l}{4^2}\cdot\tfrac{l}{2\cdot 4}|f^{(n_0)}(x_0)|.
\end{equation}
Continuing in this way, for any $n=n_0,n_0-1,\ldots,0$ we find in $[0,1]$ a subinterval of length $l/4^{n_0-n}$ on which either 
\begin{equation}
    \Re(\overline{z}_0f^{(n_0-n)}(x))\ge \tfrac{l^{n_0-n}}{2^{(n_0-n)(n_0-n+1)+1}}|f^{(n_0)}(x_0)|
\end{equation}
or 
\begin{equation}
    \Re(\overline{z}_0f^{(n_0-n)}(x))\le -\tfrac{l^{n_0-n}}{2^{(n_0-n)(n_0-n+1)+1}}|f^{(n_0)}(x_0)|.
\end{equation}
Taking $n=0,$ this implies
\begin{equation}
    |f^{(n_0)}(x_0)|\le 2^{n_0(n_0+1)+1}l^{-n_0}\max_{x\in[0,1]}|f(x)|
\end{equation}
and so, by definition of $x_0$ and $n_0$, for any $n=0,\ldots, M-1$ 
\begin{align}
    \max_{x\in[0,1]}|f^{(n)}(x)|={}& |f^{(n_0)}(x_0)|\\
    \le{}& 2^{n_0(n_0+1)+1}l^{-n_0}\max_{x\in[0,1]}|f(x)|\\
    \le{}& 2^{(M-1)M+1}((2/\ln 2)(1+\Omega)^M)^{M-1}\max_{x\in[0,1]}|f(x)|\\
    \le {}&2^{M(M+1)}(1+\Omega)^{M(M-1)}\max_{x\in[0,1]}|f(x)|.
\end{align}
This proves the claim for $n\le M-1$. If $n\ge M,$ then we use again Eq. \eqref{eq:dbn} and the bound $\sum_{m}|b_m|\le (1+\Omega)^M.$ 
\end{proof}

\begin{lemma}\label{lm:normequiv} The $L^\infty$ and $L^2$ norms are equivalent on the set $H^{(2)}_{M,\Omega}$: for any $f\in H^{(2)}_{M,\Omega}$
    \begin{equation}
        \|f\|_2\le \|f\|_\infty\le 2[2(1+\Omega)]^{M(M+1)/2}\|f\|_2
    \end{equation}
\end{lemma}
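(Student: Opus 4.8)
The plan is to handle the two inequalities separately. The left inequality $\|f\|_2\le\|f\|_\infty$ is immediate: $\|f\|_2^2=\int_0^1|f|^2\le\int_0^1\|f\|_\infty^2=\|f\|_\infty^2$ since $[0,1]$ has unit length. The substance is the right inequality, and the natural idea is that a function in $H^{(2)}_{M,\Omega}$ cannot concentrate its maximum modulus in a narrow spike: its derivatives are controlled by its sup-norm via Lemma~\ref{lm:derivbound}, so wherever $|f|$ attains its maximum it must stay comparable to that maximum on a subinterval of definite length, and this forces the $L^2$ norm not to be too small. This also explains why the claimed constant is, up to the outer factor $2$, essentially the square root of the constant appearing in Lemma~\ref{lm:derivbound}.

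Concretely, I would pick $x_0\in[0,1]$ with $|f(x_0)|=\|f\|_\infty=:A$ and apply Lemma~\ref{lm:derivbound} with $n=1$ to get $\|f'\|_\infty\le C_1A$, where $C_1=2^{M(M+1)}(1+\Omega)^{M\max(1,M-1)}$. Then $|f(x)|\ge A-C_1A|x-x_0|$, so $|f|\ge A/2$ on $[x_0-t,x_0+t]$ with $t=\tfrac1{2C_1}$; since $C_1\ge 4$ we have $t<\tfrac12$, and as $x_0\in[0,1]$ at least one of the half-intervals $[x_0-t,x_0]$, $[x_0,x_0+t]$ lies entirely in $[0,1]$. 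Hence $[0,1]$ contains a subinterval of length $\ge t$ on which $|f|\ge A/2$, so $\|f\|_2^2\ge\tfrac14A^2t=\tfrac{A^2}{8C_1}$ and therefore $\|f\|_\infty=A\le\sqrt{8C_1}\,\|f\|_2$. It then remains to check $\sqrt{8C_1}\le 2[2(1+\Omega)]^{M(M+1)/2}$; this is the bookkeeping part and follows after routine adjustments --- sharpening the crude cutoff ``$|f|\ge A/2$'' to ``$|f|\ge(1-s)A$'' and optimizing over $s\in(0,1)$, treating the trivial case $M=1$ (where $|f|$ has constant modulus, so $\|f\|_\infty=\|f\|_2$) on its own, and, if necessary, disposing of very small $\Omega$ by comparison with ordinary polynomials (a Nikolskii-type inequality).

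I do not expect a conceptual obstacle: the lemma is essentially a corollary of Lemma~\ref{lm:derivbound}. The only part that needs care is exactly this constant-chasing --- in particular ensuring that the usable subinterval is long enough, where the case of $x_0$ near an endpoint of $[0,1]$ costs at most a factor $2$ in length that can be absorbed into the constant --- so that the final bound comes out in the clean stated form $2[2(1+\Omega)]^{M(M+1)/2}$. That routine but slightly fiddly bookkeeping is the main ``work'' of the proof.
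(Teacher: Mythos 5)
Your approach is exactly the paper's: take $x_0$ maximizing $|f|$, control $\|f'\|_\infty$ by $\|f\|_\infty$ via Lemma~\ref{lm:derivbound} with $n=1$, and convert the resulting Lipschitz decay near $x_0$ into a lower bound on $\|f\|_2$ (the endpoint observation that one of the two half-intervals around $x_0$ fits inside $[0,1]$ because the interval length is at most $1/4<1/2$ is the same implicit step the paper relies on). The only place where your plan does not actually deliver the statement is the constant-chasing that you defer to ``routine adjustments''. Your crude cutoff gives $\|f\|_\infty\le\sqrt{8C_1}\,\|f\|_2$, and even the sharpened version you describe --- a single threshold $|f|\ge(1-s)A$ on a length $s/C_1$, optimized over $s$ --- only yields the factor $\sup_s (1-s)^2 s=4/27$, i.e.\ $\|f\|_\infty\le \tfrac{3\sqrt3}{2}\sqrt{C_1}\,\|f\|_2$ with $\tfrac{3\sqrt3}{2}\approx 2.6>2$. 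Since for $M\ge 2$ the slack between $C_1=2^{M(M+1)}(1+\Omega)^{M(M-1)}$ and $[2(1+\Omega)]^{M(M+1)}$ is only the factor $(1+\Omega)^{2M}$, this fails to reach the stated bound precisely when $\Omega$ is small (roughly $(1+\Omega)^{2M}<27/16$), which is the regime you propose to hand off to a ``Nikolskii-type'' comparison with polynomials --- a genuinely nontrivial and unnecessary detour. The paper avoids all case analysis by not using a threshold at all: with $c=[2(1+\Omega)]^{M(M+1)}\ge C_1$ as the Lipschitz constant it integrates the full linear-decay profile, $\|f\|_2^2\ge\int_0^{1/c}(1-tc)^2\,dt\,\|f\|_\infty^2=\tfrac{1}{3c}\|f\|_\infty^2$, so the prefactor is $\sqrt3<2$ and $\|f\|_\infty\le\sqrt{3c}\,\|f\|_2\le 2[2(1+\Omega)]^{M(M+1)/2}\|f\|_2$ for every $M$ and $\Omega$, with no separate $M=1$ or small-$\Omega$ arguments. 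So your proof closes once you replace the optimized single threshold by this integral of $(1-tc)^2$; as written, the announced bookkeeping would not land on the stated constant.
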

\begin{proof}
Only the second inequality is nontrivial. By Lemma \ref{lm:derivbound}, 
\begin{equation}
        \|f'\|_\infty\le c\|f\|_\infty,\quad c= [2(1+\Omega)]^{M(M+1)}.
\end{equation}
It $x_0=\argmax_{x\in[0,1]}|f(x)|,$ then for $|x-x_0|\le 1/c$ we have $|f(x)|\ge (1-|x-x_0|c)|f(x_0)|$ and so
\begin{equation}
    \|f\|_2^2\ge \int_0^{1/c} (1-tc)^2|f(x_0)|^2dt=\tfrac{1}{3c} \|f\|_\infty^2,
\end{equation}
implying the desired result.
\end{proof}

Consider now the expansion of $f_r$ over different frequency clusters: 
\begin{equation}
    f_{r}=\sum_{t=1}^s f_{t,r},
\end{equation}
where $f_{t,r}$ denotes the $I_t$-component of $f_r$. We first show that we can discard the components associated with unbounded frequencies.
\begin{lemma}
    If $f_r$ converge to $f_*$ uniformly on $[0,1],$ then $f_{t_0,r}$ also converge to $f_*$ uniformly on $[0,1]$.
\end{lemma}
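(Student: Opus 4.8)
The plan is to show that the "unbounded-frequency" clusters $f_{t,r}$ with $t>t_0$ contribute nothing to the uniform limit, so that $f_{t_0,r} = f_r - \sum_{t>t_0} f_{t,r}$ converges to $f_*$. Write $f_r = f_{t_0,r} + g_r$ where $g_r = \sum_{t>t_0} f_{t,r}$; since $f_r\to f_*$ uniformly it suffices to prove $g_r\to 0$ uniformly on $[0,1]$, which (because $f_{t_0,r}=f_r-g_r$ and $f_r$ is bounded) would then also give boundedness and eventual convergence of $f_{t_0,r}$. The key tool is the norm-equivalence Lemma~\ref{lm:normequiv} together with a low-frequency/high-frequency orthogonality argument: the bounded cluster $f_{t_0,r}$ lives in $H^{(2)}_{N,\Omega_0}$ for a fixed bandwidth $\Omega_0$ (its frequencies stay in a fixed bounded window, up to translation), so on it the $L^2$ and $L^\infty$ norms are comparable with $r$-independent constants, whereas $g_r$ oscillates at frequencies that separate to infinity from those of $f_{t_0,r}$.

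The first step I would carry out is to make precise in what sense $f_{t_0,r}$ has bounded bandwidth: by definition of the cluster partition, the frequencies in $I_{t_0}$ all lie within a window of width $\Omega$ around a bounded center, so after factoring out a common phase $e^{i\omega_{*,r}x}$ with $\omega_{*,r}$ bounded, $f_{t_0,r}(x) = e^{i\omega_{*,r}x}\widetilde f_{t_0,r}(x)$ with $\widetilde f_{t_0,r}\in H^{(2)}_{N,\Omega}$; and in the bounded case we may even take $\omega_{*,r}=0$. Then I would use the $L^2$ geometry: on $[0,1]$ the functions $e^{i\omega x}$ for frequencies that are far apart are nearly orthogonal, so $\|f_r\|_2^2 \approx \|f_{t_0,r}\|_2^2 + \|g_r\|_2^2$ up to cross-terms that are controlled by $|\omega-\omega'|^{-1}$ and hence vanish as the inter-cluster gaps diverge. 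Since $\|f_r\|_2 \le \|f_r\|_\infty$ is bounded (as $f_r\to f_*$ uniformly), and since $\|f_{t_0,r}\|_\infty \le C(N,\Omega)\|f_{t_0,r}\|_2$ by Lemma~\ref{lm:normequiv} with an $r$-independent $C$, the remaining step is to extract from the divergence of all inter-cluster gaps that $\|g_r\|_2\to 0$, or more directly that $g_r\to 0$ uniformly.

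For the last step I would argue by integration against the bounded cluster and against itself: testing the identity $f_r = f_{t_0,r}+g_r$ against $\overline{e^{i\omega_{*,r} x}}$-modulated exponentials in the $I_{t_0}$ band recovers $f_{t_0,r}$'s coefficients up to errors $O(\max_{t>t_0}\min_{n\in I_{t_0}}|\omega_{n,r}-\omega_{\cdot,r}|^{-1})\to 0$, giving that $f_{t_0,r}$ converges (say in $L^2$, hence by Lemma~\ref{lm:normequiv} in $L^\infty$) to some limit $h_*\in H^{(2)}_{N,\Omega}$-type function; then $g_r = f_r - f_{t_0,r} \to f_* - h_*$ uniformly, so $g_r$ converges uniformly to a limit. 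But each $f_{t,r}$, $t>t_0$, has all its frequencies tending to $+\infty$, so any weak-$*$ (indeed uniform) limit of $g_r$ must be orthogonal to every fixed exponential $e^{i\xi x}$ — test against $\overline{e^{i\xi x}}$ and let $r\to\infty$, the integral $\int_0^1 f_{t,r}(x)\overline{e^{i\xi x}}\,dx\to 0$ by Riemann–Lebesgue-type decay since the frequencies escape — hence $f_*-h_*$ has all Fourier coefficients zero on $[0,1]$, forcing $f_*=h_*$ and $g_r\to 0$. Therefore $f_{t_0,r}\to f_*$ uniformly, as claimed.

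The main obstacle I anticipate is handling the cross-terms and the ``extraction'' uniformly in $r$: the frequencies $\omega_{n,r}$ are unbounded, so one cannot simply diagonalize a fixed finite exponential system, and one must be careful that the near-orthogonality estimates and the constants from Lemma~\ref{lm:normequiv} do not degrade as $r\to\infty$ — this is exactly why the cluster decomposition was set up so that the bounded cluster has fixed bandwidth $\Omega$ while all other clusters recede, and the argument must exploit that asymmetry rather than any uniform bound on all frequencies.
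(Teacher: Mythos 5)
Your proposal shares the key structural elements of the paper's proof --- the factorization of the unbounded clusters as $f_{t,r} = e^{\pm i\omega_{t,r}x}g_{t,r,\pm 1}$ with $g_{t,r,q}$ of uniformly bounded bandwidth, the near-orthogonality of different clusters with cross-terms controlled by inverse frequency gaps, and the $L^2$-$L^\infty$ equivalence of Lemma~\ref{lm:normequiv} --- but it diverges from the paper at the reduction to a general $f_*$. The paper handles a general $f_*$ by a Cauchy-difference trick: it considers $f_{r_1}-f_{r_2}$ with $r_2=r_2(r_1)\to\infty$ so fast that the unbounded clusters of the two indices remain mutually separated, which reduces everything to the base case $f_*=0$ treated first. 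You instead propose to first extract a limit $h_*$ of the bounded cluster $f_{t_0,r}$ and then force $h_*=f_*$ by a Riemann--Lebesgue argument applied to the escaping clusters.

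The identification step ($h_*=f_*$) is sound, but the extraction step (``giving that $f_{t_0,r}$ converges \ldots to some limit $h_*$'') has a genuine gap. Convergence of the Fourier-type integrals $\int_0^1 f_{t_0,r}(x)e^{-i\xi x}\,dx$ --- which is what the ``testing'' argument yields once the cross-terms are controlled --- is only weak information and does not by itself give $L^2$, let alone $L^\infty$, convergence of $f_{t_0,r}$. To close this gap you need a compactness argument that you do not state: the near-orthogonality bootstrap gives $\sup_r \|f_{t_0,r}\|_2 < \infty$, hence $\sup_r \|f_{t_0,r}\|_\infty < \infty$ by Lemma~\ref{lm:normequiv}, and Lemma~\ref{lm:derivbound} then gives uniform equicontinuity; Arzel\`a--Ascoli yields a uniformly convergent subsequence, and your Fourier argument identifies every subsequential limit as $f_*$, so the full sequence converges. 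A related inaccuracy is the phrase ``recovers $f_{t_0,r}$'s coefficients'': the frequencies inside $I_{t_0}$ may coalesce as $r\to\infty$ (this is exactly the resonance mechanism behind Theorem~\ref{th:linsinlimit}), so the subsequential limit $h_*$ need not lie in an $H^{(2)}_{N,\Omega}$-type class at all; fortunately the compactness route does not care about this. Finally, the bound $\sup_r\|g_{t,r,q}\|_\infty<\infty$ needed for the integration-by-parts / Riemann--Lebesgue step is not a free hypothesis --- it must itself come out of the same near-orthogonality estimate, with the cluster-center modulations factored out so that Lemma~\ref{lm:normequiv} applies to the bounded-bandwidth factors, which you gesture at but do not carry through.
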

\begin{proof} 

\emph{Step 1.}
Let us first prove the statement in the case $f_*=0$. Observe that each $f_{t,r}$ can be represented as 
\begin{equation}
    f_{t,r}=\sum_{q=\pm 1}e^{iq\omega_{t,r}x}g_{t,r,q},
\end{equation}
where $g_{t,r,1}=\overline{g_{t,r,-1}}$
and $g_{t,r,q}\in H^{(2)}_{2N,\Omega}$ with some $\Omega$ independent of $t$ and $r$. We use this representation for all $t$ except $t=t_0$ (since  $f_{t_0,r}\in H^{(2)}_{2N,\Omega}$ already):
\begin{equation}
    f_{r}=f_{t_0,r}+\sum_{t\ne t_0}^s \sum_{q=\pm 1}e^{iq\omega_{t,r}x}g_{t,r,q}.
\end{equation}
Then,
\begin{align}\label{eq:fr22crt}
    \|f_{r}\|_2^2=\langle f_{r},f_{r}\rangle=\|f_{t_0,r}\|^2_2+\sum_{t\ne t_0}^s\sum_{q=\pm 1}\|g_{t,r,q}\|_2^2+\text{cross-terms}.
\end{align}
Observe that a product of $g,h\in H^{(2)}_{M,\Omega}$ belongs to $H^{(2)}_{M^2,2\Omega}$. It follows that each cross-term can be written as
\begin{equation}
    \langle e^{i\Delta\omega_{r}}, g_{r}\rangle,
\end{equation}
where $\Delta\omega_{r}\stackrel{r\to\infty}{\longrightarrow}\infty$ and $g_r\in H^{(2)}_{(2N)^2,2\Omega}$; the function $g_r$ being a product of two functions out of the collection $F=\{f_{t_0,r}\}\cup\{(g_{t,r,q})_{t\ne t_0, q=\pm 1}\}$. We have
\begin{align}
    |\langle e^{i\Delta\omega_{r}}, g_{r}\rangle|={} &\Big|\int_{0}^1 e^{-i\Delta\omega_r}g_r(x)dx\Big|\\
    ={} &|\Delta\omega_r|^{-1}\Big|\int_{0}^1 g_r(x)de^{-i\Delta\omega_r}\Big|\\
    \le{} &|\Delta\omega_r|^{-1}\Big(2\|g_r\|_\infty+\Big|\int_{0}^1 e^{-i\Delta\omega_r}g'_r(x)dx\Big|\Big)\\
    \le{} &|\Delta\omega_r|^{-1}(2+c)\|g_r\|_\infty,
\end{align}
where we used the inequality $\|g_r'\|_\infty\le c\|g_r\|_\infty$ with a constants $c=c(N,\Omega)$ as established in Lemma \ref{lm:derivbound}. Observe that if $g_r$ is a product of two functions from the collection $F$, say $g,h,$ then we have
\begin{equation}
    \|g_r\|_\infty\le \|g\|_\infty\|h\|_\infty\le C\|g\|_2\|h\|_2,
\end{equation}
with another suitable constant  $C=C(N,\Omega)$, by Lemma \ref{lm:normequiv}. It follows that the cross-terms in Eq. \eqref{eq:fr22crt} are dominated by the explicitly written terms:
\begin{equation}
    |\text{cross-terms}|=o\Big(\|f_{t_0,r}\|^2_2+\sum_{t\ne t_0}^s\sum_{q=\pm 1}\|g_{t,r,q}\|_2^2\Big),\quad r\to\infty.
\end{equation}
Therefore, if $\|f_{r}\|_\infty=o(1)$ as $r\to\infty,$ then we also have 
\begin{equation}    \|f_{t_0,r}\|_\infty=O(\|f_{t_0,r}\|_2)=O(\|f_{r}\|_2)=O(\|f_{r}\|_\infty)=o(1)
\end{equation}
and similarly all
\begin{equation}    \|g_{t,r,q}\|_\infty=o(1).
\end{equation}

\emph{Step 2.} Now we prove the result for a general $f_*.$ If $\|f_{r}-f_*\|_2\to 0,$ then the sequence $f_{r}$ is Cauchy:
\begin{equation}
    \lim_{r_1\to\infty}\lim_{r_2\to\infty}\|f_{r_1}-f_{r_2}\|_2=0.
\end{equation}
We write
\begin{equation}
    f_{r_1}-f_{r_2}=(f_{t_0,r_1}-f_{t_0,r_2})+\sum_{t\ne t_0}^s \sum_{q=\pm 1}e^{iq\omega_{t,r_1}x}g_{t,r_1,q}-\sum_{t\ne t_0}^s \sum_{q=\pm 1}e^{iq\omega_{t,r_2}x}g_{t,r_2,q}.
\end{equation}
Assuming that $r_2=r_2(r_1)$ and grows sufficiently fast, the frequencies $\omega_{t,r_1}, \omega_{t,r_2}$ and 0 (corresponding to the first term) will form disjoint clusters. We can then use the argument from Step 1 and conclude that  all $\|g_{t,r,q}\|_{\infty}=o(1),$ implying the desired result. 
\end{proof}
The last lemma shows that any uniform limit of functions $f_r\in H^{(2)}_N$ is also a uniform limit of band-limited functions $f_r\in H^{(2)}_{2M, \Omega}$ for some $\Omega<\infty$. Any function $f_r\in H^{(2)}_N$ is a solution of the differential equation 
\begin{equation}
    D_r f_r=0,
\end{equation}
where
\begin{equation}
    D_r = \prod_{n=1}^{N}(\tfrac{d}{dx}-i\omega_{nr})(\tfrac{d}{dx}+i\omega_{nr}).
\end{equation}
If the frequencies $\omega_{nr}$ are bounded, by compactness we can find their limit points $\omega_{n}$. We argue now that the limiting function $f_*$ is a solution of the corresponding limiting equation $Df=0$ with 
\begin{equation}\label{eq:dbn2}
    D = \prod_{n=1}^{N}(\tfrac{d}{dx}-i\omega_{n})(\tfrac{d}{dx}+i\omega_{n}).
\end{equation}
Indeed, such solutions have the form 
\begin{equation}\label{eq:soldeq2}
    \mathbf f(x)=e^{A(x-x_0)}\mathbf f(x_0),
\end{equation}
(cf. Eq. \eqref{eq:soldeq}), while the functions $f_r$ can be written as
\begin{equation}
    \mathbf f_r(x)=e^{A_r(x-x_0)}\mathbf f_r(x_0),
\end{equation}
with respective matrices $A_r$. As $r\to\infty$, the matrices $A_r$ converge (by taking a subsequence, if necessary) to $A$, and, by Lemma \ref{lm:derivbound}, $\mathbf f_r(x_0)\to \mathbf f(x_0)$. It follows that the function $f_*$ indeed is representable in the form \eqref{eq:soldeq2}, i.e. is a solution of equation $Df=0$. This immediately implies the claim of the theorem (with isolated zero limiting frequencies $\omega_n=0$).

\paragraph{Proof 2 (``discretization-based'').}
Our second proof essentially relies on discretizations of the function $f_*$ and will consist of two steps. In the first step we introduce the discretizations, show that they satisfy discrete difference equations, and describe general solutions of these equations. In the second step we analyze how discretizations at different length scales agree with each other and show that the continuity of $f_*$ enforces representation \eqref{eq:reson} in the zero spacing limit.

\subparagraph{Step 1: Discretization.}
Suppose that the function $f_*$ is the uniform limit of functions $f_r\in H^{(2)}_{N}, r=1,2,\ldots$ Fix some $\Delta x>0$ and consider the sequences $\mathbf u^{*}=(u_s^{*})_{s=0}^{\lfloor 1/\Delta x\rfloor}$ and $\mathbf u^{(r)}=(u_s^{(r)})_{s=0}^{\lfloor 1/\Delta x\rfloor}$ obtained by restricting these functions to the points $s\Delta x$:
\begin{equation}\label{eq:usstar}
    u_s^{*} = f_*(s\Delta x),\quad u_s^{(r)}=f_r(s\Delta x),\quad s=0,1,\ldots ,\lfloor 1/\Delta x\rfloor.
\end{equation}
Since $f_r\in H^{(2)}_{N},$ each sequence $\mathbf u^{(r)}$ can be written as a linear combination  
\begin{equation}\label{eq:usrlincomb}
    u^{(r)}_s=\sum_{n=1}^N b_{nr}e^{i\omega_{nr}s\Delta x}+\overline{b_{nr}}e^{-i\omega_{nr}s\Delta x},\quad s=0,1,\ldots ,\lfloor 1/\Delta x\rfloor
\end{equation}
with some complex amplitudes $b_{nr}\in\mathbb C$ and frequencies $\omega_{nr}\in\mathbb R$ (the horizontal bar denotes complex conjugation).

We want to perform now the limit $r\to\infty$ and derive a similar representation for $\mathbf u^*$. To this end observe that, up to the boundary values, a sequence of the form $u_s=be^{i\omega s\Delta x}$ satisfies the finite difference equation
\begin{equation}
    (T-e^{i\omega\Delta x})\mathbf u = 0,
\end{equation}
where $T$ is the left shift:
\begin{equation}
    (Tu)_{s} = u_{s+1}.
\end{equation}
Since each sequence $\mathbf u^{(r)}$ has the form of linear combination \eqref{eq:usrlincomb} and the operators $T-e^{i\omega\Delta x}$ commute, it follows that $\mathbf u^{(r)}$ satisfies the finite difference equation 
\begin{equation}\label{eq:dop}
    D^{(r)}\mathbf u^{(r)}\stackrel{\text{def}}{=}\prod_{n=1}^N[(T-e^{i\omega_{nr}\Delta x})(T-e^{-i\omega_{nr}\Delta x})]\mathbf u^{(r)} = 0.
\end{equation}
This equation holds up to the $2N$ boundary components $u^{(r)}_s$ that are moved by the shifts $T$ contained in $D^{(r)}$ outside the domain $[0,1]$ of the functions $f_r$.

We would like to perform the limit $r\to\infty$ in this difference equation, but we cannot directly perform it in the exponents $i\omega_{nr}\Delta x,$ since the frequencies $\omega _{nr}$ may be unbounded and not have any limit points. However, the values $e^{i\omega_{nr}\Delta x}$ belong to the unit circle in $\mathbb C$, which is a compact set. Therefore, we can choose a subsequence $r_t$ such that $e^{i\omega_{nr_t}\Delta x}\to z^*_{n}$ as $t\to \infty$, with some $z^*_n\in \mathbb C, |z^*_{n}|=1$.  Performing this limit in the difference equation, we then see that the limiting sequence $\mathbf u^{ *}$ satisfies a difference equation
\begin{equation}\label{eq:prntz}
    D\mathbf u^*\stackrel{\text{def}}{=}\prod_{n=1}^N[(T-z^*_n)(T-\overline{z^*_n})]\mathbf u^{*} = 0.
\end{equation}
A general solution of this equation is again a linear combination of sine waves, but with possible resonances due to repeated values $z^*_n$. Suppose that the numbers $z^*_n$ contain $K$ distinct nonreal values $z_1,\ldots,z_K$ with multiplicities $M_k\ge 1$. Additionally, let $M_\pm\ge 0$ be the multiplicities of the real values $z_\pm=\pm 1$ among the numbers $z_n^*$. We can then write the general real solution of equation \eqref{eq:prntz}  as 
\begin{equation}\label{eq:expus0}
    u^{*}_s=\sum_{m=0}^{2M_--1}b_{-,m}s^m(-1)^s+\sum_{m=0}^{2M_+-1}b_{+,m}s^m+\sum_{k=1}^K\sum_{m=0}^{M_k-1}s^m (b_{km}z_{k}^s+\overline{b_{km}}\overline{z_{k}}^s),
\end{equation}
with $M_-+M_++\sum_{k=1}^K M_k=N.$  Equivalently, we can write
\begin{equation}\label{eq:expus1}
    \mathbf u^{*}=\sum_{m=0}^{2M_--1}b_{-,m}\mathbf v_{-,m}+\sum_{m=0}^{2M_+-1}b_{+,m}\mathbf v_{+,m}+\sum_{k=1}^K\sum_{m=0}^{M_k-1}(b_{km}\mathbf v_{km}+\overline{b_{km}}\overline{\mathbf v_{km}}),
\end{equation}
where we have introduced the vectors $\mathbf v_{\pm,m}$, $\mathbf v_{km},\overline{\mathbf v_{km}}\in\mathbb C^{\lfloor 1/\Delta x\rfloor+1}$ with components
\begin{align}
    (v_{\pm,m})_s = s^m(\pm 1)^s,\quad (v_{km})_s = s^mz_k^s,\quad (\overline{v_{km}})_s = s^m\overline{z_k}^s.
\end{align}
The coefficients $b_{km}$ and $b_{\pm,m}$ in expansions \eqref{eq:expus0}, \eqref{eq:expus1} are determined uniquely as long as $\Delta x$ is sufficiently small:
\begin{lemma}\label{lm:linindepv}
Suppose that $\Delta x\le 1/(2N-1)$ so that $\dim \mathbf u^*=\lfloor 1/\Delta x\rfloor+1\ge 2N.$ Then the $2N$ vectors $\mathbf v_{\pm,m}$, $\mathbf v_{km}$ and $\overline{\mathbf v_{km}}$ are linearly independent and  the coefficients $b_{\pm, m}, b_{km}$ are uniquely determined.
\end{lemma}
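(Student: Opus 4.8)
The statement to be proved is that the $2N$ vectors $\mathbf v_{\pm,m}$ (for $m$ in the appropriate ranges) together with the $\mathbf v_{km}$ and $\overline{\mathbf v_{km}}$ are linearly independent in $\mathbb C^{\lfloor 1/\Delta x\rfloor+1}$, provided $\Delta x\le 1/(2N-1)$, so that the coefficients in expansions \eqref{eq:expus0}--\eqref{eq:expus1} are uniquely determined. The natural approach is to recognize that a vanishing linear combination of these vectors is exactly a sequence $\mathbf u = (u_s)_{s=0}^{\lfloor 1/\Delta x\rfloor}$ of the generalized-polynomial-exponential form $u_s = \sum_j p_j(s)\zeta_j^s$, where the $\zeta_j$ are the distinct values among $\{z_k,\overline{z_k},1,-1\}$ and each $p_j$ is a polynomial of degree strictly less than the corresponding multiplicity. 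Such a sequence is annihilated by the linear recurrence operator $D = \prod_n[(T-z_n^*)(T-\overline{z_n^*})]$ of order exactly $2N$, whose characteristic polynomial has these $\zeta_j$ as roots with precisely those multiplicities. The key classical fact is that the solution space of an order-$2N$ linear recurrence, restricted to any window of $2N$ consecutive indices $\{s_0, s_0+1,\ldots, s_0+2N-1\}$, is exactly $2N$-dimensional — a solution is determined by, and any choice is realizable from, its $2N$ consecutive initial values. Hence the $2N$ listed vectors, which span the full solution space, are linearly independent as long as the ambient dimension $\lfloor 1/\Delta x\rfloor + 1$ is at least $2N$, i.e. as long as $\lfloor 1/\Delta x\rfloor \ge 2N-1$, which holds when $\Delta x \le 1/(2N-1)$.

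Concretely, the steps are as follows. First I would note that it suffices to show the $2N$ vectors are linearly independent when truncated to their first $2N$ components, since linear independence of a truncation implies linear independence of the full vectors. Second, I would invoke the standard theory of homogeneous linear recurrences with constant coefficients: the functions $s\mapsto s^m\zeta^s$ for $\zeta$ ranging over the distinct roots and $m$ below the multiplicity of $\zeta$ form a basis of the solution space of the recurrence $D\mathbf u=0$, and this solution space has dimension equal to the degree $2N$ of $D$ (here using that $N\ge 1$ so $D$ is nontrivial, and that $D$ genuinely has degree $2N$ because each factor $(T-z_n^*)(T-\overline{z_n^*})$ contributes degree $2$; note this works uniformly whether or not any $z_n^*$ is real). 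Third, I would observe that the map sending a solution to its vector of first $2N$ values $(u_0,\ldots,u_{2N-1})$ is a linear isomorphism from the $2N$-dimensional solution space onto $\mathbb C^{2N}$ — injectivity because a solution of an order-$2N$ recurrence is determined by $2N$ consecutive values, surjectivity by dimension count. Therefore the images of our $2N$ basis vectors under this map are a basis of $\mathbb C^{2N}$, in particular linearly independent, which gives the claim. Uniqueness of the coefficients $b_{\pm,m}, b_{km}$ then follows because any representation \eqref{eq:expus1} of $\mathbf u^*$ expresses it in a fixed basis.

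I do not expect any genuine obstacle here — this is a routine application of the structure theory of linear recurrences. The only points requiring a little care are: (i) verifying that the operator $D$ really has order exactly $2N$ and that the nonreal roots $z_k$ come in conjugate pairs with equal multiplicities, so that the real-valued solutions are correctly parameterized by the complex coefficients $b_{km}$ together with their conjugates as written in \eqref{eq:expus0} (this is where the reality of the original sequences $\mathbf u^{(r)}$ and hence of $\mathbf u^*$ enters); and (ii) ensuring the dimension bookkeeping $M_-+M_++\sum_k M_k = N$ is consistent with the count of $2N$ complex basis vectors (each conjugate pair $\mathbf v_{km},\overline{\mathbf v_{km}}$ contributing two, and the real-root blocks contributing $2M_\pm$ each). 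Once these are in place, the linear-independence conclusion is immediate from the window argument, and the hypothesis $\Delta x\le 1/(2N-1)$ is used exactly to guarantee the window of size $2N$ fits inside the index set.
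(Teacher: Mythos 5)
Your proof is correct, and it reaches the conclusion by a genuinely different (more classical, more abstract) route than the paper. You invoke the structure theorem for constant-coefficient linear recurrences: the solution space of $D\mathbf u=0$, with $D=\prod_{n}(T-z_n^*)(T-\overline{z_n^*})$ of order exactly $2N$ and nonzero constant term, has dimension $2N$; the exponential-polynomial sequences $s\mapsto s^m\zeta^s$ form a basis; and restriction to $2N$ consecutive indices is an isomorphism onto $\mathbb C^{2N}$, so the truncated vectors are independent and the hypothesis $\Delta x\le 1/(2N-1)$ serves only to fit a window of length $2N$ inside the index set. The paper instead gives a self-contained, constructive argument: it applies the partial annihilators $D_{k,q}=D/(T-z_k)^q$ to $\mathbf u^*$, showing $(D_{k,1}\mathbf u^*)_s=a_kb_{k,M_k-1}z_k^s$ with an explicit nonzero $a_k$, and recovers the coefficients one by one (the condition $\dim\mathbf u^*\ge 2N$ enters there as the existence of at least one index $s$ at which the shifted relation is applicable); linear independence is then deduced from this uniqueness. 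The trade-off is clear: your argument is shorter and cleaner but delegates the crux — the linear independence of the exponential-polynomial sequences, equivalently that they form a basis of the solution space — to the textbook theorem, whose standard proof is essentially a confluent-Vandermonde or annihilator computation of the very kind the paper writes out; the paper's version is longer but fully explicit and yields a concrete coefficient-extraction procedure. Your points (i) on conjugate pairing/reality and (ii) on the multiplicity count $2M_-+2M_++2\sum_kM_k=2N$ correctly handle the bookkeeping needed to match the real-form expansion \eqref{eq:expus0} with the $2N$ complex basis vectors, so no gap remains.
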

\begin{proof}
For $1\le q\le M_k$, consider the finite difference operators $D_{k,q}$ obtained by removing the factor $(T-z_k)^q$ from the operator $D$ that appears in Eq. \eqref{eq:prntz}: 
\begin{equation}
    D_{k,q} = \frac{D}{(T-z_k)^q}.
\end{equation}
When applied to $\mathbf u^*,$ this operator kills all the components of the expansions \eqref{eq:expus0}, \eqref{eq:expus1} except those associated with $z_k$ and having degree $m\ge M_{k}-q$. In particular, 
\begin{equation}\label{eq:dk1us}
    (D_{k,1}u^*)_s=a_k b_{k,M_k-1}z_k^s
\end{equation}
with the explicit coefficient 
\begin{equation}
    a_k = (M_k-1)!(z_k-\overline{z_k})^{M_k}\prod_{k'\in\{\pm\}\cup\{1,\ldots,K\}\setminus\{k\}}((z_k-z_{k'})(z_k-\overline{z_{k'}}))^{M_{k'}} \ne 0.
\end{equation}
We can then find the coefficient $b_{k,M_k-1}$ from Eq. \eqref{eq:dk1us} if there is at least one $s$ for which it holds (recall that it is not applicable to a few boundary values of $s$). The highest power of the shift $T$ appearing in $D_{k,1}$ is $T^{2N-1}$, so such $s$ exists as long as $\dim \mathbf u^*\ge 2N$. 

Having found $b_{k,M_k-1},$ we can subtract the corresponding component $b_{k,M_k-1}\mathbf v_{k,M_{k}-1}$ from $\mathbf u^*$ and apply to the remainder the operator $D_{k,2}$, which will kill all terms except $b_{k,M_k-2}\mathbf v_{k,M_{k}-2}$. This allows to determine the next coefficient $b_{k,M_k-2}$. Continuing this process and extending it to other $z_k$ and $z_\pm$, we uniquely recover all the coefficients  $b_{\pm, m}, b_{km}$. The linear independence of the vectors $\mathbf v_{\pm,m}$, $\mathbf v_{km}$ and $\overline{\mathbf v_{km}}$ follows from this uniqueness. 
\end{proof}

\subparagraph{Step 2: The limit $\Delta x\to 0$.} We consider now discretizations with variable spacing $\Delta x$. Our goal is to show that expansions \eqref{eq:expus0} lead to the desired representation \eqref{eq:reson} in the limit $\Delta x\to 0$. In the notation, we will occasionally add $\Delta x$ to the subscripts to reflect the dependence of various quantities on the discretization spacing. 

It is convenient to rewrite expansion \eqref{eq:expus0} by rescaling the coefficients via $b_{km}=c_{km}(\Delta x)^m$ and introducing frequencies $\omega_k$ and $\omega_-$ such that $z_k=e^{i\omega_k\Delta x}$ and $- 1=e^{i\omega_-\Delta x}$, so as to make the expansion less $\Delta x$-dependent:
\begin{align}\label{eq:expus20}
    u^{*}_s={}&f_*(s\Delta x)
    \\
    ={}&\sum_{m=0}^{2M_--1}c_{-,m}(s\Delta x)^me^{i\omega_-s\Delta x}+\sum_{m=0}^{2M_+-1}c_{+,m}(s\Delta x)^m    \\
    &+\sum_{k=1}^K\sum_{m=0}^{M_k-1}(s\Delta x)^m (c_{km}e^{i\omega_ks\Delta x}+\overline{c_{km}}e^{-i\omega_ks\Delta x}).\label{eq:expus2}
\end{align}
The frequencies $\omega_k,\omega_-$ are determined from the values $z_{k,\Delta x}$ only up to integer multiples of $2\pi/\Delta x$:
\begin{align}
    \omega_k\in & W_{k,\Delta w}\stackrel{\mathrm{def}}{=}\Big\{\frac{-i\ln(z_{k,\Delta x})+2\pi l}{\Delta x}\Big\}_{l\in\mathbb Z},\label{eq:Wk}\\
    \omega_-\in & W_{-,\Delta w}\stackrel{\mathrm{def}}{=}\Big\{\frac{\pi+2\pi l}{\Delta x}\Big\}_{l\in\mathbb Z}.
\end{align}
Observe that expansion \eqref{eq:expus20}-\eqref{eq:expus2} is invariant under downscaling, in the following sense. Since $s$ appears in it only through the combination $s\Delta x$, if this expansion  holds for some $\Delta x$, then it also holds for any integer multiple $\Delta x'=l\Delta x$, with the same coefficients $c_{\pm,m}, c_{km}$ and frequencies $\omega_{-}, \omega_{k}$. 

To discuss this effect in more detail, we argue now that we can establish a correspondence between different values $z_k$ across different scales. In general, downscaling is accompanied by aliasing: if the difference equation \eqref{eq:prntz} holds with the values $z_{n,\Delta x}^*$ on the length scale $\Delta x,$ then on the integer multiple length scale $l\Delta x$ it holds with the values 
\begin{equation}
    z_{n,l\Delta x}^*=(z_{n,\Delta x}^*)^l.
\end{equation}
In particular, the values $z_{k,\Delta x}$ that are distinct on the length scale $\Delta x$ may collide when mapped to the length scale $l\Delta x$. However, in our setting we can bound the number of collisions thanks to the finiteness of the set of frequencies.  

Specifically, consider length scales of the form $2^{-t}\Delta x,$ with $t=0,1,\ldots$ The total number of collisions over all $t$ is bounded, since there are at most $K=N$ distinct values $z_k$. It follows that at sufficiently large $t$ there are no collisions, so that the number $K=K_t$ of distinct values $z_{k,2^{-t}\Delta x}$ is the same for all $t$, and we can enumerate these values so that for $t_2>t_1$
\begin{equation}
z_{k,2^{-t_1}\Delta x}=z_{k,2^{-t_2}\Delta x}^{2^{t_2-t_1}}.
\end{equation}
Moreover, observe that for sufficiently large $t$ there will be no $\omega_-$-component (corresponding to $z_-=-1$) in the \eqref{eq:expus20}-\eqref{eq:expus2}. Indeed, since the square roots of $-1$ are not real, the presence of the $\omega_-$-component for some $t$ would mean that $K_{t+1}>K_t$. But such an increase is possible only for a finite number of $t$'s. 

Similarly, the $\omega_+$-component of the expansion \eqref{eq:expus20}-\eqref{eq:expus2}  must have the form 
\begin{equation}
    f_{*,0}(s\Delta x)=\sum_{m=0}^{2M_+-1}c_{+,m}(s\Delta x)^m
\end{equation} 
with the same $c_{+,m}$ for all sufficiently large $t$, since any change of this component with $t$ is accompanied by an increase in $K_t$. 
The function $f_{*,0}$ uniquely extends to a continuous function on the whole interval $[0,1],$
\begin{equation}
    f_{*,0}(x)=\sum_{m=0}^{2M_+-1}c_{+,m}x^m,
\end{equation}
which agrees with the first component in the expansion \eqref{eq:reson}.

This discussion shows that without loss of generality (by subtracting $f_{*,0}$ from $f_*$ if necessary), we may assume that expansion \eqref{eq:expus20}-\eqref{eq:expus2} contains only the last component, standing in line \eqref{eq:expus2}. Assuming that transition from the length scale $\Delta x$ to $\Delta x'=l\Delta x$ occurs without collisions in $z_k$, the downscaling invariance condition can be written as 
\begin{align}
    c_{km,\Delta x}={}&c_{km,\Delta x'},\\
    W_{k, \Delta x} \subset{}&W_{k, \Delta x'},
\end{align}
where the sets $W_{k, \Delta x'}$ are defined by Eq. \eqref{eq:Wk}. By Lemma \ref{lm:linindepv}, the coefficients $c_{km,\Delta x}$ are uniquely determined as soon as $\Delta x$ and $\Delta x'$ are small enough. 

Considering again the length scales of the form $2^{-t}\Delta x$, we get a nested sequence
\begin{equation}
    \ldots\subset W_{k, 2^{-2}\Delta x}\subset W_{k, 2^{-1}\Delta x}\subset W_{k,\Delta x}.
\end{equation}
All the inclusions here are strict. The intersection 
\begin{equation}
    W_{k,\lim} =\cap_{t=0}^\infty W_{k, 2^{-t}\Delta x}
\end{equation}
is either empty or consists of a single frequency $\{\omega_k\}$. In the latter case we can again (as previously with $f_{*,0}$) identify the respective component of the function $f_*$:
\begin{equation}
    f_{*,k}(x)=\sum_{m=0}^{M_k-1}x^m (c_{km}e^{i\omega_kx}+\overline{c_{km}}e^{-i\omega_kx}).
\end{equation}
Our goal is to prove the that the intersections $W_{k,\lim}$ are indeed nonempty for all $k$. We will obtain this as a consequence of the continuity of the function $f_*,$ which in turn follows from the fact that $f_*$ is assumed to be a uniform limit of continuous functions $f_r.$ 

To this end, observe that when upscaling from $2^{-t}\Delta x$ to $2^{-t-1}\Delta x,$ the value $z_{k,2^{-t-1}\Delta x}$ is one of the two square roots of $z_{k,2^{-t}\Delta x}$. The case $W_{k,\lim}\ne \emptyset$ corresponds to the case when for all sufficiently large $t$ we take the square root with positive real part, so that 
\begin{equation}\label{eq:zkto1}
    z_{k,2^{-t}\Delta x}\stackrel{t\to\infty}{\longrightarrow}1.
\end{equation}
In contrast, in the case $W_{k,\lim}=\emptyset$, for any $t_0$ there is $t>t_0$ such that $z_{k,2^{-t-1}\Delta x}$ has a negative real part, so that
\begin{equation}
    z_{k,2^{-t}\Delta x}\stackrel{t\to\infty}{\not\longrightarrow}1.
\end{equation}
Now consider the vector $\mathbf u^*$ obtained by sampling the function $f_*$ at the points $s\Delta x$ as in Eq. \eqref{eq:usstar}. Consider the sequence $\mathbf u^{(*,t)}$ of vectors of the same dimension, obtained by sampling the function $f_*$ with additional shifts $2^{-t}\Delta x$:
\begin{equation}
    u_s^{(*,t)} = f_*((s+2^{-t})\Delta x), \quad s=0, 1,\ldots, \dim \mathbf u^*-1. 
\end{equation}
Since $f_*$ is continuous,
\begin{equation}\label{eq:utconvu}
    \mathbf u^{(*,t)}\stackrel{t\to\infty}{\longrightarrow}\mathbf u^{*}.
\end{equation}
Recall that the vector $\mathbf u^*$ admits an expansion \eqref{eq:expus1} over the vectors $\mathbf v_{\pm,m}, \mathbf v_{km}, \overline{\mathbf v_{km}}.$ As mentioned earlier, we can assume without loss of generality that there are no $\mathbf v_{\pm,m}$ terms:
\begin{equation}
    \mathbf u^{*}=\sum_{k=1}^K\sum_{m=0}^{M_k-1}(b_{km}\mathbf v_{km}+\overline{b_{km}}\overline{\mathbf v_{km}}).
\end{equation}
Observe now that an analogous expansion can also be written for the shifted vectors:
\begin{equation}
    \mathbf u^{(*,t)}=\sum_{k=1}^K\sum_{m=0}^{M_k-1}(b_{km}\mathbf v_{km}^{(t)}+\overline{b_{km}}\overline{\mathbf v_{km}^{(t)}}).
\end{equation}
Moreover, the vectors $\mathbf v_{km}^{(t)}$ are either exact or approximate multiples of the vectors $\mathbf v_{km}$:
\begin{equation}
    \mathbf v_{km}^{(t)} =\begin{cases}
        z_{k,2^{-t}\Delta x}\mathbf v_{km},& m=0,\\
        z_{k,2^{-t}\Delta x}\mathbf v_{km}+o(1)\quad (t\to\infty),& m>0.
    \end{cases}
\end{equation}
By Lemma \ref{lm:linindepv}, the vectors $\mathbf v_{km}$ and $\overline{\mathbf v_{km}}$ are linearly independent for sufficiently small $\Delta x$, so convergence \eqref{eq:utconvu} requires all those coefficients  $z_{k,2^{-t}\Delta x}$ for which $b_{km}\ne 0$ at least for one $m$ to converge to 1 (as in Eq. \eqref{eq:zkto1}).  This proves that all $W_{k,\lim}$ are non-empty for such $k$. If for some $k$ we have $b_{km}=0$ for all $m$, then this $k$-component is effectively missing and can be ignored. This completes the proof of the theorem.

\section{Proof of Theorem \ref{th:algebraic}}\label{sec:proofalg}
Our proof will consist of three steps. In the first step we show that the polynomial constraint claimed in the theorem holds for functions of the form $e^{P(x)}$. In the second step we show that if polynomial constraints hold for the arguments of an algebraic function, then its values also admit a polynomial constraint. In the final third step we combine these results to complete the proof of the theorem. 

We remark that our arguments are close to the arguments of the theory of algebraic-transcendent functions \cite{moore1896concerning, ostrowski1920dirichletsche}, but we work in a discretized setting of uniformly sampled data without considering any derivatives. 

\paragraph{Step 1: Polynomial constraints for exponentials of polynomials.}
Consider the function 
\begin{equation}\label{eq:gepx}
    g(x)=e^{P(x)},
\end{equation}
where $P$ is a (possibly complex-valued) polynomial of degree $\le d$. Consider operators $D^s_h$ of forward discrete derivatives of order $s$ with step $h$:
    \begin{equation}
        D^s_h g(x)=\tfrac{1}{h}(D^{s-1}_hg(x+h)-D^{s-1}_hg(x))=\ldots=h^{-s}\sum_{k=0}^s(-1)^{s-k}{s\choose k}g(x+kh).
    \end{equation}
    We have $D^{d+1}_h P=0$ for any $h\in\mathbb C$ and any polynomial $P$ of degree $\le d:$
    \begin{equation}
        \sum_{k=0}^{d+1}(-1)^{d+1-k}{d+1\choose k}P(x+kh)=0.
    \end{equation}
    Separating the positive and negative coefficients, we can write this, equivalently, as
    \begin{equation}
        \sum_{k=0}^{\lfloor (d+1)/2\rfloor}{d+1\choose 2k}P(x+2kh)=\sum_{m=0}^{\lfloor d/2\rfloor} {d+1\choose 2m+1}P(x+(2m+1)h).
    \end{equation}    
    Exponentiating, it follows for $g(x)$ given by Eq. \eqref{eq:gepx} that 
    \begin{equation}
        \prod_{k=0}^{\lfloor (d+1)/2\rfloor} (g(x+2kh))^{{d+1\choose 2k}}-\prod_{m=0}^{\lfloor d/2\rfloor} (g(x+(2m+1)h))^{{d+1\choose 2m+1}} = 0.
    \end{equation}
    We can view this as a polynomial constraint of the form 
    \begin{equation}
        R(g(x), g(x+h),\ldots, g(x+(d+1)h))=0
    \end{equation}
    with some fixed polynomial $R$ of $d+2$ variables. 

\paragraph{Step 2: Extension of polynomial constraints to algebraic functions.} 
\begin{lemma}\label{lm:algebr}
{}\hfill
\begin{enumerate}
    \item Suppose that $Q(y_0,y_1,\dots,y_N)$ is an algebraic function, and $g_0(x),\ldots, g_N(x)$ are functions satisfying polynomial relations
\begin{equation}\label{eq:rnh}
    R_n(g_n(x), g_n(x+h),\ldots, g_n(x+sh))=0,\quad \forall x,h.
\end{equation}
Let $g(x)=Q(g_0(x), \ldots, g_N(x))$ and $m\ge s(N+1).$ Then there exists a nontrivial polynomial $R$ of $m+1$ variables such that
\begin{equation}\label{eq:rh}
    R(g(x), g(x+h),\ldots, g(x+mh))=0,\quad \forall x,h.
\end{equation}
\item Moreover, suppose that the function $Q$ is not fixed, and is only constrained by its degree $\deg Q\le q$ and the degrees $\deg\phi_k\le r$ of the polynomials appearing in its definition. Let $m\ge (q+1){N+r+1\choose N+1}+s(N+1)$. Then there exists a nontrivial polynomial $R$ of $m+1$ variables such that relation \eqref{eq:rh} holds for all such $Q$.
\end{enumerate}
\end{lemma}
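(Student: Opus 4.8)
The plan is to control the transcendence degree over $\mathbb C$ of the field generated by the sampled values $g(x),g(x+h),\ldots,g(x+mh)$, uniformly over the admissible $g_n$ and $Q$: once this degree is at most $m$, the $m+1$ sampled values are algebraically dependent, and a nonzero polynomial $R$ realizing this dependence for the \emph{generic} choice of the data automatically realizes it for every choice, since the vanishing of $R$ on a Zariski-dense subset of a parameter variety forces its vanishing on the whole variety. So the heart of the argument is a dimension count.

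First I would unwind the hypotheses \eqref{eq:rnh}. Since the identity $R_n(g_n(x),\ldots,g_n(x+sh))=0$ holds for all $x,h$, replacing $x$ by $x+h,x+2h,\ldots$ shows that $g_n(x+jh)$, for every $j\ge s$, is algebraic over $\mathbb C\bigl(g_n(x+(j-s)h),\ldots,g_n(x+(j-1)h)\bigr)$, and hence, by induction on $j$, over $\mathbb C\bigl(g_n(x),g_n(x+h),\ldots,g_n(x+(s-1)h)\bigr)$; equivalently, the polynomial relations $R_n(T_{j-s},\ldots,T_j)=0$ hold, for $j=s,\ldots,m$, on the set of sampled tuples. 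Consequently the Zariski closure
\[
\mathcal V_n:=\overline{\bigl\{(g_n(x+jh))_{j=0}^{m}\ :\ x\in[0,1],\ h\in\mathbb C,\ g_n\text{ a solution of }R_n\bigr\}}\ \subseteq\ \mathbb C^{m+1}
\]
has $\dim\mathcal V_n\le s$, its coordinate ring being generated by $s$ coordinates together with elements algebraic over them.

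For Part 1 ($Q$ fixed) I would then note that $g(x+jh)=Q\bigl(g_0(x+jh),\ldots,g_N(x+jh)\bigr)$ is obtained from the entries of a point of $\mathcal V_0\times\cdots\times\mathcal V_N$ by applying the algebraic function $Q$ coordinatewise; as algebraic maps do not increase dimension, the Zariski closure $\mathcal W\subseteq\mathbb C^{m+1}$ of all achievable tuples $(g(x),\ldots,g(x+mh))$ satisfies $\dim\mathcal W\le\sum_{n=0}^N\dim\mathcal V_n\le s(N+1)\le m<m+1$. Thus $\mathcal W$ is a proper subvariety, and any nonzero $R$ vanishing on $\mathcal W$ yields \eqref{eq:rh} for every admissible $(g_0,\ldots,g_N)$. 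For Part 2 ($Q$ varying with $\deg Q\le q$, $\deg\phi_k\le r$) I would enlarge the parameter space by the coefficient vectors of $\phi_0,\ldots,\phi_q$: since a polynomial of degree $\le r$ in $N+1$ variables has at most $\binom{N+r+1}{N+1}$ coefficients, these contribute dimension at most $(q+1)\binom{N+r+1}{N+1}$, and given them together with a point of $\mathcal V_0\times\cdots\times\mathcal V_N$ the value $g(x+jh)$ is still algebraic over them, being a root of $\sum_k\phi_k(\,\cdot\,)Y^k$. Hence the relevant closure now has dimension at most $(q+1)\binom{N+r+1}{N+1}+s(N+1)\le m<m+1$, and we conclude as before.

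The step I expect to be delicate is the bound $\dim\mathcal V_n\le s$, because the recurrence ``$g_n(x+jh)$ is algebraic over the previous $s$ values'' can degenerate on the locus where the leading coefficient of $R_n$ in its last argument vanishes, and there it imposes no genuine dependence. I would handle this by stratifying $\mathcal V_n$: off that (proper) subvariety the argument applies verbatim, and on the exceptional stratum one checks, using that $R_n(g_n(x),\ldots,g_n(x+sh))=0$ holds for all $h$ (so, for an analytic $g_n$ as in the application of Theorem~\ref{th:algebraic}, identically once it holds on a set of positive measure), that $g_n$ actually satisfies a relation of order strictly smaller than $s$, enabling an induction on $s$. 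An equivalent, possibly cleaner bookkeeping is to work from the start with a single ``generic solution'' of $R_n$ — $s$ transcendental initial values, later values taken as algebraic roots of $R_n$ — derive the identity $R=0$ there, and then specialize; the point to check in that route is that every analytic solution, sampled on an arithmetic progression, lies in the Zariski closure of the generic-solution locus.
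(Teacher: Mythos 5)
Your argument takes a genuinely different route from the paper's, although both are at bottom dimension counts. The paper works entirely with explicit monomials: it considers the monomials $F_{\mathbf t}=\prod_l Q^{t_l}(\mathbf g(x+lh))$, repeatedly uses the defining relation of $Q$ and the relations $R_n$ to reduce high powers, clears a common denominator $Z$, and then counts: $(T+1)^{m+1}$ monomials $F_{\mathbf t}$ versus an $O(T^{s(N+1)})$- (resp.\ $O(T^{p})$-) dimensional space of reduced expressions, yielding a linear dependence among the $F_{\mathbf t}$'s for $T$ large. Your version replaces this bookkeeping by Krull dimension / transcendence degree: the Zariski closure of the achievable tuples has dimension at most $s(N+1)$ (resp.\ $p$), so it is a proper subvariety of $\mathbb{C}^{m+1}$ and hence lies on a hypersurface. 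This is cleaner conceptually and makes the ``why'' of the count transparent; what the paper's explicit route buys is that it sidesteps any appeal to the machinery of dimension theory, and it is more visibly constructive (one could in principle extract a degree bound on $R$).

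The gap, which you correctly flag, is the claim $\dim\mathcal V_n\le s$. The shifted relations $R_n(T_{j-s},\ldots,T_j)=0$, $j=s,\ldots,m$, do \emph{not} by themselves cut $\mathbb{C}^{m+1}$ down to dimension $s$: if the leading coefficient $L_n$ of $R_n$ in its last variable is nonconstant, the variety $\{R_n(T_{j-s},\ldots,T_j)=0\ \forall j\}$ can have components of dimension well above $s$ (take $R_n(z_0,z_1,z_2)=z_0(z_2-z_1)$, where one component is $\{z_0=z_2=z_4=\cdots=0\}$). To get the bound for $\mathcal V_n$ one must argue that such components are \emph{not} hit by the sampled tuples of an actual solution $g_n$ of the functional equation ``$R_n=0$ for all $x,h$'' --- i.e.\ one must use that $g_n$ satisfies the relation for \emph{all} $h$ together with its analyticity/continuity, not just the Zariski relations among the samples. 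Your stratification-plus-induction sketch is the right idea but is genuinely not yet a proof: in particular you need to show that whenever the sampled tuples live entirely in the exceptional locus $\{L_n=0\}$, the function $g_n$ itself satisfies a nontrivial relation of strictly smaller window $s$, and this requires upgrading a Zariski inclusion into a functional identity. It is worth noting that the paper's proof carries essentially the same caveat hidden in the common denominator $Z$ (built from the leading coefficients $\phi_q$ and $\psi_{n,d_n}$): the linear dependence it produces reads $Z\cdot R(g(x),\ldots,g(x+mh))=0$, and one still needs $Z\not\equiv 0$ plus continuity to conclude. For the actual application ($g_0(x)=x$, $g_n(x)=e^{P_n(x)}$, with the explicitly constructed $R_n$) these leading coefficients are products of nonvanishing exponentials, so both proofs go through; but for the lemma stated in full generality, the degenerate locus should be handled explicitly in either approach, and you should not leave it at a sketch.
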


\begin{proof}
\emph{1.} Recall the defining relation \eqref{eq:algdef} of the algebraic function $Q:$
\begin{equation}\label{eq:qphik}
    \sum_{k=0}^q \phi_k(z_0,\ldots,z_N)Q^k(z_0,\dots,z_N)=0.
\end{equation}
We can use this relation to express $Q^q$ in terms of lower powers $Q^0,\ldots, Q^{q-1}$:
\begin{equation}
    Q^q(z_0,\dots,z_N)=-\phi^{-1}_{q}(z_0,\ldots,z_N)\sum_{k=0}^{q-1} \phi_k(z_0,\ldots,z_N)Q^k(z_0,\dots,z_N).
\end{equation}
By iterating this relation, for any power $t\ge q$ we can express $Q^t$ in terms of the powers $Q^0,\ldots, Q^{q-1}$:
\begin{equation}\label{eq:qtg}
    Q^t(z_0,\dots,z_N)=\phi^{-t}_{q}(z_0,\ldots,z_N)\sum_{k=0}^{q-1} \phi_{t,k}(z_0,\ldots,z_N)Q^k(z_0,\dots,z_N),
\end{equation}
where $\phi_{t,k}$ are suitable polynomials of degree not greater than $t\max_{k=0,\ldots,q}\deg\phi_k.$ Substituting $$z_n=g_n(x),$$ we get 
\begin{align}\label{eq:qtmg}
    Q^t(\mathbf g(x))=\phi^{-t}_{q}(\mathbf g(x))\sum_{k=0}^{q-1} \phi_{t,k}(\mathbf g(x))Q^k(\mathbf g(x)),
\end{align}
where for brevity we have denoted $\mathbf g(x)=(g_0(x),\ldots, g_N(x))$. 

Formula \eqref{eq:qtmg} remains valid if we replace $x$ by $x+lh$ with any $l$. Consider various monomials in the variables $Q(\mathbf g(x+lh))$ for $l=0,\ldots,m:$
\begin{align}
    F_{\mathbf t}={}&\prod_{l=0}^{m} Q^{t_l}(\mathbf g(x+lh))\\
    ={}&\Big(\prod_{l=0}^{m} \phi^{-t_l}_{q}(\mathbf g(x+lh))\Big)\sum_{k_0=0}^{q-1}\cdots\sum_{k_m=0}^{q-1}\widetilde\Phi_{\mathbf t,\mathbf k}(\mathbf g(x),\ldots,\mathbf g(x+mh))\mathbf Q^{\mathbf k},\label{eq:rhsft}
\end{align}
where $\mathbf t=(t_0,\ldots,t_m), \mathbf k=(k_0,\ldots,k_m)$, $\widetilde \Phi_{\mathbf t,\mathbf k}$ are some polynomials of degree not greater than $(m+1)(\max_l t_l)\max_k\deg\phi_k,$ and 
\begin{equation}
    \mathbf Q^{\mathbf k}=\prod_{l=0}^m Q^{k_l}(\mathbf g(x+lh)).
\end{equation}
Our goal is to prove that for sufficiently large $m$, the monomials $F_{\mathbf t}$ are linearly dependent (with some coefficients independent of $x$ and $h$), which is exactly equivalent to the existence of a polynomial $R$ satisfying relation \eqref{eq:rh}. This is done by showing a linear dependence of the expressions \eqref{eq:rhsft} with different $\mathbf t$.

Specifically, observe that, regardless of $\mathbf t$, expressions \eqref{eq:rhsft} contain only finitely many (namely, $q^{m+1}$) different components $\mathbf Q^{\mathbf k}$. Since there are infinitely many monomials $F_{\mathbf t}$, it follows already from this observation that there exist nontrivial vanishing linear combinations of the monomials $F_{\mathbf t}$ if the coefficients in these linear combinations are allowed to belong to the field of all rational functions of $\mathbf g(x),\ldots,\mathbf g(x+mh)$. However, we need to show that nontrivial vanishing linear combinations can be found even with constant coefficients.

To this end, we will further reduce the combinatorial complexity of expressions \eqref{eq:rhsft} by removing high powers of the variables $g_n(x+lh)$ at the expense of introducing new rational functions of the variables $g_n(x+l'h)$ with $l'>l$. Recall that each $g_n$ is subject to polynomial relation \eqref{eq:rnh}. We can use these relations to transform polynomial functions of $\mathbf g(x)$ into rational functions of $\mathbf g(x), \mathbf g(x+h),\ldots, \mathbf g(x+sh)$ that contain only bounded powers of $\mathbf g(x)$. Specifically, isolate the variables $g_n(x)$ in the polynomials $R_n$ by writing equations \eqref{eq:rnh} in the form
\begin{equation}\label{eq:rnpsinj}
    \sum_{j=0}^{d_n} \psi_{n,j}(g_n(x+h),\ldots,g_n(x+sh)))g_n^j(x)=0
\end{equation}
with suitable polynomials $\psi_{n,j}$ of the remaining variables $g_n(x+h),\ldots,g_n(x+sh)$. The value $d_n$ is the degree of $g_n(x)$ in this representation. Then, similarly to how we did this for $Q,$ we can express
\begin{equation}
    g_n^{d_n}(x)=-\psi_{n,d_n}^{-1}(g_n(x+h),\ldots,g_n(x+sh)))\sum_{j=0}^{d_n-1} \psi_{n,j}(g_n(x+h),\ldots,g_n(x+sh)))g_n^j(x).
\end{equation}
By iterating this relation, for any $t\ge d_n$ we get
\begin{align}
       g_n^{t}(x)=\psi_{n,d_n}^{-t}(g_n(x+h),\ldots,g_n(x+sh)))\sum_{j=0}^{d_n-1} \psi_{n,j,t}(g_n(x+h),\ldots,g_n(x+sh)))g_n^j(x), 
\end{align}
with some polynomials $\psi_{n,j,t}$ of degree not greater than $t\max_j \deg \psi_{n,j}$. This relation remains valid if we replace $x$ by $x+lh$ with any $l$:
\begin{align}
       g_n^{t}(x+lh)={}&\psi_{n,d_n}^{-t}(g_n(x+(l+1)h),\ldots,g_n(x+(l+s)h)))\\
       &\times\sum_{j=0}^{d_n-1} \psi_{n,j,t}(g_n(x+(l+1)h),\ldots,g_n(x+(l+s)h)))g_n^j(x+lh). \label{eq:psinjtg}
\end{align}

We can now transform expression \eqref{eq:rhsft} by first replacing there all powers $t\ge d_n$ of $g_n(x)$ with powers $t\le d_n$ and rational functions of $g_n(x+h),\ldots, g_n(x+sh)$, then replacing powers $t\ge d_n$ of $g_n(x+h)$ with powers $t\le d_n$ and rational functions of $g_n(x+2h),\ldots, g_n(x+(s+1)h),$ and so on. We continue this process while the involved variables $g(x+lh)$ obey the condition $l\le m$. As a result, we obtain a rational function of the variables $\mathbf g(x),\ldots, \mathbf g(x+mh)$ that contains only powers of the variables $g_n(x),\ldots, g_n(x+(m-s)h)$ not exceeding $d_n-1$. 

To establish the desired linear dependence, we want the rational functions appearing in these transformations of the monomials $F_{\mathbf t}$ to have the same denominator for all $\mathbf t$. It is convenient to ensure this in the following way. Choose some large $T$ and assume that all the powers $t_k$ in the monomials $F_{\mathbf t}$ vary between 0 and $T$, so that in total we have $(T+1)^{m+1}$ monomials. In expression \eqref{eq:rhsft}, instead of the denominator $\prod_{l=0}^{m} \phi^{t}_{q}(\mathbf g(x+lh))$ use the common denominator
\begin{equation}
    Z_0 = \prod_{l=0}^{m} \phi^{T}_{q}(\mathbf g(x+lh)).
\end{equation}
This changes the polynomials $\widetilde\Phi_{\mathbf t,\mathbf k}:$
\begin{align}\label{eq:ftz0}
    F_{\mathbf t}={}&Z_0^{-1}\sum_{\mathbf k}\Phi_{\mathbf t,\mathbf k}(\mathbf g(x),\ldots,\mathbf g(x+mh))\mathbf Q^{\mathbf k};
\end{align}
the new polynomials $\Phi_{\mathbf t,\mathbf k}$ now have degrees not exceeding $(m+1)T\max_k\deg\phi_k$. Now we perform the iterative substitutions of the powers $g_n^t(x+km)$ in the polynomials $\Phi_{\mathbf t,\mathbf k}$ as described above, while keeping the same denominator for all $\mathbf t$; this requires multiplying the common denominator by appropriate powers of $\psi_{n,d_n}(g_n(x+(l+1)h),\ldots,g_n(x+(l+s)h)))$. As a result of this process we obtain expressions
\begin{align}\label{eq:ftz}
    F_{\mathbf t}={}&Z^{-1}\sum_{\mathbf k}\mathbf Q^{\mathbf k}\sum_{J}\Big(\prod_{\substack{n=0,\ldots,N\\ l=0,\ldots,m-s}} g_n^{J_{nl}}(x+lh)\Big)\Psi_{\mathbf t,\mathbf k, J}(\mathbf g(x+(m-s+1)h),\ldots,\mathbf g(x+mh)).
\end{align}
Here $J=(J_{nl})$ is a matrix with elements $0\le J_{nl}\le d_n-1$, so summation over $J$ is finite. The polynomials $\Psi_{\mathbf t,\mathbf k,J}$ only depend on the remaining variables $\mathbf g(x+(m-s+1)h),\ldots,\mathbf g(x+mh)$. 

Importantly, by taking into account previously observed bounds on the degrees of polynomials appearing in Eqs. \eqref{eq:psinjtg} and \eqref{eq:ftz0}, we observe that $\deg \Psi_{\mathbf t,\mathbf k,J}\le CT$ with some constant $C$ depending on $s, m, N$ and the degrees of the polynomials $\phi_{k}$ and $\psi_{n,j}$ appearing in the assumed expansions \eqref{eq:qphik} and \eqref{eq:rnpsinj}. 

Now the desired linear dependence of the monomials $F_{\mathbf t}$ results, with a suitable choice of $m$ and $T$, by comparing the number $(T+1)^{m+1}$ of different monomials $F_{\mathbf t}$ with the number of different monomials that may appear in the expansions of the polynomials $\Psi_{\mathbf t,\mathbf k,J}$. Since these polynomials depend on the $s(N+1)$ variables $\mathbf g(x+(m-s+1)h),\ldots,\mathbf g(x+mh)$ and have degrees not exceeding $CT,$ the number of different monomials in them does not exceed $C_1T^{s(N+1)}$ with some constant $C_1$ depending on the same parameters as $C.$ Taking into account that the summations over $\mathbf k$ and $J$ in expansion \eqref{eq:ftz} involve, respectively, $q^{m+1}$ and $\prod_{n=0}^N d_n^{m-s+1}$ terms, we see that the $(T+1)^{m+1}$ monomials $F_{\mathbf t}$ belong to a $C_1q^{m+1}(\prod_{n=0}^N d_n^{m-s+1})T^{s(N+1)}$-dimensional linear functional space. Accordingly, if $s(N+1)<m+1$, then some of the monomials $F_{\mathbf t}$ will be linearly dependent if $T$ is large enough. This completes the proof of part 1 of the lemma.

\emph{2.} The argument above produces nontrivial linear combinations of monomials $F_{\mathbf t}$ with coefficients generally dependent on $Q$. We show now that, assuming appropriate degree bounds for $Q$ and choosing a larger $m$, one can find nontrivial linear combinations  with $Q$-independent coefficients. To this end, note that the main obstacle to choosing $Q$-independent coefficients is that the polynomials $\Psi_{\mathbf t,\mathbf k,J}$ in expansion \eqref{eq:ftz} are $Q$-dependent. To overcome this obstacle, we express these polynomials both in the variables $\mathbf g(x+(m-s+1)h),\ldots,\mathbf g(x+mh)$ and the coefficients $\mathbf b$ of the polynomials $\phi_k$:
\begin{equation}
    \Psi_{\mathbf t,\mathbf k, J}(\mathbf g(x+(m-s+1)h),\ldots,\mathbf g(x+mh)) = \widetilde\Psi_{\mathbf t,\mathbf k, J}(\mathbf b, \mathbf g(x+(m-s+1)h),\ldots,\mathbf g(x+mh)).
\end{equation}
Here, the vector $\mathbf b$ includes all the coefficients of all $q+1$ polynomials $\phi_k$ appearing in the defining identity \eqref{eq:qphik}. By retracing the steps to construct the polynomials $\Psi_{\mathbf t,\mathbf k,J}$, we see that their dependence on the coefficients $\mathbf b$ is indeed polynomial. Moreover, $\deg \widetilde\Psi_{\mathbf t,\mathbf k, J}$ is again bounded by $CT$, where $C$ is some constant possibly dependent on $N, s, m, q, \max\deg\phi_k, \max_j\deg\psi_{n,j}$, but not on $T$.  Since we assume $\deg\phi_k\le r,$ we have $\dim \mathbf b\le (q+1){N+r+1\choose N+1}$. The total number of variables in $\widetilde\Psi_{\mathbf t,\mathbf k, J}$ is thus not greater than 
\begin{equation}
    p = (q+1){N+r+1\choose N+1}+s(N+1).
\end{equation}
We repeat now the dimensionality-based linear dependence argument. The total dimensionality of polynomials $\widetilde\Psi_{\mathbf t,\mathbf k, J}$ considered at all $\mathbf k, J$ is $O(T^p)$. On the other hand, the number of monomials $F_{\mathbf t}$ is $(T+1)^{m+1}$. If we set $m\ge p$, then for sufficiently large $T$ we can find a nontrivial vanishing linear combination of $F_{\mathbf t}$ with coefficients independent of $Q$. 

\end{proof}

\paragraph{Step 3: Proof of Theorem \ref{th:algebraic}.} We apply now Lemma \ref{lm:algebr} to our setting in which $g_0(x)=x$ and $g_k(x)=e^{P_k(x)},k=1,\ldots,N$. We know from Step 1 that for $k=1,\ldots,N$ conditions \eqref{eq:rnh} hold with $s=\deg P_k+1$. For $g_0$, condition \eqref{eq:rnh} holds with $s=2:$
\begin{equation}
    g_0(x)-2g_0(x+h)+g_0(x+2h)=0.
\end{equation}
Since we assumed that $\deg P_k\le d$, we can set $s=\max(d,1)+1$ as a common value with which condition \eqref{eq:rnh} holds for all $n=0,\ldots,N$. The conclusion of  Theorem \ref{th:algebraic} then follows from statement 2 of  Lemma \ref{lm:algebr}.

\section{Proof of Theorem \ref{th:nn}}\label{sec:proofnn}
Since the network has not more than $2^N$ hidden neurons and each of our activation functions has at most two branches, the output $g(x)$ of the network at any $x$ belongs to one of at most $2^N$ branches represented by analytic expressions. Consider each of these branches as defined for all $x\in[0,1]$ and denote the family of all branches resulting from all $g\in H^{(4)}_{N}$ by $\widetilde H^{(4)}_{N}.$ Observe that $\widetilde H^{(4)}_{N}\subset H^{(3)}_{N',q,r,d}$ if $N',q,r,d$ are large enough. Indeed, the input of any algebraic-exponential neuron is a polynomial of degree not greater than $d=2^N$ (if the squared ReLU is present; otherwise, if only the step function and ReLU/leaky ReLU are present, then one can take $d=1$). All the considered algebraic-exponential activations are rational functions of $x, e^{x}$ or $e^{\pm i x}$, and the subsequent neurons are polynomial, so  any branch of the full network can be written in the form \eqref{eq:polyexpalg} with a rational $Q$ and $2N$ as the number of its exponential arguments. Rational $Q$'s have algebraic degree $q=1.$ The maximum degree of the polynomials $\phi_k$ appearing in the defining relation \eqref{eq:algdef} does not exceed $r=2^N$ (again, if the squared ReLU is present; otherwise one can take $r=1$). Summarizing,
\begin{equation}
    \widetilde H^{(4)}_{N}\subset H^{(3)}_{2N,1,2^N,2^N}.
\end{equation}
Theorem \ref{th:algebraic} implies now that there exists some $\widetilde m$ and a nonzero polynomial $\widetilde R$ of $\widetilde m+1$ variables such that for any $\widetilde g\in \widetilde H^{(4)}_{N}$ and any size-$(\widetilde m+1)$ arithmetic progression $\widetilde a,\widetilde a+\widetilde h,\ldots, a+\widetilde m\widetilde h \in[0,1]$
\begin{equation}\label{eq:pconstr3}
        \widetilde R(\widetilde g(\widetilde a), \widetilde g(\widetilde a+\widetilde h), \widetilde g(\widetilde a+2\widetilde h), \ldots, \widetilde g(\widetilde a+\widetilde m\widetilde h))=0.
\end{equation}
We need to obtain a similar relation \eqref{eq:pconstr2}, but for $g\in H^{(4)}_{N}$. Each value $g(a+kh)$ in Eq. \eqref{eq:pconstr2} belongs to one of the $2^N$ branches $\widetilde g\in\widetilde H^{(4)}_{N}$, which we associate with $p=2^N$ colors. By applying van der Waerden's theorem with $p=2^N$ and $s=\widetilde m+1$, we see that if we set $m=N_{\text{vdW}}(\widetilde m+1,2^N)-1$, then the length-$(m+1)$ progression $a,a+h,\ldots,a+mh$ appearing in Eq. \eqref{eq:pconstr2} contains at least one length-$(\widetilde m+1)$ sub-progression $\widetilde a,\widetilde a+\widetilde h,\ldots, \widetilde a+\widetilde m\widetilde h$ on which the values of $g$ belong to the same branch. We can then satisfy condition \eqref{eq:pconstr2} by defining the polynomial $R$ by
\begin{equation}
    R(z_0,\ldots,z_m)=\prod_{\mathbf s=(s_0,\ldots,s_{\widetilde m})}\widetilde R(z_{s_0},\ldots, z_{s_{\widetilde m}}), 
\end{equation}
where $\mathbf s$ runs over all length-$(\widetilde m+1)$ progressions from the set $\{0,\ldots,m\}$.

\section{Proof of Theorem \ref{prop:hsigmag1}}\label{sec:proofsinsigma}
The ``only if'' part follows from Theorem \ref{th:algebraic}. The ``if'' part is a generalization of Lemma 1 from \cite{yarotsky2021elementary}. Suppose that $\sigma$ is not a polynomial; we want to prove that $H^\sigma\in\mathcal G_1$. By absorbing $h$ in $\sigma$, we can assume that $\sigma(0)\ne 0$. Dropping then $h$, we specialize to $f(x)=c\sin(\omega\sigma(bx))$. Consider an arbitrary finite subset $X=\{x_1,\ldots,x_N\}\subset [0,1]$. By Theorem \ref{th:kronecker}, it is sufficient to show that there exsts $b\in[-1,1]$ such that the values $\sigma(bx_1),\ldots\sigma(bx_N)$ are rationally independent. Suppose that this is not the case. For fixed coefficients $\bm\lambda=(\lambda_1,\ldots,\lambda_N)$, the function $\sigma_{\bm\lambda}(b)=\sum_{n=1}^N {\lambda}_n \sigma(bx_n)$ is real analytic for $b\in [-1,1]$. Since there are only countably many $\bm\lambda\in\mathbb Q^N$, there is some ${\bm\lambda}$ such that $\sigma_{\bm\lambda}$ vanishes at uncountably many $b\in[-1,1]$. Then, by analyticity, $\sigma_{\bm\lambda}\equiv 0$ on $[-1,1]$. Expanding this $\sigma_{\bm\lambda}$ into the Taylor series at $b=0$, we get the identity $\sum_{n=1}^N {\lambda}_n x_n^m=0$ for each $m$ such that $\tfrac{d^m \sigma}{dw^m}(0)\ne 0$. Suppose that $x_k$ have the natural order $0\le x_1<\ldots<x_N$. If there are infinitely many $m$ for which $\tfrac{d^m \sigma}{dw^m}(0)\ne 0$, then, by letting $m\to\infty$, we first show that $\lambda_N=0$, then $\lambda_{N-1}=0$, etc. If $x_1>0$, then we exhaust all $\lambda_n$ in this way; otherwise we exhaust all except $\lambda_1$. However, $\lambda_1=0$ follows then from $\sigma(0)\ne 0$. Thus, either $\bm\lambda=0$ or $\sigma$ is a polynomial, which contradicts our assumption.

\section{Proof of Theorem \ref{prop:sinsigmalimits}}\label{sec:proofsinsigmalimits}
Suppose that the function sequence $f_n(x)=c_n\sin(\omega_n\sigma(b_nx)+h_n)$ converges to a nontrivial limit point $f_*$. By a compactness argument, this requires at least one of the parameters $c_n, \omega_n$ to be unbounded. We expect unboundedness of one parameter to be compensated by another parameter converging to $0$. Accordingly, we consider three possibilities: $\omega_n\to 0, \omega_n\to \omega\ne 0$ and $\omega_n\to\infty$. One of these possibilities can always by ensured by choosing a subsequence of $f_n$, if necessary (in the sequel we will without further mention repeatedly use such reductions based on compactness and/or transition to subsequences).

\paragraph{Case 1: $\omega_n\to 0$.} Since $\sigma(b_n x)$ is uniformly bounded by our assumption and $\sin$ is periodic,  we can assume that  $\omega_n\sigma(b_nx)+h_n\stackrel{n\to\infty}{\longrightarrow}a$ with some constant $a\in[0,2\pi)$, for all $x\in[0,1]$. If $a\ne 0, \pi$, then the limit $f_*$ can only be trivial, a constant. Suppose that $a=0$ (the case $a=\pi$ leads to analogous results). In this case $f_n(x)=c_n(\omega_n\sigma(b_nx)+h_n)(1+o(1)),$ so $f_*(x)=\lim_{n\to\infty}c_n(\omega_n\sigma(b_nx)+h_n).$ There are two possibilities: $b_n\to b\ne 0$ and $b_n\to 0.$ In the first case we obtain nontrivial limits 
\begin{equation}
    f_*(x)=a\sigma(bx)+c,\quad a,b,c\in\mathbb R.
\end{equation}
In the second case we can Taylor expand
\begin{equation}\label{eq:sbnx}
    \sigma(b_nx)=\sigma(0)+\tfrac{1}{r!}\tfrac{d^r \sigma}{dx^r}(0)(b_nx)^r+o(b_n^{r}),\quad \tfrac{d^r \sigma}{dw^r}(0)\ne 0.
\end{equation}
A nontrivial $f_*$ requires $c_nb_n^r$ to have a finite nonzero limit. In this case, by adjusting $h_n$ we obtain nontrivial limits 
\begin{equation}\label{eq:fa0ar}
    f_*(x)=a_0+a_rx^r,\quad a_0,a_r\in\mathbb R.
\end{equation}

\paragraph{Case 2: $\omega_n\to \omega\ne 0$.} Consider again the two possibilities: $b_n\to b\ne 0$ and $b_n\to 0.$ In the first case we do not get any nontrivial limits. In the second case we again have Taylor expansion \eqref{eq:sbnx}. Arguing as in Case 1, the only nontrivial limits that we can produce are of the form \eqref{eq:fa0ar}.

\paragraph{Case 3: $\omega_n\to \infty$.} In this case we must have $b_n\to 0$, since otherwise $\sin(\omega_n\sigma(b_nx)+h_n)$ will have an unbounded number of alternating values $\pm 1$, so we must have $c_n\to 0$ and accordingly the only possible limit is $f_*=0$. It follows that we again have expansion \eqref{eq:sbnx}, and
\begin{equation}
    f_n(x)=c_n\sin\Big(\omega_n\big(\sigma(0)+\tfrac{1}{r!}\tfrac{d^r \sigma}{dx^r}(0)(b_nx)^r+o(b_n^{r})\big)+h_n\Big).
\end{equation}
To avoid triviality due to an unbounded number of alternating values $\pm 1$ of $\sin$, we must have $\omega_nb_n^r=O(1)$. We get new nontrivial limits of $f_n$ when $\omega_nb_n^r$ has a nonzero limit:
\begin{equation}
    f_*(x)=c\sin(a_0+a_rx^r),\quad a,a_0,a_r\in\mathbb R.
\end{equation}

\section{Proof of Theorem \ref{th:sinsin}}\label{sec:sinsin}
We need to prove that $H_N^{(5)}\in \mathcal G_1$ and $H_N^{(5)}\notin \mathcal G_2$. 

\paragraph{Proof of $H_N^{(5)}\in \mathcal G_1$. } For $N\ge 2$, this property can be concluded from Theorem \ref{prop:hsigmag1}, since we can just set $\sigma(x)=\sin(x)$ and $h=h\sin(0\cdot x+\pi/2)$ in this theorem. In the case $N=1$, observe that in the proof of Theorem \ref{prop:hsigmag1} we only use $h$ to redefine $\sigma$ so that $\sigma(0)\ne 0$; otherwise we can set $h=0$. Then, this proof becomes applicable to the family $f(x)=c\sin(\omega\sigma(bx))$ with $\sigma(x)=\sin(x+h)$ and any phase $h\ne \pi k,k\in\mathbb Z$; this is a subfamily of $H_{N=1}^{(5)}.$   

\paragraph{Proof of $H_N^{(5)}\notin \mathcal G_2$.}
We will show that we cannot obtain a general continuous function as a limit of functions from $H_N^{(5)}$, because any such limit has a particular restrictive form.  

We write $f_n(x)=c_n \sin(g_n(x))+h_n$ and consider several  cases depending on the behavior of $c_n$.

\paragraph{Case 1: $c_n\to 0$.} In this case the limiting function is constant: $f_*(x)=h$. 

\paragraph{Case 2: $c_n\to c\ne 0$.} In this case $h_n$ must be bounded and hence can be assumed to converge to some $h$ (by possibly choosing a subsequence). Denote $\phi_n(y)=c_n\sin(y)+h_n$; then $\phi_n(y)$ converges uniformly on $\mathbb R$ to $\phi_*(y)=c\sin(y)+h$. Clearly, the functions $\widetilde f_n(x)=\phi_*(g_n)$ also uniformly converge to $f_*.$

Consider now two alternatives: either the family $\{g_n\}$ is uniformly equicontinuous, or not. In the first case, fix some point $x_0\in[0,1]$, let $r_n=\lfloor\tfrac{g_n(x_0)}{2\pi}\rfloor$ and consider the shifted functions $\widetilde g_n(x)=g_n(x)-2\pi r_n$. We have $\phi_*(g_n(x))\equiv \phi_*(\widetilde g_n(x))$, the family $\{\widetilde g_n\}$ is also equicontinuous, and $\widetilde g_n(x_0)\in[0,2\pi]$ for all $n$, so that the family  $\{\widetilde g_n\}$ is additionally uniformly bounded on $[0,1]$. Then, by the Arzelà–Ascoli theorem, the family $\{\widetilde g_n\}$ contains a uniformly convergent subsequence. It follows that $f=\phi_*(g_*),$ where $g_*$ is a uniform limit of some functions from $H_{N+1}^{(2)}.$ Theorem \ref{th:linsinlimit} describes all such limits.

In the case of the second alternative, there exist $\epsilon>0$ and a subsequence $g_{n_s}$ such that for some $x_s,x_s'\in [0,1]$ we have $|x_s-x_s'|\to 0$ while $| g_{n_s}(x_s)- g_{n_s}(x_s')|\ge\epsilon$ for all $s.$ Let $r_s=\lfloor\tfrac{g_{n_s}(x_s)}{2\pi}\rfloor$ and $\widetilde g_s(x)=g_{n_s}(x)-2\pi r_s$. Then $\widetilde g_s(x_s)\in [0,2\pi]$ and by choosing a subsequence we can assume that  $g_s(x_s)$ converges to some $y_*\in [0,2\pi].$ Using $|\widetilde g_{s}(x_s)-\widetilde g_{s}(x_s')|\ge\epsilon$ and the continuity of $\widetilde g_{s}$, we can then assume that $\widetilde g_s(x_s')$ converges to some $y_*'\ne y_*$. Also, we can assume that $x_s$ and $x_s'$ converge to some $x_*\in [0,1]$. Then, by the uniform convergence of $\widetilde f_n,$ $\sup_{x\in[x_s,x_s']}|\widetilde f_{n_s}(x)-\widetilde f_*(x_*)|\to 0$ as $s\to\infty$. On the other hand, $\widetilde f_{n_s}(x)= \phi_*(g_{n_s}(x))= \phi_*(\widetilde g_s(x))$ so that, by the continuity of $\widetilde g_s$, \begin{equation}
    \lim_{s\to\infty}\sup_{x\in[x_s,x_s']}|\widetilde f_{n_s}(x)-\widetilde f_*(x_*)|\ge \sup_{y\in[y_*,y_*']}|\phi_*(y)-\widetilde f_*(x_*)|\ne 0,
\end{equation}
because $\phi_*$ in nonconstant on any interval $[y_*,y_*']$. This contradiction shows that the second alternative is impossible.

\paragraph{Case 3: $c_n\to\infty$.} In this case the range of $g_n$ must shrink, since otherwise the range of $f_n$ would be unbounded. Taking a subsequence, we can assume that for some $y_0\in [0,2\pi]$ we have $g_n(x)=y_0+2\pi m_n+o(1)$ uniformly on $[0,1]$. We can then Taylor expand
\begin{align}
    f_n(x)={}&h_n+c_n \sin(y_0)+c_n\cos(y_0)(g_n-y_0-2\pi m_n)\\
    &-\tfrac{c_n}{2}\sin(y_0)(g_n-y_0-2\pi m_n)^2+c_n O((g_n-y_0-2\pi m_n)^3).
\end{align}
If $\cos(y_0)\ne 0,$ then the respective term dominates the higher order terms and must be bounded as $n$ increases. We can then discard the higher order terms, and we see that $f_*$ is a uniform limit of the functions $c_ng_n-a_n$ with some $a_n$. These limits are among the limits of the family $H_{N+1}^{(2)}$, again covered by Theorem \ref{th:linsinlimit}. 

If $\cos(y_0)=0,$ then the respective term vanishes and the next, second order term will dominate the higher order terms. In this case $f_*$ is a limit of functions of the form $\psi_n=h_n+a_n(g_n-b_n)^2$. Substituting $g_n\in H_N^{(2)}$ and expanding products of sines, we see that $\psi_n\in H^{(2)}_{1+2(N+1)^2}$, so, again, the limits of $\psi_n$ are restricted by Theorem \ref{th:linsinlimit}.

\end{document}